\def\eqref#1{equation~\ref{#1}}
\def\1{\bm{1}}
\DeclareMathAlphabet{\mathsfit}{\encodingdefault}{\sfdefault}{m}{sl}
\SetMathAlphabet{\mathsfit}{bold}{\encodingdefault}{\sfdefault}{bx}{n}
\newcommand{\E}{\mathbb{E}}
\newcommand{\R}{\mathbb{R}}
\renewcommand\bibentry[1]{\nocite{#1}{\frenchspacing\@nameuse{BR@r@#1\@extra@b@citeb}}}
\newcommand{\A}{\mathbf{A}}
\newcommand{\B}{\mathbf{B}}
\newcommand{\I}{\mathbf{I}}
\newcommand{\M}{\mathbf{M}}
\newcommand{\Pp}{\mathbf{P}} 
\newcommand{\T}{\mathbf{T}}
\newcommand{\U}{\mathbf{U}}
\newcommand{\V}{\mathbf{V}}
\newcommand{\W}{\mathbf{W}}
\newcommand{\X}{\mathbf{X}}
\newcommand{\g}{\mathbf{g}}
\theoremstyle{plain}
\newtheorem{theorem}{Theorem}[section]
\newtheorem{proposition}[theorem]{Proposition}
\newtheorem{lemma}[theorem]{Lemma}
\theoremstyle{definition}
\theoremstyle{remark}
\newtcolorbox[auto counter,number within=section,crefname={box}{boxes}]{pabox}[2][]{%
title=Statement 1 
colback=blue!5!white,
colframe=blue!75!black,
colbacktitle=cyan!50!green,
coltitle=blue!25!black,
fonttitle=\bfseries,
subtitle style={boxrule=0.4pt,colback=cyan!50!red!25!white},title=Statement ~\thetcbcounter, label={#1}}
\newcolumntype{R}[2]{%
    >{\adjustbox{angle=#1,lap=\width-(#2)}\bgroup}%
    l%
    <{\egroup}%
}
\title{Subspace Networks: Scaling Decentralized Training with Communication-Efficient Model Parallelism}
\keywords{Protocol Models, Low-rank Compression, Pipeline Parallelism, Decentralized Training} 
\author{Sameera Ramasinghe, Thalaiyasingam Ajanthan, Gil Avraham, Yan Zuo, and Alexander Long\\
Pluralis Research
}
\begin{abstract}
    Scaling models has led to significant advancements in deep learning, but training these models in decentralized settings remains challenging due to communication bottlenecks. While existing compression techniques are effective in data-parallel, they do not extend to model parallelism. Unlike data-parallel training, where weight gradients are exchanged, model-parallel requires compressing activations and activation gradients as they propagate through layers, accumulating compression errors. We propose a novel  compression algorithm that  compresses both forward and backward passes, enabling up to $99\%$ compression with no convergence degradation with negligible memory/compute overhead. By leveraging a recursive structure in transformer networks, we predefine a low-dimensional subspace to confine the activations and gradients, allowing full reconstruction in subsequent layers. Our method achieves up to $100\times$ improvement in communication efficiency and enables training billion-parameter-scale models over low-end GPUs connected via consumer-grade internet speeds as low as $80$Mbps, matching the convergence of centralized datacenter systems with $100$Gbps connections with model parallel. 
\end{abstract}
\begin{document}
\maketitle

\section{Introduction}
\label{sec:intro}


Scaling models and datasets has been pivotal in driving deep learning advancements, with model sizes expanding from millions of parameters~\cite{krizhevsky2012imagenet} to billions~\cite{kolesnikov2020big, dubey2024llama} and even trillions~\cite{ren2023pangu}. These larger models exceed the memory capacity of a single device, requiring distributed training approaches to manage computation across multiple devices.


A common solution is distributed data parallelism (DDP)~\cite{li2020pytorch} or its more advanced variant, fully sharded data parallelism (FSDP)~\cite{zhao2023pytorch}, which distributes data across nodes while replicating the model on each device. This enables larger batch sizes and higher throughput, but constrains the model size to the memory of a single device. Model parallelism (MP) addresses this limitation by distributing parameters across devices~\cite{krizhevsky2012imagenet, huang2019gpipe}, enabling the training of models that surpass single-node memory constraints. MP includes tensor parallelism, which splits individual layers, and pipeline parallelism, which distributes layers across devices; the latter is the focus of this work. Modern large-scale training combines DDP and MP to achieve scalability. Despite these strategies, all approaches require the transfer of large amounts of data between devices~\cite{narayanan2021efficient}, limiting training to high-performance computing clusters with fast interconnects. These infrastructures are costly and accessible only to resource-rich organizations~\cite{vergara2019scaling, powerai}, creating disparities that restrict broader research and risk centralizing innovation.

Decentralized training provides an alternative by leveraging consumer-grade devices, enabling individuals with small devices to participate in large-scale model training, reducing reliance on major corporations. This approach democratizes access to large-scale model training by leveraging underutilized GPUs in personal computers and volunteer networks~\cite{yuan2022decentralized, ryabinin2023swarm}. However, decentralized training faces significant challenges due to limited bandwidth and high latency in heterogeneous networks~\cite{yuan2022decentralized}, necessitating communication-efficient compression algorithms to minimize data transfer while preserving training performance.


Most existing communication-efficient techniques focus on DDP~\cite{lian2017can, ryabinin2021moshpit, douillard2023diloco, peng2024decoupled}, where \emph{weight gradients} are computed independently on each node (for each model replica) and then compressed before peer-to-peer communication. To this end, techniques such as sparsification~\cite{wangni2018gradient, wang2017efficient, lin2017deep}, quantization~\cite{wang2023cocktailsgd, alistarh2017qsgd, bernstein2018signsgd}, and low-rank approximations~\cite{zhao2024galore} exploit the redundancy in weight gradients to reduce communication. However, in MP, information must be passed between \emph{layers}, requiring the communication of \emph{activations} and \emph{activation gradients}. Unlike weight gradients, activations lack inherent redundancy and approximation errors accumulate across layers, leading to degraded convergence~\cite{bian2024does, rudakov2023activations}. These challenges prevent the straightforward application of DDP compression techniques to MP, and hence to date, MP decentralized training remains infeasible, resulting in massive slowdowns over centralized training.

To bridge this gap, we propose a novel compression algorithm tailored for MP. We show that as training progresses, weight matrices exhibit rank collapse, converging to low-rank subspaces. Thus, by \emph{explicitly} constraining specific weight matrices to such low-rank subspaces and leveraging a recursive structure inherent in transformer networks, we demonstrate that transformer layer activations—despite their high rank—can be decomposed into a dynamic low-rank component and a static high-rank component. This decomposition enables efficient compression of information passed between layers during both the forward and backward passes, ensuring \emph{lossless} reconstruction in subsequent layers.



We validate the practical effectiveness of our approach through extensive evaluations on billion-parameter-scale models. Our compression method enables the distribution of large-scale models across consumer-grade GPUs with internet-grade connections (80Mbps) while matching the convergence performance of centralized setups with 100 Gbps interconnects. We achieve up to $100 \times$ improvement in communication efficiency without any degradation in convergence. Further, we successfully train an 8B-parameter LLaMA model \cite{dubey2024llama} with layers split across four different geographical regions, connected via the internet, and achieve convergence on par with baselines utilizing datacenter-grade connections. By addressing critical limitations in decentralized training, our method intend to remove significant barriers to scaling large models in resource-constrained environments, democratizing access to large-scale deep learning.

\section{Related works}

\paragraph{Decentralized training} involves a group of \emph{autonomous} devices (\textit{i.e.,} no central orchestrator) collaborating to train a large-scale model by leveraging MP/DDP methods. These devices, often geographically distributed and heterogeneous, are connected via networks with varying delays and bandwidth constraints. Key advancements in this area encompass both theoretical insights~\cite{lian2017can, koloskova2019decentralized, koloskova2020unified} as well as practical approaches~\cite{ryabinin2020towards,diskin2021distributed}. Despite the progress, current efforts predominantly focus on DDP \cite{lian2017can, koloskova2019decentralized, koloskova2020unified, diskin2021distributed}, which hinders scaling models beyond the memory capacity of local nodes. SWARM parallelism~\cite{ryabinin2023swarm} and Tasklets~\cite{yuan2022decentralized}, treat the problem as a scheduling challenge, with the former addressing the stochasticity inherent in decentralized cluster settings. However, all methods to date still face significant scalability challenges due to communication bottlenecks inherent in MP. Our method overcomes this limitation by introducing an effective communication compression technique for MP (specifically pipeline parallel), mitigating a major barrier in scaling decentralized training.

\paragraph{Communication compression} accelerates distributed training over bandwidth-limited networks by reducing data transfers. Key strategies include \textbf{sparsification}, which transmits only significant parameter updates~\cite{wangni2018gradient, wang2017efficient, lin2017deep}; \textbf{quantization}, which lowers communication by reducing parameter precision~\cite{dettmers20218, wang2023cocktailsgd, alistarh2017qsgd, bernstein2018signsgd, karimireddy2019error, tang20211, wu2018error}; and \textbf{low-rank projection}, which compresses gradients via projection onto lower-dimensional subspaces~\cite{zhao2024galore, vogels2019powersgd}. While successful in data-parallel settings (DDP), these techniques face difficulties in model-parallel (MP) setups, including error accumulation across layers and unstructured activations~\cite{bian2024does, rudakov2023activations}, causing degraded convergence. Recent work by~\cite{wang2021pufferfish} proposed low-rank MP communication, but required significant architectural changes, preventing training from scratch. In contrast, our approach involves minimal initialization changes without architectural modifications, enabling efficient MP training from scratch with improved scalability.

\section{Transformer block}

We provide a brief exposition of the transformer block and proceed to describe the proposed compression method. Let the input to the $l^{\text{th}}$ layer be $\X^{l} \in \mathbb{R}^{b \times n \times d}$, where $b$, $n$, and $d$ are the batch size, sequence length, and embedding dimension, respectively. Given the weight matrices of each attention head $h$ as $\W^l_{Q,h}, \W^l_{K,h}, \W^l_{V,h} \in \mathbb{R}^{d \times d_H}$, with $d_H = d/H$ where $H$ is the number of attention heads, the following computations are performed for each attention head: $\X^l_{Q,h} = \X^l \W_{Q,h}^l$, $\X^l_{K,h} = \X^l \W_{K,h}^l$, ${\X^l_{V,h} = \X^l \W_{V,h}^l}$. The rest of the computations are as follows:
\begin{equation}
\X^l_{\text{head, h}} = f_{\text{softmax}}\left(\frac{\X^l_{Q,h}\X_{K,h}^\top}{\sqrt{d_H}}\right)\X^l_{V,h}
\label{eq:trans_head}
\end{equation}
%
%
\begin{equation}\label{eq:transformer_layer}
\begin{aligned}
\X^l_{\text{concat}} &= [\X^l_{\text{head, 1}}, \dots, \X^l_{\text{head, H}}] \\
\X^l_{\text{attn}} &= \X^l_{\text{concat}}\W^l_{p_1} + \X^{l} \\
\X^l_{\text{hidden}} &= f_{\text{relu}}(\X^l_{\text{attn}}\W_1^l) \\
\X^{l+1} &= \X^l_{\text{hidden}}\W_{p_2}^l + \X^l_{\text{attn}}
\end{aligned}
\end{equation}
%
where $\W^l_{p_1} \in \mathbb{R}^{d \times d}, \W_1^l \in \mathbb{R}^{d \times d_{\text{ff}}}$, and $\W_{p_2}^l \in \mathbb{R}^{d_{\text{ff}} \times d}$.  We omit the layer norms for brevity, which does not affect any of our derivations. $d_{\text{ff}}$ is usually an integer multiple of $d$. We will refer to $\W_{p_2}^l$ and $\W_{p_1}^l$ as \emph{projection matrices} from here onward.

\begin{figure}
  \centering
  \includegraphics[width=0.5\textwidth]{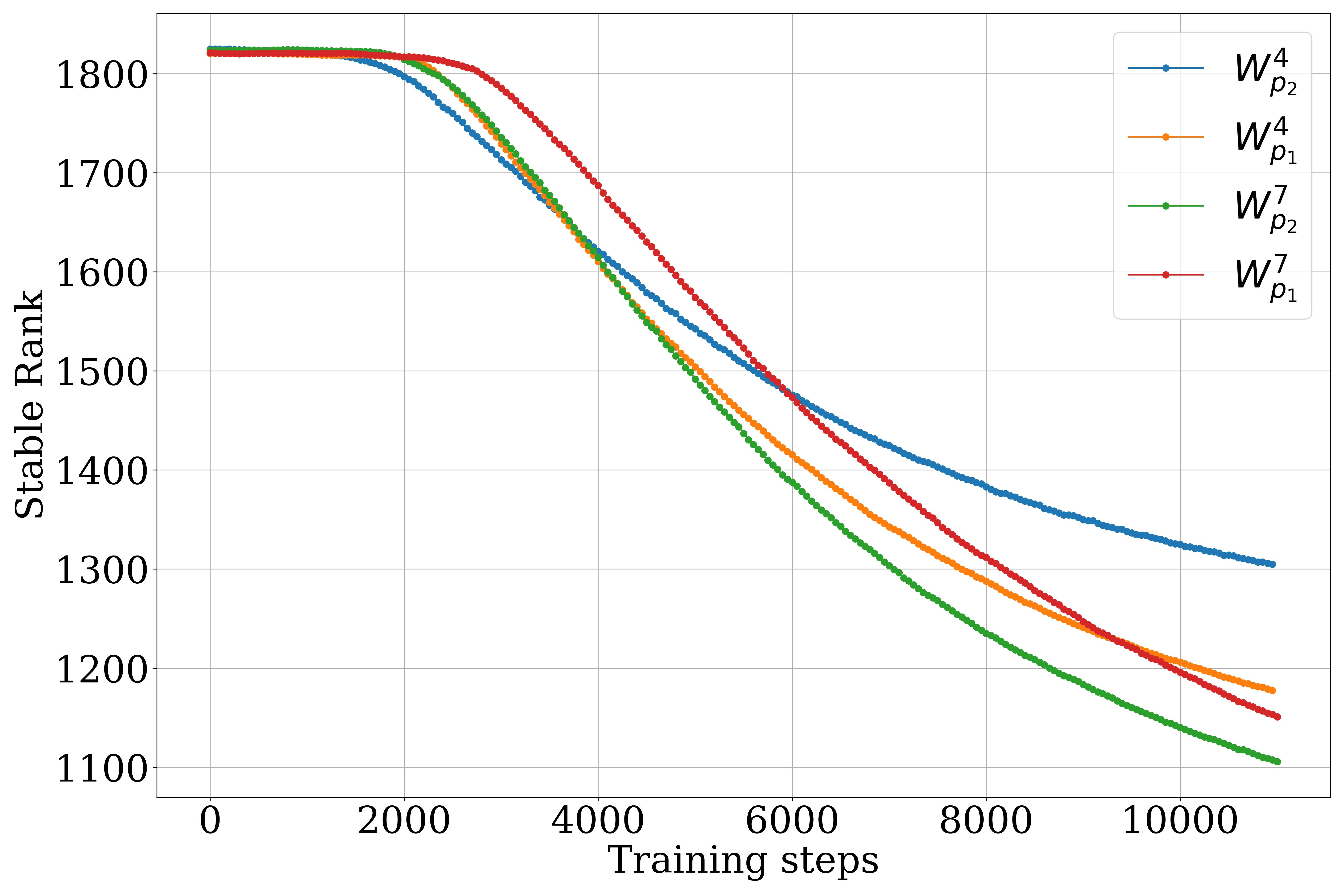}
  \caption{\textbf{Rank collapse in projection matrices.} Consistent with Statements~\ref{st:2} and~\ref{st:3} (and Theorems~\ref{th:weight_update},~\ref{th:weight_convergence} - Appendix), we empirically observe a natural rank collapse in the projection matrices (of non-compressed models). Shown is an 8-layer, 2B-parameter model, with the stable (effective) ranks of the projection matrices for the $4^{\text{th}}$ (middle) and $7^{\text{th}}$ (penultimate) layers plotted over training steps.}
  \label{fig:rank_collapse}
\end{figure}

\section{Subspace networks}

\subsection{Rank collapse of projection matrices}




We leverage the observation that weight gradients of projection matrices inherently reside in a low-dimensional subspace, as widely reported across various architectures~\cite{li2024learning, vogels2019powersgd, wang2018atomo, zhang2024convergence, zhao2021zero, cosson2023low, yang2023spectral, gur2018gradient} (see Appendix~\ref{sec:app_assumptions} for empirical validation). Together with AdamW's decoupled weight decay—which suppresses negligible gradient components—this drives projection matrices to converge toward a subspace spanned by dominant gradients, resulting in effectively low-rank structures. Formal treatment is provided in Statements~\ref{st:2},~\ref{st:3}, and their associated theorems.



To validate this phenomenon, we train an 8-layer, 2B-parameter model on WikiText~\cite{wikitext} and track the stable rank of its projection layers during training (hidden dimension of 4096 and a context length of 2048. Stable rank is computed as $\sum_i \sigma_i^2 / \max_i(\sigma_i^2)$, where $\sigma_i$ denotes the $i^{\text{th}}$ singular value. As shown in Fig.~\ref{fig:rank_collapse}, the stable rank of projection matrices sharply declines, consistent with our theory. This structure, combined with the recursive nature of transformers, allows us to design a nearly-lossless low-rank compression scheme. While prior works observe similar rank collapse in self-attention matrices~\cite{dong2021attention, abbe2024transformers, sanyal2404inheritune}, we focus specifically on projection matrices as the basis for compression. We also validate this using official checkpoints of fully-trained large scale models (Appendix \ref{app:checkpoints}).

\subsection{Investigating the activation structure}
\label{ssec_trans}

We utilize the natural rank collapse we discussed thus far for compression. Observing the transformer block in Eq. \ref{eq:trans_head} and \ref{eq:transformer_layer}, we identify that a recursive structure emerges for layer outputs due to the skip connections:
\begin{equation}
    \X^{l+1} = \X^l_{\text{hidden}}\W_{p_2}^l  + \X^l_{\text{concat}}\W^l_{p_1} + \X^{l}.
\end{equation}
which can be expanded as:
\begin{align}
\X^{l+1} = \sum_{i=1}^{l} (\X^i_{\text{hidden}}\W_{p_2}^i  + \X^i_{\text{concat}}\W^i_{p_1}) + \X^{0}
\label{eq:recursive}
\end{align}
Here, 
\begin{equation}
    \X^0 = \text{PE} + \text{TE},
\end{equation}
where $\text{PE}, \text{TE} \in \mathbb{R}^{b \times n \times d}$ represent the positional and token embeddings, respectively. Let $[p_1, p_2, \dots, p_n]$ be the sequence of positional indices, and $[t_1, t_2, \dots, t_n]$ be the corresponding token indices. We denote embedding matrices $\mathbf{P} \in \mathbb{R}^{n \times d}$ for positional embeddings and $\mathbf{T} \in \mathbb{R}^{v \times d}$ for token embeddings, where $v$ is the vocabulary length. Thus the tokens and positional indices are embedded via a lookup: $\text{PE}= \Pp[p_{1:n},:]$ and $\text{TE}=\T[t_{1:n},:]$.
%
%
Observing Eq.~\ref{eq:recursive}, we note that $\X^0 = \text{PE} + \text{TE}$ contributes as a common additive term to all layer outputs. Therefore, we consider the rank of the residual activations when $\text{PE}$ and $\text{TE}$ are subtracted:
\begin{align} \label{eq:recursive2}
\hat{\X}^{l+1}  =  \X^{l+1} - \text{PE} - \text{TE}    = \sum_{i=1}^{l} (\X^i_{\text{hidden}}\W_{p_2}^i  + \X^i_{\text{concat}}\W^i_{p_1})
\end{align}
Recall that $\text{Row}(\A \B) \subseteq \text{Row}(\B)$, for any two matrices $\A, \B$ where $\text{Row}(\cdot)$ denotes the row space. Thus, it is clear that if the rows of the projection weights \((\W_{p_2}, \W_{p_1})\) up to layer $l$ span a common low-dimensional subspace, then the rows of \(\hat{\X}^{l+1}\) is also restricted to the same subspace, since vector spaces are closed under addition.

\subsection{Compressing the forward pass}


Recall that our analysis so far indicates that the  projection matrices naturally confine themselves to a smaller subspace as training progresses. Consequently, the residual activations $\hat{\X}^{l+1}$ are also restricted to a smaller subspace, if the union of those subspaces is low-dimensional.  Based on this insight, it is intriguing to explore the feasibility of \emph{explicitly forcing} the rows of projection matrices to vary within a common low-dimensional subspace $\mathcal{S}$, throughout training, to facilitate activation compression. As discussed in Section \ref{ssec_trans}, this forces the rows of activation outputs ($\hat{\X}^{l+1}$) to span the same subspace $\mathcal{S}$. Surprisingly, we find that even with extreme low-dimensional $\mathcal{S}$, the networks can achieve almost the same convergence rates as in the unaltered ones. This allows us to significantly reduce the communication between blocks, which we show next.

Let $\text{Row}(\W^l_{p_2}), \text{Row}(\W^l_{p_1}) \subseteq \mathcal{S}$. Further, Let ${\U_k \in \mathbb{R}^{d \times k}}$ be a matrix with orthonormal columns and $\text{Col}(\U_k) = \mathcal{S}$. Then, the following holds:
\begin{align}
    \hat{\X}^{l+1} = \hat{\X}^{l+1}\U_k \U_k^\top
\end{align}
%
In other words, $\hat{X}^{l+1}$ remains unaltered by the projection, since it is already in $\mathcal{S}$. The above formulation introduces a property that can be leveraged for compression during the forward pass. Specifically, the dimensionality of \(\hat{\X}^{l+1} \U_k \in \mathbb{R}^{b \times n \times k}\), is substantially smaller than that of \(\hat{\X}^{l+1} \in \mathbb{R}^{b \times n \times d}\) since \(k \ll d\). If each node in a distributed system is initialized with a shared copy of \(\U_k\), \emph{this matrix does not need to be transmitted repeatedly}. Instead, \(\hat{\X}^{l+1} \U_k\) can  be transmitted to the next node, and the original \(\X^{l+1}\) is reconstructed as:
\[
\X^{l+1} = \hat{\X}^{l+1} \U_k^\top + \text{PE} + \text{TE}.
\]
This approach ensures exact recovery of \(\X^{l+1} \) without approximation.

\subsubsection{Decomposition of high-rank components}

For practical utilization of above approach, we still need to subtract $\text{PE}$ and $\text{TE}$ from $\X^{l+1}$ to compute $\hat{\X}^{l+1}$ (and add them back in the next layer). While $\text{PE}$ is deterministic and can be computed locally within each node, $\text{TE}$ varies depending on the batch, making it impossible to do so. 

One potential solution is restricting the embedding table $\T$ also to $\mathcal{S}$. However, we observed that this degrades network performance due to severely limiting the representation capacity of the token embeddings. Instead, we propose modeling $\text{TE}$ as a composition of a fixed high-rank component and a trainable low-rank component:
\[
\text{TE} = \T_{\text{fixed}}[t_{1:n}, :] + \T_{\mathcal{S}}[t_{1:n}, :],
\]
where we obtain a trainable low rank embedding table ${\T_{\mathcal{S}}  = \T_{\text{fixed}}\U_k\U_k^T}$. At the beginning of training, $\T_{\text{fixed}}$ is transmitted to all nodes and stored. During the forward pass, we compress the activations as:
\begin{align}
    \X^{l+1}_{\text{compressed}} = & \big(\hat{\X}^{l+1} + \T_{\mathcal{S}}[t_{1:n}, :]\big)\U_k  \nonumber \\= 
& (\X^{l+1} - \text{PE} - \T_{\text{fixed}}[t_{1:n}, :]) \U_k,
\label{eq:subtract}
\end{align}
 Note that in Eq.~\ref{eq:subtract} both $\text{PE}$ and $ \T_{\text{fixed}}[t_{1:n}, :]$---which are generally high-rank---are subtracted from $X^{l+1}$ so the remaining $(\X^{l+1} - \text{PE} -  \T_{\text{fixed}}[t_{1:n}, :]) $ is already in $\mathcal{S}$ and is low-rank. Further, since $\text{Row}( \T_{\mathcal{S}}[t_{1:n}, :])  \subseteq \text{Col}(\U_k)$, it is implicitly captured in the compressed activations $\X^{l+1}_{\text{compressed}}$. Reconstruction at the next node is then performed as:
\begin{align}
    \X^{l+1}_{\text{recovered}} =  \X^{l+1}_{\text{compressed}} \U_k^\top + \text{PE} +  \T_{\text{fixed}}[t_{1:n}, :]   \nonumber  =  \X^{l+1}
\end{align}
ensuring a \textbf{lossless} recovery of $\X^{l+1}$ while not compromising its high ranked-ness. 


A natural question arises: would explicitly restricting the projection matrices to a fixed $\mathcal{S}$, instead of allowing this property to emerge organically, adversely affect convergence? Note that this is a form of constraint optimization and there are well known convergence guarantees. However, in  Sec.~\ref{sec:theory}, we provide a convergence proof (to at least a first-order stationary point) for completeness for the above partial projection case. Furthermore, we conduct extensive experiments across a variety of settings to empirically validate the convergence. Further, since the fixed embeddings are ephemeral, they have a negligible effect on the peak GPU memory (Appendix \ref{app:memory_overhead})

\subsection{Compression in backpropagation}

In the previous section, we showed that constraining the rows of projection matrices to a shared low-dimensional subspace \(\mathcal{S}\), coupled with decomposing the embedding table into low-rank and high-rank components, facilitates compression of activations in the forward pass. This same constraint naturally facilitates lossless gradient compression in the backward pass. Specifically, let \(\nabla_L (\X^{l+1})\) denote the gradient of \(\X^{l+1}\), with respect to the loss, that needs to be propagated to the previous layer. This gradient can be compressed as:
\begin{align} \label{eq:grad_compress}
    \big(\nabla_L (\X^{l+1})\big)_{\text{compressed}} = \nabla_L (\X^{l+1}) \U_k \in \mathbb{R}^{b \times n \times k},
\end{align}
and subsequently fully recovered in the previous layer $l$ as:
\begin{align}
    \big(\nabla_L (\X^{l+1})\big)_{\text{recovered}} = \big(\nabla_L (\X^{l+1})\big)_{\text{compressed}} \U_k^\top 
    =  \nabla_L (\X^{l+1}).
\end{align}
Remarkably, this formulation ensures that the gradient flow to the computational graph prior to $\X^{l+1}$ remains lossless, with no approximation error. Intuitively, after backpropagation through the parameter matrix $\W_{p_2}$, the gradient has the form $\nabla_L (\X^{l+1})\W_{p_2}^{\top}$. Because ${\text{Row(}\W_{p_2}) \subseteq \text{Col}(\U_k) = \mathcal{S}}$, the projection $\nabla_L (\X^{l+1})\U_k\U_k^\top\W_{p_2}^{\top}$ does not alter the resulting gradient flow. Full derivation is provided in Appendix \ref{sec:abl_gradients}.



\subsection{Subspace updates using Grassmann manifold}

We observe that restricting the column spaces of projection layers to a fixed subspace, even at high-ratios, is able to maintain surprisingly adequate convergence. To further improve the convergence, we allow the subspaces to slowly drift. To align the subspace with the gradient directions, we minimize the norm of the gradient components that lie outside the subspace, as measured at the last Transformer layer. Let \( \nabla_L (\X^{\text{final}}_t)  \in \mathbb{R}^{b \times n \times d} \) denote the activation gradients at the last compressed transformer layer. The leftover gradient, which lies outside the subspace, is $ \hat{\X}^{\text{final}}_t = \nabla_L (\X^{\text{final}}_t)(\I - \mathbf{U}_k \mathbf{U}_k^\top)$, where \( \I - \mathbf{U}_k \mathbf{U}_k^\top \) is the projection operator onto $\mathcal{S}^{\perp}$. We accumulate $ \hat{\X}^{\text{final}}_t$ over $K$ iterations to obtain the metric $\mathcal{L}_{\text{Grassmann}} = \frac{1}{K}\sum_{t = k}^{k+K}  \|  \hat{\X}^{\text{final}}_t\|_F^2$, where \( \|\cdot\|_F \) is the Frobenius norm. We aim to minimize $\mathcal{L}_{\text{Grassmann}}$ over all possible $\U_k$.


A straightforward way to minimize \( \mathcal{L}_{\text{Grassmann}} \) is by performing SVD on it and updating the subspace using the left singular vectors. However, abrupt changes to the subspace can disrupt convergence. Thus, we perform smooth updates by taking steps on the Grassmann manifold. The Grassmann manifold \( \mathcal{G}(k, n) \) is the set of all \( k \)-dimensional subspaces of \( \mathbb{R}^n \). A point on \( \mathcal{G}(k, n) \) is represented by an orthonormal matrix \( \mathbf{U}_k \in \mathbb{R}^{n \times k} \), where the columns of \( \mathbf{U}_k \) form a basis for the subspace. Thus, defining $\mathcal{S}$ as a point on the Grassmann manifold enables taking smooth steps on the manifold. To minimize \( \mathcal{L}_{\text{Grassmann}}\), we employ gradient descent on \( \mathcal{G}(k, n) \). First, the Euclidean gradient of \(\mathcal{L}_{\text{Grassmann}} \) with respect to \( \mathbf{U}_k \), denoted \(\nabla_{\mathcal{L}_{\text{Grassmann}}}( \mathbf{U}_k )\), is projected onto the tangent space to obtain the Riemannian gradient:
\begin{equation}
    (\nabla_\mathcal{L}\mathbf{U}_k)_{\text{tangent}} = \nabla_{\mathcal{L}_{\text{Grassmann}}}( \mathbf{U}_k ) - \mathbf{U}_k \mathbf{U}_k^\top \nabla_{\mathcal{L}_{\text{Grassmann}}}( \mathbf{U}_k).
\end{equation}
 Then, we perform a gradient descent step ${\mathbf{U}^{\text{new}}_k = \mathbf{U}_k - \eta \,  (\nabla_\mathcal{L}\mathbf{U}_k)_{\text{tangent}}}$ where \( \eta \) is the step size.  Then, to map \( \mathbf{U}^{\text{new}}_k \) back to the manifold, we apply a retraction by orthonormalizing the columns of \( \mathbf{U}^{\text{new}}_k \) using QR decomposition $ \mathbf{U}^{\text{new}}_k, \mathbf{R} = \text{QR}(\mathbf{U}^{\text{new}}_k)$, where \( \text{QR}(\cdot) \) denotes the QR decomposition. In practice, we perform this subspace update on $\U_k$ very infrequently (per every $500$ iterations), and transmit to all the layers.

\section{Modifying AdamW for subspace networks}
\label{sec:adam}

Our approach can be framed as a constrained optimization problem in which the projection matrices must remain within the subspace $\mathcal{S}$. To ensure they stay in this feasible set throughout training, it requires projecting them onto $\mathcal{S}$ at each training iteration. We propose a modified version of the AdamW optimizer that preserves the row space of $\W^l_{p_2}$ in $\mathcal{S}$. That means, once initialized within $\mathcal{S}$, the modified optimizer ensures that the $\W^l_{p_2}$ no longer require iterative projection steps, guaranteeing they remain in $\mathcal{S}$ without incurring any approximation error. Note that however, we still need to project $\W^l_{p_1}$ onto $\mathcal{S}$ due to the nonlinearity of activations as discussed in Appendix.~\ref{sec:abl_gradient}.




 AdamW defines the momentum term as $\M_t  = \beta_1 \M_{t-1} + (1-\beta_1)\nabla_L(\W_{p_2} (t) )$ and the second order momentum $\V_t  = \beta_2 \V_{t-1} + (1-\beta_2)(\nabla_L(\W_{p_2}(t) ))^2$. Then, the weight update is defined as 

\begin{equation}
    \W_{p_2}(t+1) = \W_{p_2}(t) - \eta \alpha_t \hat{\M}_t - \lambda \W_{p_2}(t),
    \label{eq:weight_update}
\end{equation}


where $\alpha_t = \frac{1}{\sqrt{\hat{\V}_t} + \epsilon}$, $\hat{\V_t} = \frac{\V_t}{\beta_2 + \epsilon}$, $\hat{\M_t} = \frac{\M_t}{\beta_1 + \epsilon}$, and $\eta$ is the fixed learning rate. $\lambda$ is a hyperparameter controlling the adaptive weight decay. Note that in the second term on the right hand side in Eq.~\ref{eq:weight_update}, $\alpha_t$ is not a constant scaling factor across coordinates.  Thus, it changes the direction of the rows of $\hat{\M}_t$. This causes the $\W_{p_2}(t+1)$ to drift from  $\mathcal{S}$. Therefore, we make $\alpha_t$ constant row-wise so that the resulting update $ \eta \alpha_t \hat{\M}_t - \lambda \W_p(t)$ is closed within $\mathcal{S}$. Specifically, we alter $\hat{\V}_t$ as below. We first take

\begin{equation}
    \mu_{\text{row}}(\hat{\V}_t) = \frac{1}{m} \sum_{i=1}^m \hat{\V}_t({:, i}) \in \mathbb{R}^{n \times 1}.
\end{equation}

where $\mu_{\text{row}}(\cdot)$ is the row-wise mean. Then, 
\begin{equation}
    \tilde{\V}_t = \mathbf{1}_m \cdot \mu_{\text{row}}(\hat{\V}_t),
\end{equation}

where \(\mathbf{1}_m \in \mathbb{R}^{1 \times m}\) is a column vector of ones, and \(\tilde{\V} \in \mathbb{R}^{n \times m}\) has the same dimensions as \(\hat{\V}_t\). Then we substitute $\hat{\V}_t$ with \(\tilde{\V}\) in Eq.~\ref{eq:weight_update}. We only apply the above modification to $\W_{p_2}$ and keep the usual updates as it is for other weights.

\section{Computational Overhead of subspace updates}

Our compression introduces two additional computational components: \emph{weight projection} and \emph{Grassmann updates}. Here, we provide an analysis of the overhead associated with these operations, demonstrating their minimal impact on memory usage and computational efficiency.

\paragraph{Overhead of Weight Projection.}
We first assess the computational cost incurred by the weight projection step. For a practical evaluation, we consider a model with approximately 2 billion parameters (8 layers, 4k model dimension, 16 attention heads), pipelined across 8 A10G GPUs. In our experiments, a single forward pass takes approximately $4.61$s, while the weight projection step adds only about $0.05$s of computation time. Thus, the relative overhead introduced by weight projection is approximately $1\%$, indicating a negligible computational burden.

\paragraph{Overhead of Grassmann Updates.}
Updating the subspace via Grassmann manifold optimization initially appears complex; however, by deriving closed-form Euclidean gradients, the practical implementation becomes computationally inexpensive. Specifically, given a Grassmann loss defined as:
\[
\mathcal{L}_{\text{Grassmann}} = \frac{1}{K}\sum_{t=k}^{k+K}\left\|\nabla_L\left(\mathbf{X}_t^{\text{final}}\right)\left(\mathbf{I} - \mathbf{U}_k\mathbf{U}_k^\top\right)\right\|_F^2,
\]
and using the Frobenius norm expansion, we obtain:
\[
\mathcal{L}_{\text{Grassmann}} = \text{const} - \frac{1}{K}\sum_t \text{Tr}\left(\mathbf{G}_t\mathbf{U}_k\mathbf{U}_k^\top\mathbf{G}_t^\top\right),
\]
where $\mathbf{G}_t=\nabla_L(\mathbf{X}_t^{\text{final}})$. By defining the symmetric accumulation matrix $\mathbf{S}=\frac{1}{K}\sum_t\mathbf{G}_t^\top\mathbf{G}_t$, the closed-form Euclidean gradient simplifies elegantly to:
\[
\nabla_{\mathcal{L}_{\text{Grassmann}}}(\mathbf{U}_k) = -2\mathbf{S}\mathbf{U}_k.
\]

Practically, implementing this gradient involves straightforward matrix operations in PyTorch:
\begin{itemize}
    \item \textbf{Accumulation:} Compute and accumulate $\mathbf{G}_t^\top\mathbf{G}_t$ once per iteration via standard matrix multiplication.
    \item \textbf{Gradient Computation:} Multiply $\mathbf{S}$ by $\mathbf{U}_k$ once per a fixed interval (e.g., every 500 iterations).
    \item \textbf{Riemannian Projection:} Project back onto the Grassmann manifold via a basic linear algebra operation, again executed at low frequency.
\end{itemize}

These matrix operations are inexpensive and infrequent, resulting in minimal computational overhead. Consequently, our method's overall cost of maintaining and updating the subspace is negligible compared to the cost of the forward and backward passes of transformer models, affirming the efficiency of our approach.

\section{Theoretical insights}
\label{sec:theory}

This section provides several theoretical insights for the proposed method. We structure our analysis into four key statements (and put the corresponding formal theorems in the Appendix). The first statement indicates that if there is a lossy compression between the layers, as the model depth increases, the approximation error of compression can grow exponentially. The second and third statements demonstrate that, even in an uncompressed network, if the weight gradients are confined to a particular subspace, the weight matrices also naturally converge to a low-dimensional subspace with AdamW. The fourth observation establishes that explicitly enforcing a subset of weights onto a low-dimensional subspace does not harm convergence.


\begin{pabox}[st:1]{}

If the compression of activations and activation gradients between layers in a model-parallel setting introduces approximation errors, these errors can accumulate exponentially with increasing depth, provided the weight and activation norms are sufficiently large.  Refer to Theorem~\ref{th:error_accumulation} for a formal proof.
\end{pabox}


The above result suggests that extending compression techniques from DDP (which are lossy) to MP (which requires compressing information passed between adjacent layers) leads to the accumulation of approximation errors. This occurs because the compression at one layer directly impacts downstream layers, a phenomenon not present in DDP. Additionally, the lack of exploitable structure in activations and activation gradients \cite{bian2024does, rudakov2023activations} typically results in larger approximation errors compared to the gradients of weights. This limitation makes such compression methods unsuitable for MP in large-scale models.

\begin{pabox}[st:2]{}

If the gradients of a particular weight matrix in a network are constrained to a fixed subspace, then under AdamW, the weight \emph{updates} asymptotically converge to that subspace over a sufficiently large number of training steps. For a formal proof, see Theorem~\ref{th:weight_update}.
\end{pabox}



This provides a critical insight: if the gradients of an unconstrained network predominantly lie within a specific low-dimensional subspace \(\mathcal{S}\)—a property we empirically validate (see Appendix~\ref{sec:abl_gradient})—then the AdamW optimizer asymptotically restricts updates outside of \(\mathcal{S}\). While this behavior is straightforward for vanilla stochastic gradient descent (SGD), it is non-trivial for AdamW due to its adaptive learning rate mechanism.


\begin{pabox}[st:3]{}

If Statement \ref{st:2} holds, then the corresponding weight matrices asymptotically converge to the same subspace, irrespective of their initialization. For a formal proof, see Theorem~\ref{th:weight_convergence}.
\end{pabox}



Intuitively, this result indicates that the decoupled weight decay mechanism in AdamW systematically suppresses components of the weight matrix that receive negligible gradient updates. Consequently, the learned weights converge to a low-dimensional subspace defined by the gradient updates.



\begin{pabox}[st:4]{}

A network in which a subset of weights is constrained to a low-dimensional subspace converges to a first-order stationary point with a convergence rate of \(O(1/T)\). For a formal proof, see Proposition~\ref{thm:convergence}.
\end{pabox}

This result is a straightforward extension of the the convergence rate guarantees for constrained optimization using proximal gradient descent on non-convex functions. It shows that even when a \emph{subset} of parameters is restricted to a lower-dimensional subspace, the standard convergence rate of \(O(1/T)\) in terms of stationarity remains intact. 


\section{Experiments}
\label{sec:experiments}
\subsection{Experimental Setup}

We evaluate decoder-only models (based on Llama 3~\cite{dubey2024llama}) across four large-scale datasets: WikiText (WT)~\cite{wikitext}, BookCorpus (BC)~\cite{bookcorpus}, OpenWebText (OWT)~\cite{owt}, and C4~\cite{2019t5}. For WT, we use the standard splits; for BC and OWT, we randomly select $10\%$ of training data as validation; for C4, due to computational constraints, we report training loss only. The base model has a context length of $1024$, embedding dimension $4096$, $24$ heads, and $8$ layers ($\sim$2B parameters); larger models (up to 8B parameters) are noted explicitly in ablation sections. We use a base learning rate $\eta=3e\text{-}4$ (with warmup and linear decay), weight decay $0.01$, and batch size $32$, unless otherwise specified. We use GPipe~\cite{huang2019gpipe} via \texttt{torch.distributed.pipelining}, integrating our compression into all but the final transformer layer.

\begin{figure*}[ht]
    \centering
    \begin{subfigure}[b]{0.3\textwidth}
        \includegraphics[width=\textwidth]{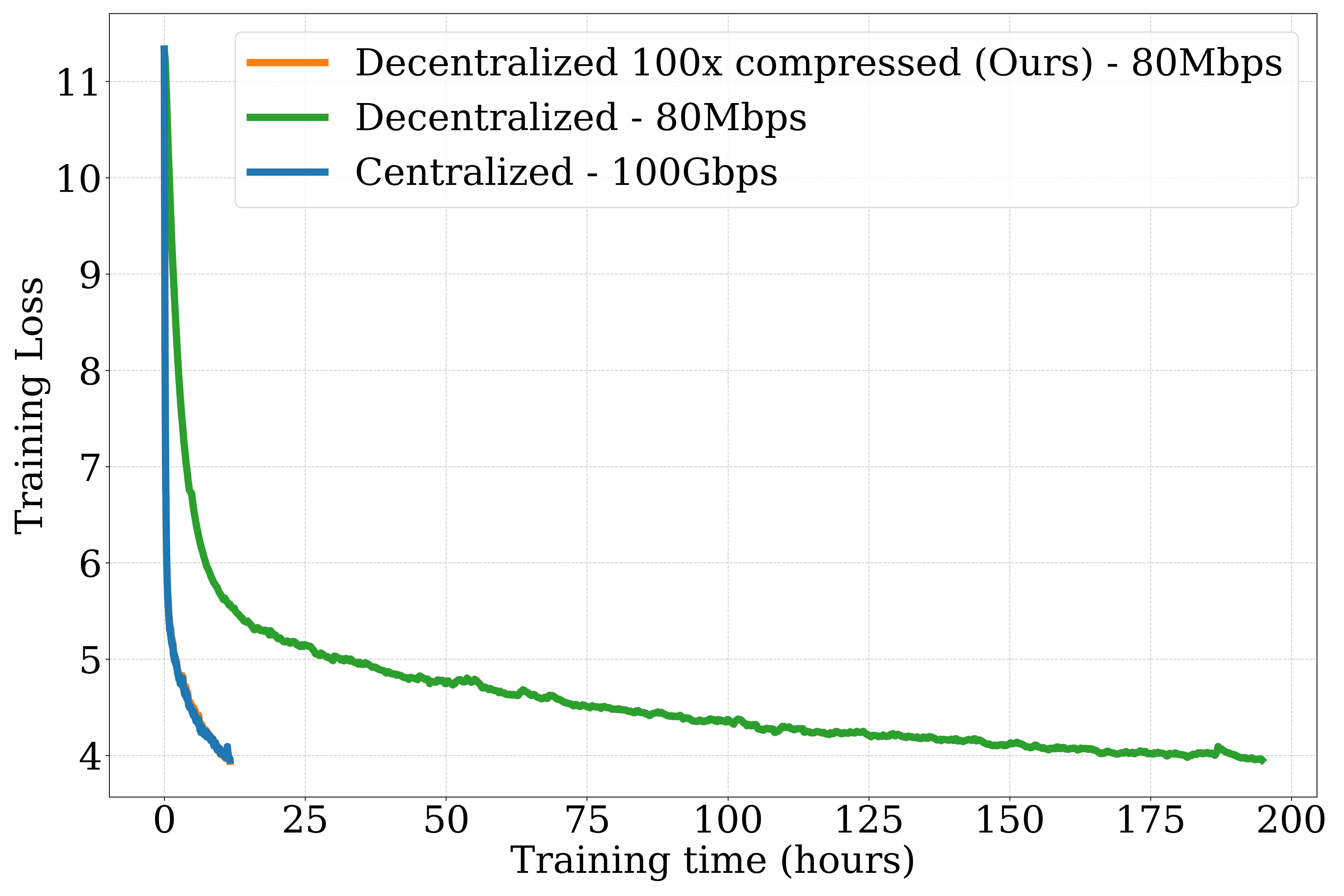}
 
        \label{fig:sub1}
    \end{subfigure}
    \hfill
    \begin{subfigure}[b]{0.3\textwidth}
        \includegraphics[width=\textwidth]{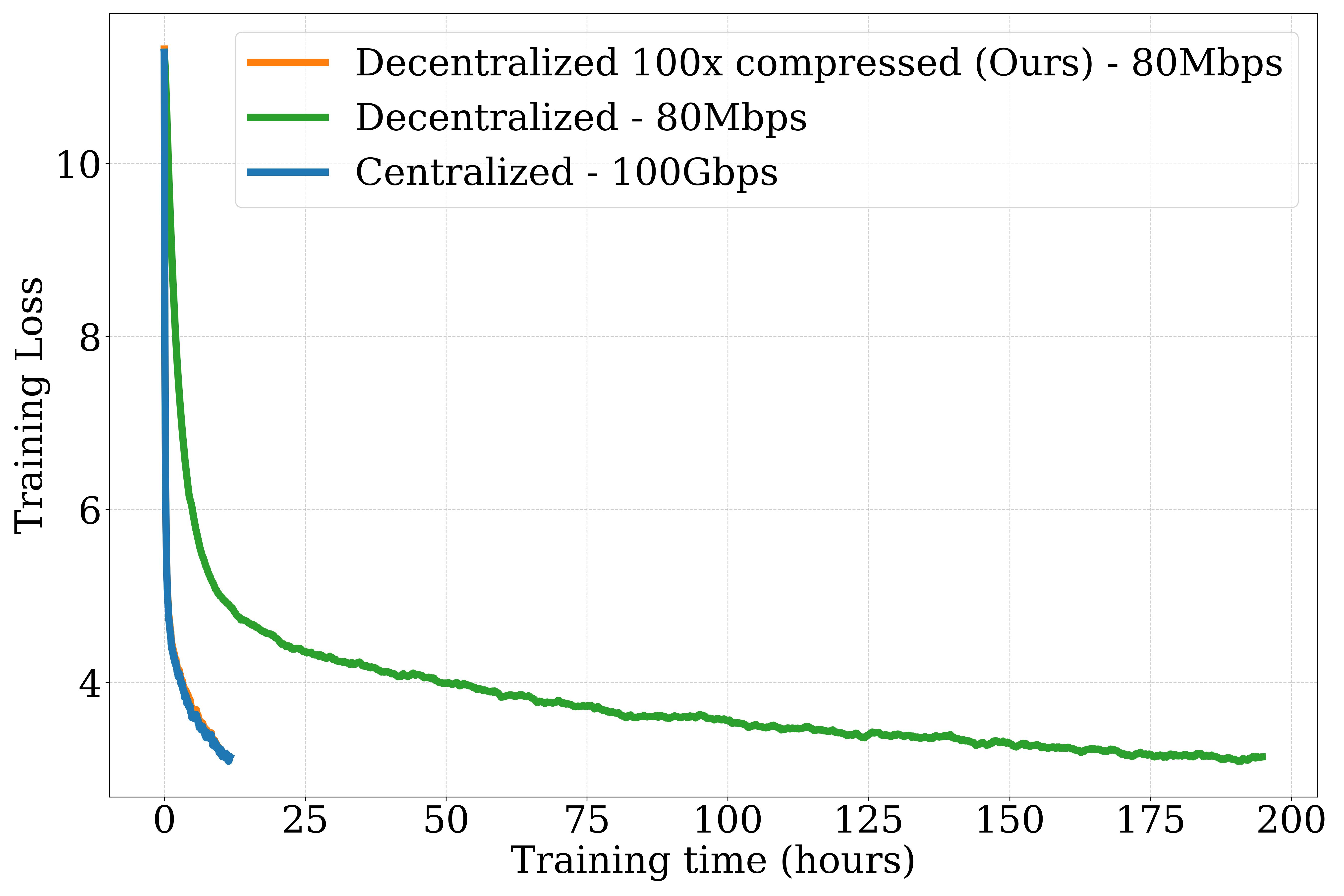}

        \label{fig:sub2}
    \end{subfigure}
    \hfill
    \begin{subfigure}[b]{0.3\textwidth}
        \includegraphics[width=\textwidth]{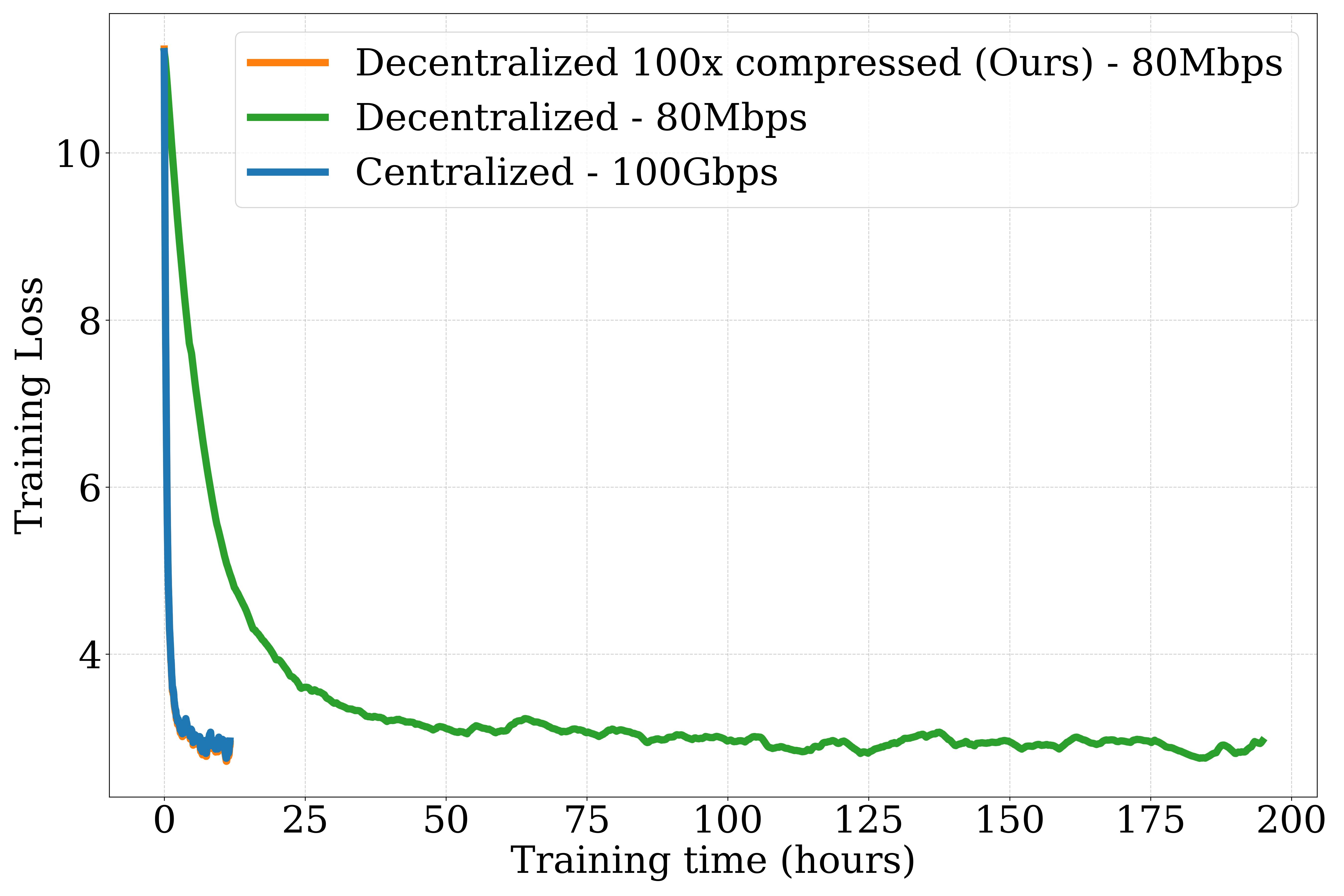}

        \label{fig:sub3}
    \end{subfigure}

    \caption{\textbf{Convergence in low-bandwidth settings.} From left to right: OpenWebText, WikiText, and BookCorpus. In each plot, the training curves are presented against wall-clock time for an 8-layer (2B) model. Decentralized models utilize $80$Mbps connections while the centralized model has datacenter-grade $100$Gbps links. Our compressed model achieves on-par convergence to the centralized model, even under a $80$Mbps bandwidth budget. In contrast, the non-compressed decentralized model with $80$Mbps links suffers from significantly slower convergence due to the communication bottleneck.  }
    \label{fig:validation}
\end{figure*}

We initialize \(\U_k\) with isotropic Gaussian noise and set \(k=40\), achieving \(100\times\) compression. Bandwidth simulations sample from \(\mathcal{N}(\mathcal{B},0.2\mathcal{B})\) per pass, defining `centralized' as 100Gbps or 16Gbps setups, with all others as `decentralized'. Experiments (except the 8B Llama run on L4 GPUs with internet-based decentralized connections) use A10g GPUs (24GB VRAM) with one layer per GPU. \textbf{Our method's effectiveness increases with faster accelerators, as slower GPUs allow more computation-communication overlap.}

\begin{figure}[ht]

    \centering
    \begin{subfigure}[b]{0.45\textwidth}
        \includegraphics[width=\textwidth]{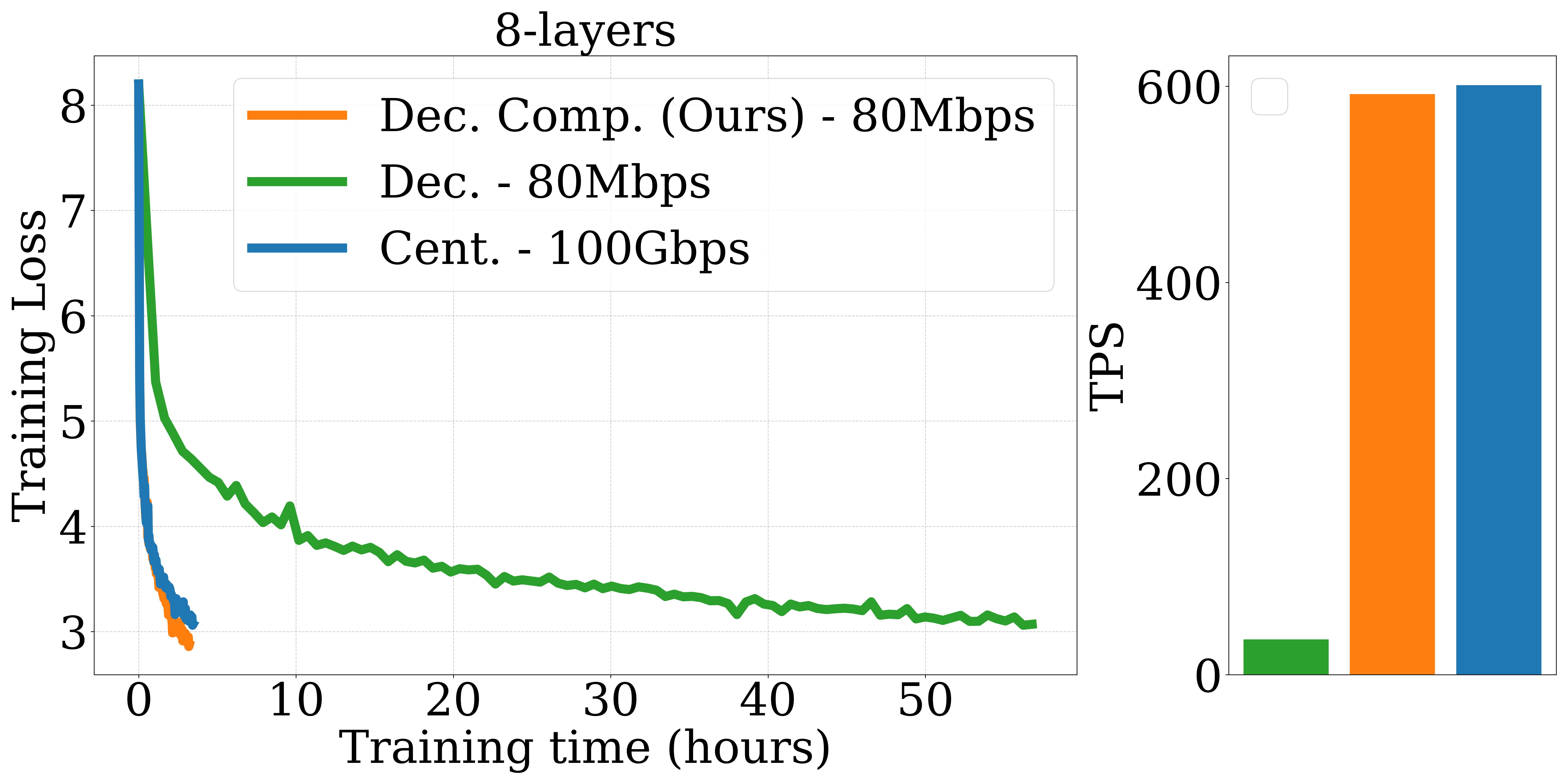}
        \label{fig:sub2}
    \end{subfigure}
    \hfill
    \begin{subfigure}[b]{0.45\textwidth}
        \includegraphics[width=\textwidth]{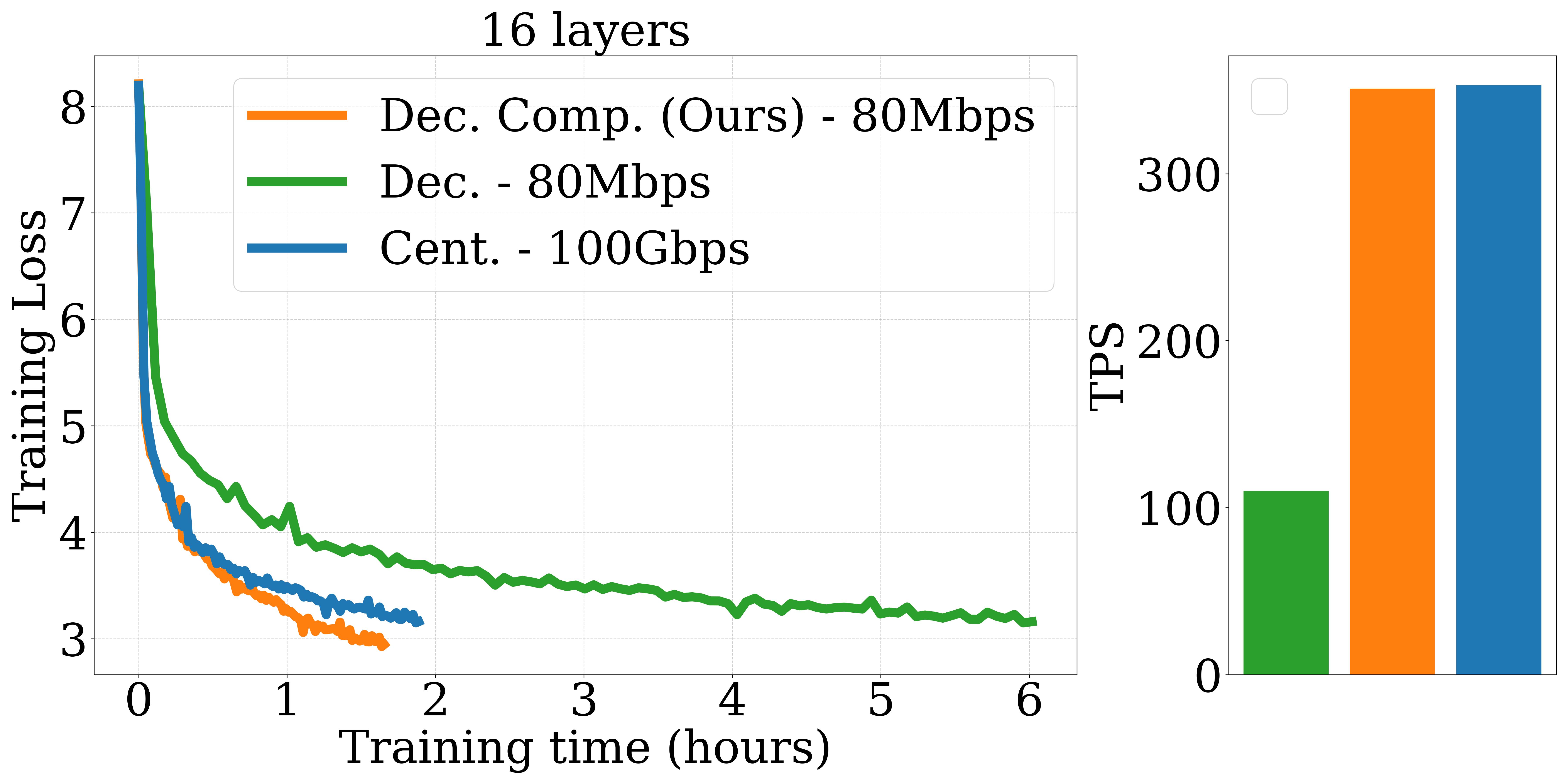}
        \label{fig:sub2}
    \end{subfigure}
  
    \caption{\textbf{Performance against depth.} Two key observations: (1)  Our compression  matches (even slightly exceeds) with centralized baseline  as the \# layers increases. Fig.~\ref{fig:llama_8b}, further validates this. (2) For the 16-layer model, we fit two layers per GPU (using A100s), increasing computation per block. As shown, the performance gap between centralized and decentralized models slightly decreases, as the ratio between computation and bandwidth bottlenecks is reduced—consistent with the square-cube law. 
  }
    \label{fig:depth}
\end{figure}

\begin{figure}[t]
    \centering
    \begin{minipage}{0.38\columnwidth}
        \centering
        \includegraphics[width=\linewidth]{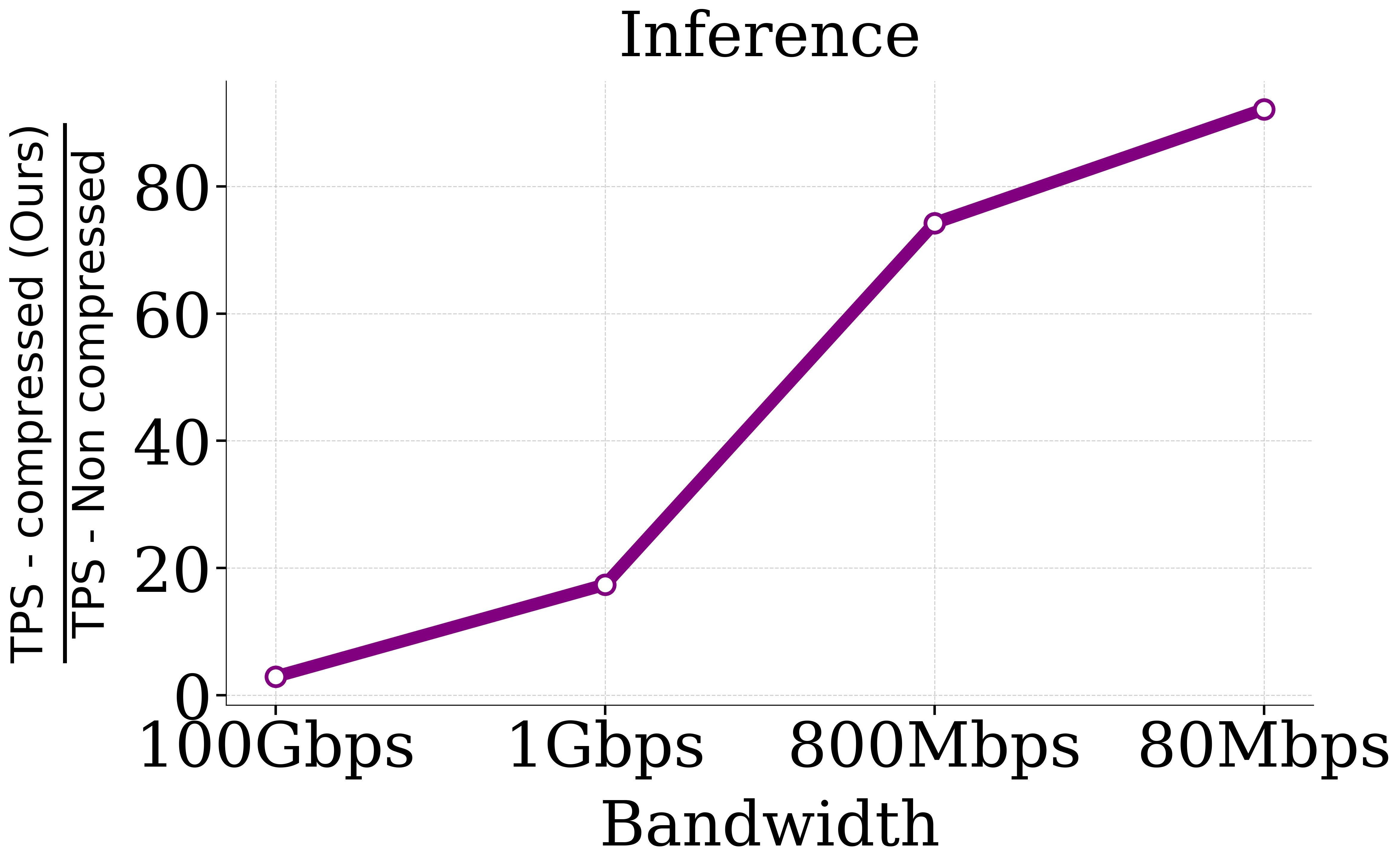}
        \label{fig:sub2}
    \end{minipage}
    \hspace{1mm}
    \begin{minipage}{0.38\columnwidth}
        \centering
        \includegraphics[width=\linewidth]{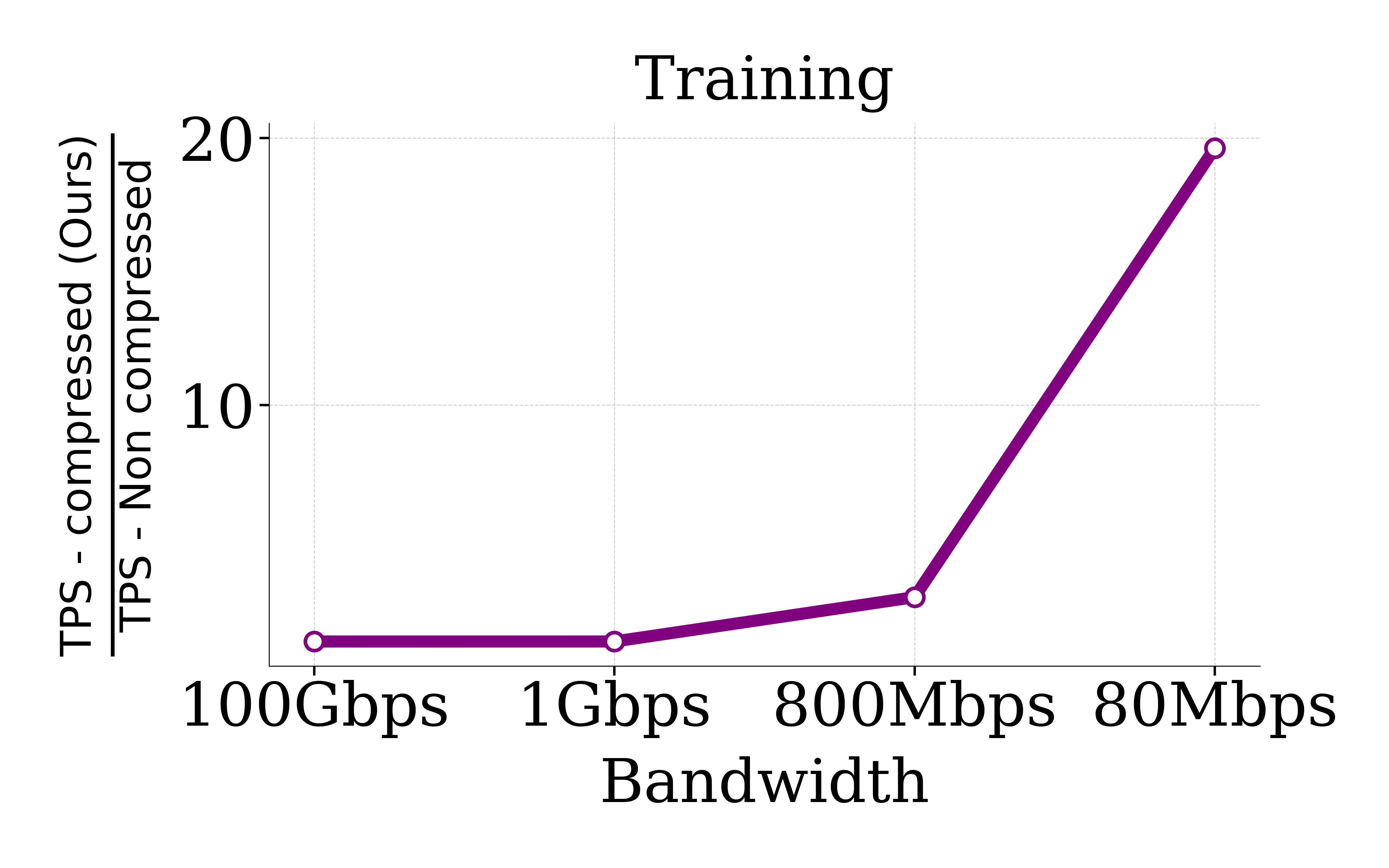}
        \label{fig:sub2}
    \end{minipage}

    \caption{\textbf{Throughput Gain.} As bandwidth becomes increasingly limited, the compressed models achieve a significantly higher throughput gain in both inference (left) and training (right). Results are shown for 8-layer (2B) models.}
    \label{fig:throughput_gain}
\end{figure}

\paragraph{Square-Cube Law.}
Square-cube law \cite{ryabinin2023swarm} states that in distributed training, computation scales cubically with model size per node, while communication grows only quadratically. This partially offsets communication bottlenecks with computational overhead. Thus, $c$-times slower communication does not lead to a $c$-times slower convergence. Hence, by improving communication efficiency by $100\times$, we achieve convergence speeds comparable to $100$Gbps setups, even with $80$Mbps links.

\begin{figure}[ht!]
    \centering
    \begin{minipage}[t]{0.49\textwidth}
        \centering
        \includegraphics[width=0.95\linewidth]{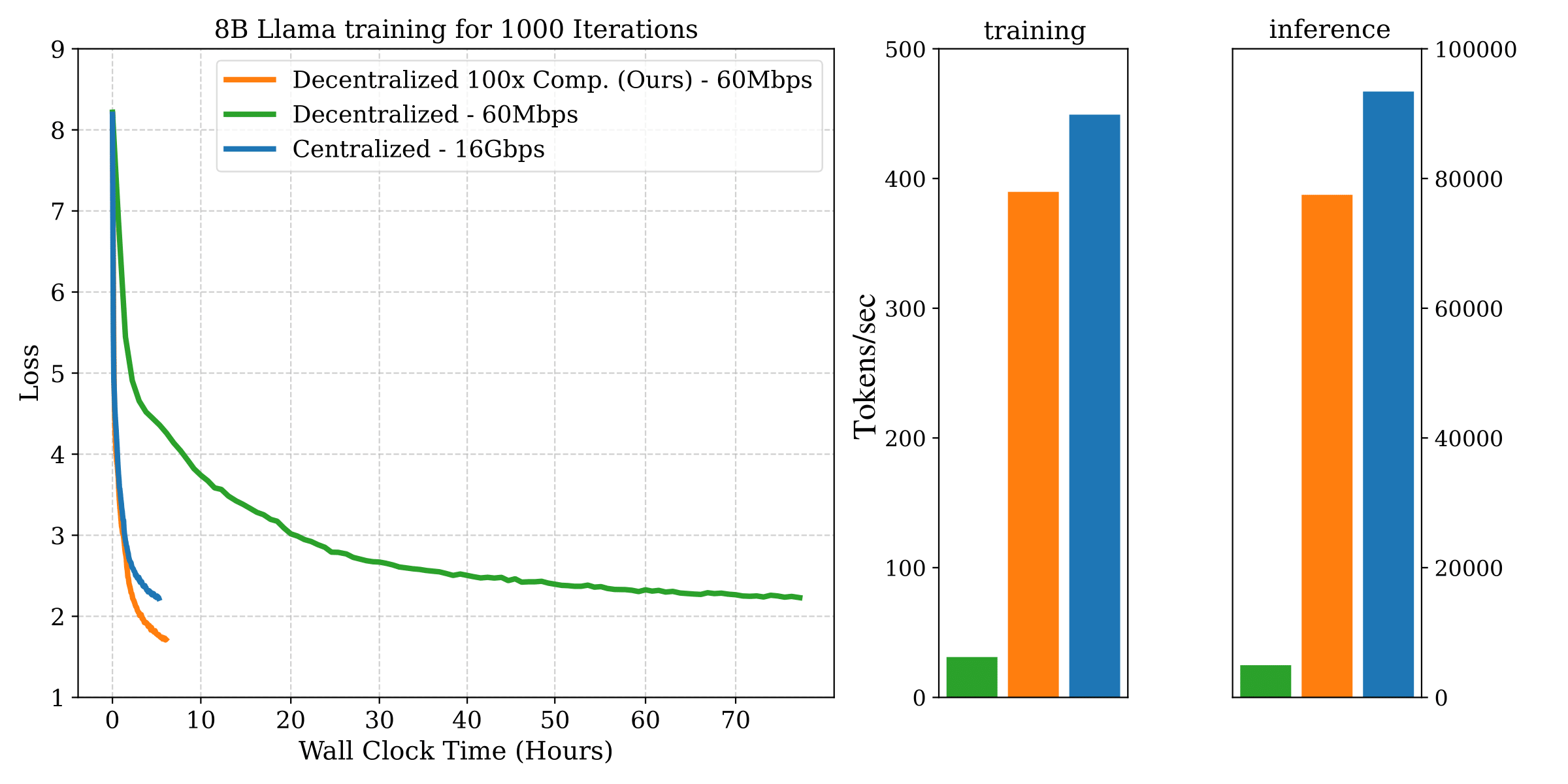}
\caption{\textbf{Training convergence and throughput on an 8B LLaMA model.} All runs use 64 L4 GPUs distributed over 8 instances. Centralized instances reside within one region (min bandwidth 16\,Gbps), while decentralized instances span 4 regions (min bandwidth 60\,Mbps), highlighting pipeline parallel bottlenecks due to reduced inter-node bandwidth.}
        \label{fig:llama_8b}
    \end{minipage}
    \hfill
    \begin{minipage}[t]{0.49\textwidth}
        \centering
        \includegraphics[width=0.80\linewidth]{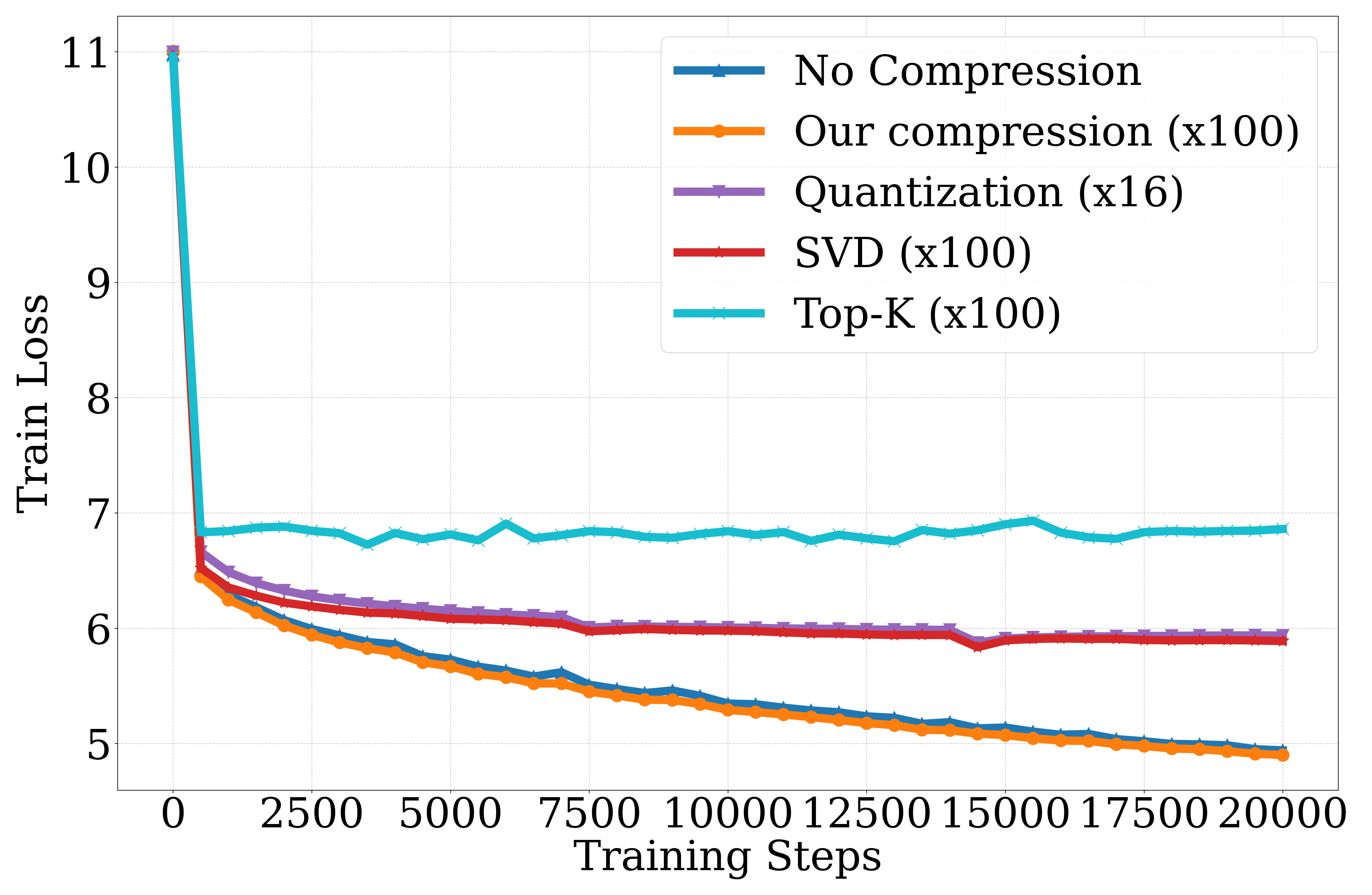}
  \caption{\textbf{Comparison against lossy compression methods.} Top-k, low-rank (SVD), and quantization fail to converge at $100\times$ compression, with quantization additionally limited by numerical precision. Our method matches the convergence rate of the uncompressed baseline.}
        \label{fig:compression_abl}
    \end{minipage}
 
\end{figure}

\subsection{Convergence in low-bandwidth settings}

\begin{table}
\centering
\small
\caption{\textbf{Perplexity scores}. Models trained for 12 hours on OpenWebText (OWT), BookCorpus (BC), and WikiText (WT). Bandwidth (B/W) and tokens per second (TPS) are reported. Our method outperforms even the centralized model, achieving significantly higher TPS compared to the non-compressed decentralized baseline.}  
\label{tab:perplexity}
\begin{tabular}{l|l|rrr|r}
\toprule
\textbf{Model} & \textbf{B/W} & \textbf{OWT}$\downarrow$ & \textbf{BC}$\downarrow$ & \textbf{WT}$\downarrow$ & \textbf{TPS}$\uparrow$ \\ 
\midrule
Decentralized & 80Mbps  & 925.19 & 108.85 & 601.84 & 36.12  \\
Decentralized Compressed (Ours) & 80Mbps & \textbf{46.75} & \textbf{17.63} & \textbf{23.01} & 592.41 \\
\midrule
Centralized & 100Gbps & 47.22 & 18.35 & 23.08 & 602.57  \\ 
\bottomrule
\end{tabular}
\end{table}




Our method enables training models over extremely low-bandwidth connections. We trained networks on both $80$ Mbps and datacenter-grade $100$ Gbps connections. Fig.~\ref{fig:validation} illustrates the train curves of an 8-layer ($2$B) model against wall-clock time. As expected, training over $80$ Mbps links in the decentralized setting significantly degrades convergence. In contrast, with our compression, the decentralized model achieves on par convergence to the network trained over $100$ Gbps connections. To demonstrate the generalization, the perplexity scores of each model over validation sets is shown in Table.~\ref{tab:perplexity}. As evident, the decentralized model with our compression even surpasses the performance of the centralized model for the same training time. To further validate test-time performance, we train models to convergence using the compute-optimal 1:20 model-to-token ratio from the Chinchilla scaling law~\cite{hoffmanntraining}, reaching compute-optimality at 12B training tokens with superior performance (Appendix~\ref{app:compute_optimal}).




\subsection{Throughput gain}

Our compression also significantly accelerates inference. As inference  requires less computation than training, bandwidth becomes the dominant bottleneck; hence, our compression yields substantial  gains. Fig.~\ref{fig:throughput_gain} illustrates gains in both training and inference: at inference, we achieve almost a $100\times$ speedup  at 80\,Mbps. Although this advantage diminishes at higher bandwidths (e.g., 100\,Gbps), we still observe about a $3\times$ improvement. \textbf{This indicates that even centralized systems benefit from reduced inference latency using our approach, which can translate into considerable cost savings with the recent trend of inference time scaling of large language models \cite{snell2024scaling, bi2024forest}} A similar trend holds for training throughput as well.



\subsection{Validation at compute-optimal}
\label{app:compute_optimal}

We conduct additional training experiments to demonstrate the effectiveness of our proposed compression method at the compute-optimal point of model training. The compute-optimal point, as defined by prior work such as Chinchilla, represents an ideal trade-off between model size and token count, optimizing performance within a fixed computational budget. Typically, for architectures like LLaMA, this optimal point corresponds to approximately a 1:20 model-to-token ratio. Consequently, we trained a 640M-parameter model for around 12 billion tokens, aligning closely with the compute-optimal recommendation.

The training was executed using 8 FSDP workers, achieving approximately 22k tokens per second throughput. The results, summarized in Table~\ref{tab:compute_optimal}, demonstrate that our compression method not only matches but marginally outperforms the non-compressed baseline in terms of validation perplexity.

\begin{table}[ht]
\centering
\caption{Validation perplexity (lower is better) at the compute-optimal point (12 billion tokens) for a 640M-parameter model. Both ours and centralized model are trained for a same number of iterations. Training the decentralized uncompressed model to compute optimal is infeasible (estimated over 200 days) so we only report the TPS.}
\label{tab:compute_optimal}
\begin{tabular}{lcccc}
\toprule
\textbf{Model} & \textbf{C4} &  \textbf{BookCorpus} & \textbf{TPS}\\ 
\midrule
Decentralized & - & - & 871 \\
Decentralized Compressed (Ours) & \textbf{12.53} & \textbf{12.67} & 22k \\
Centralized & 12.61 & 12.79 & 22.5k \\
\bottomrule
\end{tabular}
\end{table}

These results confirm that our proposed compression method maintains or slightly improves model performance at the crucial compute-optimal training regime.

\subsection{Scaling across globally distributed GPUs}

To further explore the scalability, we trained a LLaMA 8B parameter variant~\cite{dubey2024llama} with $2048$ context length, using TorchTitan~\cite{liang2024torchtitan} on the C4 dataset across 64 L4 GPUs distributed across 8 instances. We use a pipeline parallel setup with $32$ stages running in $2$ FSDP dimensions, where the 32 transformer layers are distributed one layer per stage. We evaluated two environment configurations: Centralized and Decentralized. In the Centralized setting all instances were located in the same cloud region and the bandwidth spans between $16$Gbps-$27$Gbps. For the Decentralized case  the 8 instances were distributed across 4 distinct regions (North America, Europe, and Asia). Additionally, no two consecutive stages were placed in the same region for the decentralized setup, hence, bandwidth spans from $60$Mbps-$350$Mbps. 
As shown in Fig.~\ref{fig:llama_8b},  our compression method in the decentralized configuration matches  the wall-clock time (even slightly improving) and TPS with the centralized setting. In contrast, the decentralized setting w/o compression was \textit{13x slower}.

\subsection{Ablations}

We conduct ablations on the C4 dataset to evaluate the robustness of our method. Fig.~\ref{fig:depth} compares performance across model depths. If our compression were lossy, deeper models would accumulate errors, degrading performance relative to the centralized baseline \cite{bian2024does} (see also Theorem~\ref{th:error_accumulation}). However, our results show that even as depth increases from 8 (2B) to 16 (3.5B) layers, convergence remains on par with centralized baselines. Further, our large-scale experiment (Fig.~\ref{fig:llama_8b}) confirms that 32-layer models scale effectively, demonstrating decentralized training of large models with MP for the first time.  Fig.~\ref{fig:depth} also highlights that in the 16-layer model, assigning two layers per GPU (A100, 40GB VRAM) increases per-GPU computation, reducing bandwidth bottlenecks and narrowing the gap between decentralized and centralized models. This  validates the square-cube law \cite{ryabinin2023swarm}, showing how computation-to-communication balance impacts decentralized training. However, note that decentralized training primarily targets low-end GPUs rather than high-end hardware. Ablations over other design choices and the negligible memory overhead are discussed in Appendix \ref{app:memory_overhead}.

\subsection{Comparison against lossy compressions}

As per Statement~\ref{st:1}, standard compression methods used in DDP do not effectively extend to MP. We train an 8-layer model on the WikiText, comparing our compression method with TopK, quantization, and low-rank projection. As shown in Fig.~\ref{fig:compression_abl}, with an aggressive compression rate of $\times 100$, such  compression schemes fail to converge. In contrast, our method achieves convergence on par with the non-compressed model. Note that for quantization, the best compression rate we can achieve is $16 \times$ for 16bit precision.

\subsection{Memory Overhead Analysis}
\label{app:memory_overhead}

As our method requires addition of fixed high-rank and dynamic low-rank embeddings, one potential concern is the  memory overhead, as sequence length $L$ grows. We empirically and theoretically demonstrate that the absolute memory overhead introduced by our approach remains constant and neglegible, while the relative overhead decreases with increasing sequence lengths.

We first empirically validate this and then give a theoretical explanation. To this end, we performed experiments using a 2B-parameter Transformer model (8 layers, 4k model dimension, 16 attention heads) distributed across eight NVIDIA H100 GPUs under varying sequence lengths $L$. Table~\ref{tab:memory_overhead} summarizes these results. Remarkably, the absolute overhead consistently remains around 400 MB, irrespective of sequence length. As $L$ grows from 8k to 24k, the relative memory overhead correspondingly drops from 4.0\% to 0.6\%.

\begin{table}[ht]
\centering
\caption{Peak memory usage comparison between baseline and our subspace method as sequence length ($L$) scales.}
\label{tab:memory_overhead}
\begin{tabular}{lcccc}
\toprule
$L$ & Baseline (GB) & Ours (GB) & Overhead & Relative Overhead \\ 
\midrule
8k & 9.66 & 10.06 & $\sim$400MB & $\sim$4.0\% \\[2pt]
16k & 36.51 & 36.91 & $\sim$400MB & $\sim$1.1\% \\[2pt]
24k & 76.00 & 76.46 & $\sim$400MB & $\sim$0.6\% \\
\bottomrule
\end{tabular}
\end{table}

This consistent and negligible overhead can be explained through PyTorch’s memory management behavior. Firstly, fixed embedding lookups in our method are ephemeral since they are non-trainable; PyTorch does not store activation gradients for them. After they are added to the activations, PyTorch’s caching allocator reuses the memory (instead of invoking cudaMalloc/cudaFree), allowing temporary tensors to be released prior to attention. Thus, their memory footprint doesn’t persist into later stages. Secondly, embedding lookups ($\mathcal{O}(B \times L \times D)$ memory usage) are inherently minor relative to attention ($\mathcal{O}(B \times L^2 \times D)$) and MLP layers ($\mathcal{O}(B \times L \times D^2)$), making their temporary storage impact negligible in peak memory usage.

Further, the cached embedding tables themselves do not pose a meaningful threat to scalability. For instance, even a large embedding table (10k-dimensional embedding and 50k vocabulary size) would consume approximately 0.93 GB in 16-bit precision, requiring only impractically large embedding dimensions (e.g., 100k+) to surpass 10 GB. State-of-the-art frontier models (e.g., DeepSeek~\cite{deepseek2023}, LLAMA-405B~\cite{dubey2024llama}) remain far below such thresholds (7K and 16K, respectively). Therefore, even with significantly increased embedding dimensions, memory overhead remains dominated by MLP and attention computations, not embedding storage.

In conclusion, our method demonstrates robust and scalable memory behavior, effectively managing memory overhead even at extremely large sequence lengths, validating its practicality for decentralized training at scale.

\subsection{Memory Overhead with Increasing Workers}

Another key consideration is how our compression scheme scales in terms of memory overhead with an increasing number of workers. One might suspect that distributing long contexts across more workers could lead to increased overhead per worker. We clarify that this concern is unfounded due to the way embeddings and attention computations are handled in our setup.

To elaborate, in a context-parallel training setting, each worker handles a distinct segment of the input sequence. Embedding lookups are performed \emph{locally on each worker} before attention computations. The resulting key-value (KV) tensors maintain the same shape and size as a standard, non-compressed model, ensuring no extra storage requirement. Importantly, the ephemeral embedding additions are discarded immediately after their local usage, well before the memory-intensive attention and MLP layers, thus never contributing to peak memory usage.

To empirically validate our claims, we extend our experiments to include varying numbers of workers using Ring Attention, a state-of-the-art context-parallel mechanism where each worker exchanges KV tensors only with its immediate neighbors. As embedding operations occur locally and KV tensors remain unchanged in size, our compression scheme integrates seamlessly.

Table~\ref{tab:memory_workers} illustrates that, irrespective of the increase in sequence length ($L$) and corresponding increase in the number of workers, our per-worker memory overhead remains constant (around 400MB) and does not scale with either sequence length or worker count.

\begin{table}[ht]
\centering
\caption{Peak memory per worker with increasing sequence lengths and workers using our compression scheme.}
\label{tab:memory_workers}
\resizebox{\columnwidth}{!}{
\begin{tabular}{lccccc}
\toprule
$L$ & Num. Workers & Baseline (GB) & Ours (GB) & Overhead/Worker & Relative Overhead \\
\midrule
8k  & 1 & 9.66  & 10.06 & $\sim$400MB & $\sim$4.0\% \\[2pt]
16k & 1 & 36.51 & 36.91 & $\sim$400MB & $\sim$1.1\% \\[2pt]
24k & 1 & 76.00 & 76.46 & $\sim$400MB & $\sim$0.6\% \\[2pt]
50k & 2 & 76.13 & 76.55 & $\sim$400MB & $\sim$0.55\% \\[2pt]
65k & 3 & 78.19 & 79.62 & $\sim$400MB & $\sim$0.54\% \\
\bottomrule
\end{tabular}
}
\end{table}

Thus, our design effectively maintains low, constant memory overhead per worker, demonstrating excellent scalability in decentralized contexts. The embedding memory per-worker overhead remains negligible and constant, independent of the scaling of sequence length or the number of distributed workers, validating our approach for large-scale decentralized training scenarios.

\section{Broader impact}

Our work presents what we believe to be the first  compression method that allows practical, large-scale, decentralized model-parallel training, enabling efficient scaling in low-bandwidth environments. This marks a significant step toward democratizing access to large-scale AI by reducing dependence on costly datacenter infrastructure and expanding opportunities for deep learning research. Our approach has far-reaching implications for making AI more accessible, cost-effective, and environmentally sustainable across academia, startups, and industry.

\section{Conclusion}
\label{sec:conclusion}

We propose a novel compression technique that, for the first time, enables aggressive compression in MP without harming convergence. By leveraging structured subspace constraints, we achieve up to $100\times$ communication efficiency while preserving convergence. Our compression enhances both inference and training efficiency, reducing latency even in centralized settings. Extensive experiments across varying model depths, bandwidth conditions, and large-scale deployments validate the effectiveness and scalability of our approach. Notably, we demonstrate its real-world applicability by successfully training an 8B-parameter LlaMa model on low-end GPUs distributed across multiple global regions, connected solely via internet-grade (60 Mbps) links, while achieving convergence comparable to a centralized setup.  Our results establish that decentralized training of large-scale models using model parallelism—previously hindered by severe communication bottlenecks—is now practical. 



{
\bibliographystyle{plainnat}
\bibliography{main}

@article{krizhevsky2012imagenet,
  title={Imagenet classification with deep convolutional neural networks},
  author={Krizhevsky, Alex and Sutskever, Ilya and Hinton, Geoffrey E},
  journal={Advances in neural information processing systems},
  volume={25},
  year={2012}
}

@inproceedings{kolesnikov2020big,
  title={Big transfer (bit): General visual representation learning},
  author={Kolesnikov, Alexander and Beyer, Lucas and Zhai, Xiaohua and Puigcerver, Joan and Yung, Jessica and Gelly, Sylvain and Houlsby, Neil},
  booktitle={Computer Vision--ECCV 2020: 16th European Conference, Glasgow, UK, August 23--28, 2020, Proceedings, Part V 16},
  pages={491--507},
  year={2020},
  organization={Springer}
}

@article{dubey2024llama,
  title={The llama 3 herd of models},
  author={Dubey, Abhimanyu and Jauhri, Abhinav and Pandey, Abhinav and Kadian, Abhishek and Al-Dahle, Ahmad and Letman, Aiesha and Mathur, Akhil and Schelten, Alan and Yang, Amy and Fan, Angela and others},
  journal={arXiv preprint arXiv:2407.21783},
  year={2024}
}

@article{ren2023pangu,
  title={Pangu-$\{$$\backslash$Sigma$\}$: Towards trillion parameter language model with sparse heterogeneous computing},
  author={Ren, Xiaozhe and Zhou, Pingyi and Meng, Xinfan and Huang, Xinjing and Wang, Yadao and Wang, Weichao and Li, Pengfei and Zhang, Xiaoda and Podolskiy, Alexander and Arshinov, Grigory and others},
  journal={arXiv preprint arXiv:2303.10845},
  year={2023}
}

@article{li2020pytorch,
  title={Pytorch distributed: Experiences on accelerating data parallel training},
  author={Li, Shen and Zhao, Yanli and Varma, Rohan and Salpekar, Omkar and Noordhuis, Pieter and Li, Teng and Paszke, Adam and Smith, Jeff and Vaughan, Brian and Damania, Pritam and others},
  journal={arXiv preprint arXiv:2006.15704},
  year={2020}
}

@article{zhao2023pytorch,
  title={Pytorch fsdp: experiences on scaling fully sharded data parallel},
  author={Zhao, Yanli and Gu, Andrew and Varma, Rohan and Luo, Liang and Huang, Chien-Chin and Xu, Min and Wright, Less and Shojanazeri, Hamid and Ott, Myle and Shleifer, Sam and others},
  journal={arXiv preprint arXiv:2304.11277},
  year={2023}
}

@article{huang2019gpipe,
  title={Gpipe: Efficient training of giant neural networks using pipeline parallelism},
  author={Huang, Yanping and Cheng, Youlong and Bapna, Ankur and Firat, Orhan and Chen, Dehao and Chen, Mia and Lee, HyoukJoong and Ngiam, Jiquan and Le, Quoc V and Wu, Yonghui and others},
  journal={Advances in neural information processing systems},
  volume={32},
  year={2019}
}

@inproceedings{narayanan2021efficient,
  title={Efficient large-scale language model training on gpu clusters using megatron-lm},
  author={Narayanan, Deepak and Shoeybi, Mohammad and Casper, Jared and LeGresley, Patrick and Patwary, Mostofa and Korthikanti, Vijay and Vainbrand, Dmitri and Kashinkunti, Prethvi and Bernauer, Julie and Catanzaro, Bryan and others},
  booktitle={Proceedings of the International Conference for High Performance Computing, Networking, Storage and Analysis},
  pages={1--15},
  year={2021}
}

@inproceedings{vergara2019scaling,
  title={Scaling the Summit: deploying the World’s fastest supercomputer},
  author={Vergara Larrea, Ver{\'o}nica G and Joubert, Wayne and Brim, Michael J and Budiardja, Reuben D and Maxwell, Don and Ezell, Matt and Zimmer, Christopher and Boehm, Swen and Elwasif, Wael and Oral, Sarp and others},
  booktitle={High Performance Computing: ISC High Performance 2019 International Workshops, Frankfurt, Germany, June 16-20, 2019, Revised Selected Papers 34},
  pages={330--351},
  year={2019},
  organization={Springer}
}

@article{powerai,
  title={AI and Compute},
  author={Power, How Much Longer Can Computing and Progress, Drive Artificial Intelligence}
}

@article{yuan2022decentralized,
  title={Decentralized training of foundation models in heterogeneous environments},
  author={Yuan, Binhang and He, Yongjun and Davis, Jared and Zhang, Tianyi and Dao, Tri and Chen, Beidi and Liang, Percy S and Re, Christopher and Zhang, Ce},
  journal={Advances in Neural Information Processing Systems},
  volume={35},
  pages={25464--25477},
  year={2022}
}

@inproceedings{ryabinin2023swarm,
  title={Swarm parallelism: Training large models can be surprisingly communication-efficient},
  author={Ryabinin, Max and Dettmers, Tim and Diskin, Michael and Borzunov, Alexander},
  booktitle={International Conference on Machine Learning},
  pages={29416--29440},
  year={2023},
  organization={PMLR}
}

@inproceedings{karimireddy2019error,
  title={Error feedback fixes signsgd and other gradient compression schemes},
  author={Karimireddy, Sai Praneeth and Rebjock, Quentin and Stich, Sebastian and Jaggi, Martin},
  booktitle={International Conference on Machine Learning},
  pages={3252--3261},
  year={2019},
  organization={PMLR}
}

@inproceedings{dong2021attention,
  title={Attention is not all you need: Pure attention loses rank doubly exponentially with depth},
  author={Dong, Yihe and Cordonnier, Jean-Baptiste and Loukas, Andreas},
  booktitle={International Conference on Machine Learning},
  pages={2793--2803},
  year={2021},
  organization={PMLR}
}

@article{abbe2024transformers,
  title={Transformers learn through gradual rank increase},
  author={Abbe, Emmanuel and Bengio, Samy and Boix-Adsera, Enric and Littwin, Etai and Susskind, Joshua},
  journal={Advances in Neural Information Processing Systems},
  volume={36},
  year={2024}
}

@article{li2024convergence,
  title={Convergence of adam under relaxed assumptions},
  author={Li, Haochuan and Rakhlin, Alexander and Jadbabaie, Ali},
  journal={Advances in Neural Information Processing Systems},
  volume={36},
  year={2024}
}

@article{defossez2020simple,
  title={A simple convergence proof of adam and adagrad},
  author={D{\'e}fossez, Alexandre and Bottou, L{\'e}on and Bach, Francis and Usunier, Nicolas},
  journal={arXiv preprint arXiv:2003.02395},
  year={2020}
}

@inproceedings{li2024learning,
  title={Learning on transformers is provable low-rank and sparse: A one-layer analysis},
  author={Li, Hongkang and Wang, Meng and Zhang, Shuai and Liu, Sijia and Chen, Pin-Yu},
  booktitle={2024 IEEE 13rd Sensor Array and Multichannel Signal Processing Workshop (SAM)},
  pages={1--5},
  year={2024},
  organization={IEEE}
}

@article{vogels2019powersgd,
  title={PowerSGD: Practical low-rank gradient compression for distributed optimization},
  author={Vogels, Thijs and Karimireddy, Sai Praneeth and Jaggi, Martin},
  journal={Advances in Neural Information Processing Systems},
  volume={32},
  year={2019}
}

@article{wang2018atomo,
  title={Atomo: Communication-efficient learning via atomic sparsification},
  author={Wang, Hongyi and Sievert, Scott and Liu, Shengchao and Charles, Zachary and Papailiopoulos, Dimitris and Wright, Stephen},
  journal={Advances in neural information processing systems},
  volume={31},
  year={2018}
}

@article{zhao2024galore,
  title={Galore: Memory-efficient llm training by gradient low-rank projection},
  author={Zhao, Jiawei and Zhang, Zhenyu and Chen, Beidi and Wang, Zhangyang and Anandkumar, Anima and Tian, Yuandong},
  journal={arXiv preprint arXiv:2403.03507},
  year={2024}
}

@article{cosson2023low,
  title={Low-Rank Gradient Descent},
  author={Cosson, Romain and Jadbabaie, Ali and Makur, Anuran and Reisizadeh, Amirhossein and Shah, Devavrat},
  journal={IEEE Open Journal of Control Systems},
  year={2023},
  publisher={IEEE}
}

@article{yang2023spectral,
  title={A spectral condition for feature learning},
  author={Yang, Greg and Simon, James B and Bernstein, Jeremy},
  journal={arXiv preprint arXiv:2310.17813},
  year={2023}
}

@article{snell2024scaling,
  title={Scaling llm test-time compute optimally can be more effective than scaling model parameters},
  author={Snell, Charlie and Lee, Jaehoon and Xu, Kelvin and Kumar, Aviral},
  journal={arXiv preprint arXiv:2408.03314},
  year={2024}
}

@article{kingma2014adam,
  title={Adam: A method for stochastic optimization},
  author={Kingma, Diederik P},
  journal={arXiv preprint arXiv:1412.6980},
  year={2014}
}

@article{loshchilov2017decoupled,
  title={Decoupled weight decay regularization},
  author={Loshchilov, I},
  journal={arXiv preprint arXiv:1711.05101},
  year={2017}
}

@article{vuckovic2018kalman,
  title={Kalman gradient descent: Adaptive variance reduction in stochastic optimization},
  author={Vuckovic, James},
  journal={arXiv preprint arXiv:1810.12273},
  year={2018}
}

@article{liu2020adam,
  title={Adam\textsuperscript{+}: A Stochastic Method with Adaptive Variance Reduction},
  author={Liu, Mingrui and Zhang, Wei and Orabona, Francesco and Yang, Tianbao},
  journal={arXiv preprint arXiv:2011.11985},
  year={2020}
}

@inproceedings{balles2018dissecting,
  title={Dissecting adam: The sign, magnitude and variance of stochastic gradients},
  author={Balles, Lukas and Hennig, Philipp},
  booktitle={International Conference on Machine Learning},
  pages={404--413},
  year={2018},
  organization={PMLR}
}

@article{yang2020stochastic,
  title={Stochastic gradient variance reduction by solving a filtering problem},
  author={Yang, Xingyi},
  journal={arXiv preprint arXiv:2012.12418},
  year={2020}
}

@article{wang2013variance,
  title={Variance reduction for stochastic gradient optimization},
  author={Wang, Chong and Chen, Xi and Smola, Alexander J and Xing, Eric P},
  journal={Advances in neural information processing systems},
  volume={26},
  year={2013}
}

@article{bi2024forest,
  title={Forest-of-Thought: Scaling Test-Time Compute for Enhancing LLM Reasoning},
  author={Bi, Zhenni and Han, Kai and Liu, Chuanjian and Tang, Yehui and Wang, Yunhe},
  journal={arXiv preprint arXiv:2412.09078},
  year={2024}
}

@misc{wikitext,
      title={Pointer Sentinel Mixture Models},
      author={Stephen Merity and Caiming Xiong and James Bradbury and Richard Socher},
      year={2016},
      eprint={1609.07843},
      archivePrefix={arXiv},
      primaryClass={cs.CL}
}

@article{2019t5,
    author = {Colin Raffel and Noam Shazeer and Adam Roberts and Katherine Lee and Sharan Narang and Michael Matena and Yanqi Zhou and Wei Li and Peter J. Liu},
    title = {Exploring the Limits of Transfer Learning with a Unified Text-to-Text Transformer},
    journal = {arXiv e-prints},
    year = {2019},
    archivePrefix = {arXiv},
    eprint = {1910.10683},
}

@InProceedings{bookcorpus,
    title = {Aligning Books and Movies: Towards Story-Like Visual Explanations by Watching Movies and Reading Books},
    author = {Zhu, Yukun and Kiros, Ryan and Zemel, Rich and Salakhutdinov, Ruslan and Urtasun, Raquel and Torralba, Antonio and Fidler, Sanja},
    booktitle = {The IEEE International Conference on Computer Vision (ICCV)},
    month = {December},
    year = {2015}
}

@misc{owt,
    title={OpenWebText Corpus},
    author={Gokaslan, Aaron and Cohen, Vanya and Pavlick, Ellie and Tellex, Stefanie},
    howpublished={\url{http://Skylion007.github.io/OpenWebTextCorpus}},
    year={2019}
}

@article{zhao2021zero,
  title={Zero initialization: Initializing neural networks with only zeros and ones},
  author={Zhao, Jiawei and Sch{\"a}fer, Florian and Anandkumar, Anima},
  journal={arXiv preprint arXiv:2110.12661},
  year={2021}
}

@article{zhang2024convergence,
  title={Convergence Guarantees for RMSProp and Adam in Generalized-smooth Non-convex Optimization with Affine Noise Variance},
  author={Zhang, Qi and Zhou, Yi and Zou, Shaofeng},
  journal={arXiv preprint arXiv:2404.01436},
  year={2024}
}

@article{sanyal2404inheritune,
  title={Inheritune: Training smaller yet more attentive language models, 2024},
  author={Sanyal, Sunny and Shwartz-Ziv, Ravid and Dimakis, Alexandros G and Sanghavi, Sujay},
  journal={URL https://arxiv. org/abs/2404.08634}
}

@article{gur2018gradient,
  title={Gradient descent happens in a tiny subspace},
  author={Gur-Ari, Guy and Roberts, Daniel A and Dyer, Ethan},
  journal={arXiv preprint arXiv:1812.04754},
  year={2018}
}

@inproceedings{rudakov2023activations,
  title={Activations and gradients compression for model-parallel training},
  author={Rudakov, Mikhail I and Beznosikov, Aleksandr Nikolaevich and Kholodov, Ya A and Gasnikov, Alexander Vladimirovich},
  booktitle={Doklady Mathematics},
  volume={108},
  number={Suppl 2},
  pages={S272--S281},
  year={2023},
  organization={Springer}
}

@article{wang2021pufferfish,
  title={Pufferfish: Communication-efficient models at no extra cost},
  author={Wang, Hongyi and Agarwal, Saurabh and Papailiopoulos, Dimitris},
  journal={Proceedings of Machine Learning and Systems},
  volume={3},
  pages={365--386},
  year={2021}
}

@article{bian2024does,
  title={Does compressing activations help model parallel training?},
  author={Bian, Song and Li, Dacheng and Wang, Hongyi and Xing, Eric and Venkataraman, Shivaram},
  journal={Proceedings of Machine Learning and Systems},
  volume={6},
  pages={239--252},
  year={2024}
}

@inproceedings{wu2018error,
  title={Error compensated quantized SGD and its applications to large-scale distributed optimization},
  author={Wu, Jiaxiang and Huang, Weidong and Huang, Junzhou and Zhang, Tong},
  booktitle={International conference on machine learning},
  pages={5325--5333},
  year={2018},
  organization={PMLR}
}

@inproceedings{tang20211,
  title={1-bit adam: Communication efficient large-scale training with adam’s convergence speed},
  author={Tang, Hanlin and Gan, Shaoduo and Awan, Ammar Ahmad and Rajbhandari, Samyam and Li, Conglong and Lian, Xiangru and Liu, Ji and Zhang, Ce and He, Yuxiong},
  booktitle={International Conference on Machine Learning},
  pages={10118--10129},
  year={2021},
  organization={PMLR}
}

@article{lian2017can,
  title={Can decentralized algorithms outperform centralized algorithms? a case study for decentralized parallel stochastic gradient descent},
  author={Lian, Xiangru and Zhang, Ce and Zhang, Huan and Hsieh, Cho-Jui and Zhang, Wei and Liu, Ji},
  journal={Advances in neural information processing systems},
  volume={30},
  year={2017}
}

@article{ryabinin2021moshpit,
  title={Moshpit sgd: Communication-efficient decentralized training on heterogeneous unreliable devices},
  author={Ryabinin, Max and Gorbunov, Eduard and Plokhotnyuk, Vsevolod and Pekhimenko, Gennady},
  journal={Advances in Neural Information Processing Systems},
  volume={34},
  pages={18195--18211},
  year={2021}
}

@article{douillard2023diloco,
  title={Diloco: Distributed low-communication training of language models},
  author={Douillard, Arthur and Feng, Qixuan and Rusu, Andrei A and Chhaparia, Rachita and Donchev, Yani and Kuncoro, Adhiguna and Ranzato, Marc'Aurelio and Szlam, Arthur and Shen, Jiajun},
  journal={arXiv preprint arXiv:2311.08105},
  year={2023}
}

@article{peng2024decoupled,
  title={Decoupled Momentum Optimization},
  author={Peng, Bowen and Quesnelle, Jeffrey and Kingma, Diederik P},
  journal={arXiv preprint arXiv:2411.19870},
  year={2024}
}

@inproceedings{wang2023cocktailsgd,
  title={Cocktailsgd: Fine-tuning foundation models over 500mbps networks},
  author={Wang, Jue and Lu, Yucheng and Yuan, Binhang and Chen, Beidi and Liang, Percy and De Sa, Christopher and Re, Christopher and Zhang, Ce},
  booktitle={International Conference on Machine Learning},
  pages={36058--36076},
  year={2023},
  organization={PMLR}
}

@article{alistarh2017qsgd,
  title={QSGD: Communication-efficient SGD via gradient quantization and encoding},
  author={Alistarh, Dan and Grubic, Demjan and Li, Jerry and Tomioka, Ryota and Vojnovic, Milan},
  journal={Advances in neural information processing systems},
  volume={30},
  year={2017}
}

@inproceedings{bernstein2018signsgd,
  title={signSGD: Compressed optimisation for non-convex problems},
  author={Bernstein, Jeremy and Wang, Yu-Xiang and Azizzadenesheli, Kamyar and Anandkumar, Animashree},
  booktitle={International Conference on Machine Learning},
  pages={560--569},
  year={2018},
  organization={PMLR}
}

@article{wangni2018gradient,
  title={Gradient sparsification for communication-efficient distributed optimization},
  author={Wangni, Jianqiao and Wang, Jialei and Liu, Ji and Zhang, Tong},
  journal={Advances in Neural Information Processing Systems},
  volume={31},
  year={2018}
}

@inproceedings{wang2017efficient,
  title={Efficient distributed learning with sparsity},
  author={Wang, Jialei and Kolar, Mladen and Srebro, Nathan and Zhang, Tong},
  booktitle={International conference on machine learning},
  pages={3636--3645},
  year={2017},
  organization={PMLR}
}

@article{lin2017deep,
  title={Deep gradient compression: Reducing the communication bandwidth for distributed training},
  author={Lin, Yujun and Han, Song and Mao, Huizi and Wang, Yu and Dally, William J},
  journal={arXiv preprint arXiv:1712.01887},
  year={2017}
}

@inproceedings{koloskova2019decentralized,
  title={Decentralized stochastic optimization and gossip algorithms with compressed communication},
  author={Koloskova, Anastasia and Stich, Sebastian and Jaggi, Martin},
  booktitle={International Conference on Machine Learning},
  pages={3478--3487},
  year={2019},
  organization={PMLR}
}

@inproceedings{koloskova2020unified,
  title={A unified theory of decentralized sgd with changing topology and local updates},
  author={Koloskova, Anastasia and Loizou, Nicolas and Boreiri, Sadra and Jaggi, Martin and Stich, Sebastian},
  booktitle={International Conference on Machine Learning},
  pages={5381--5393},
  year={2020},
  organization={PMLR}
}

@article{ryabinin2020towards,
  title={Towards crowdsourced training of large neural networks using decentralized mixture-of-experts},
  author={Ryabinin, Max and Gusev, Anton},
  journal={Advances in Neural Information Processing Systems},
  volume={33},
  pages={3659--3672},
  year={2020}
}

@article{diskin2021distributed,
  title={Distributed deep learning in open collaborations},
  author={Diskin, Michael and Bukhtiyarov, Alexey and Ryabinin, Max and Saulnier, Lucile and Sinitsin, Anton and Popov, Dmitry and Pyrkin, Dmitry V and Kashirin, Maxim and Borzunov, Alexander and Villanova del Moral, Albert and others},
  journal={Advances in Neural Information Processing Systems},
  volume={34},
  pages={7879--7897},
  year={2021}
}

@article{dettmers20218,
  title={8-bit optimizers via block-wise quantization},
  author={Dettmers, Tim and Lewis, Mike and Shleifer, Sam and Zettlemoyer, Luke},
  journal={arXiv preprint arXiv:2110.02861},
  year={2021}
}

@article{liang2024torchtitan,
  title={TorchTitan: One-stop PyTorch native solution for production ready LLM pre-training},
  author={Liang, Wanchao and Liu, Tianyu and Wright, Less and Constable, Will and Gu, Andrew and Huang, Chien-Chin and Zhang, Iris and Feng, Wei and Huang, Howard and Wang, Junjie and others},
  journal={arXiv preprint arXiv:2410.06511},
  year={2024}
}

@misc{deepseek2023,
  title={{DeepSeek LLMs}},
  author={{DeepSeek AI Team}},
  year={2023},
  url={https://deepseek.com/},
}

@article{hoffmanntraining,
  title={Training Compute-Optimal Large Language Models},
  author={Hoffmann, Jordan and Borgeaud, Sebastian and Mensch, Arthur and Buchatskaya, Elena and Cai, Trevor and Rutherford, Eliza and de Las Casas, Diego and Hendricks, Lisa Anne and Welbl, Johannes and Clark, Aidan and others}
}
}

\newpage
\appendix

\begin{center}
    {\LARGE \textbf{Appendix}}
\end{center}

\section{Gradient compression in backpropagation}
\label{sec:abl_gradients}
In this section, we show that compressing the gradients by projecting them on to $\mathcal{S}$ does not induce any approximation error.

\label{sec:abl_gradient}
Let \(\nabla_l(\X^{l+1})\) denote the gradient of the loss with respect to \(\X^{l+1}\), the output of layer \(l\). The gradient with respect to \(\W^l_{p_2}\) is then
\begin{equation}
    \nabla_l(\W^l_{p_2}) 
    \;=\; (\X^l_{\text{hidden}})^\top \, \nabla_l(\X^{l+1}).
\end{equation}
The residual gradient from the skip connection is given by
\begin{equation}
    \bigl(\nabla_l(\X^l_{\text{attn}})\bigr)^{\text{residual}} 
    \;=\; \nabla_l(\X^{l+1}).
\end{equation}
Hence, the update rule for \(\W^l_{p_2}\) becomes
\begin{equation}
    \W^l_{p_2}(t+1) 
    \;=\; \W^l_{p_2}(t) \;-\; \gamma \,\bigl(\X^l_{\text{hidden}}\bigr)^\top \,\nabla_l(\X^{l+1}),
\end{equation}
    
where \(\gamma\) is the learning rate, and \(t\) denotes the training timestep.

If we project \(\nabla_l(\X^{l+1})\) onto \(\mathrm{Col}(\U_k) = \mathcal{S}\), the update can be written as
\begin{equation}
    \W^l_{p_2}(t+1) 
    \;=\; \W^l_{p_2}(t) 
    \;-\; \gamma \,\bigl(\X^l_{\text{hidden}}\bigr)^\top 
                 \bigl(\nabla_l(\X^{l+1}) \,\U_k \,\U_k^\top\bigr).
\end{equation}
If \(\gamma\) acts as a row-wise constant, then it is straightforward to see that if 
\(\mathrm{Row}\bigl(\W^l_{p_2}(t)\bigr) \,\subseteq\, \mathcal{S}\),
it follows that 
\(\mathrm{Row}\bigl(\W^l_{p_2}(t+1)\bigr) \,\subseteq\, \mathcal{S}\)
as well, because vector spaces are closed under linear operations. 

Recall that we modify AdamW so that its adaptive learning rate is constant across each row (see Section~\ref{sec:adam}), ensuring this property holds. \textbf{Therefore, by induction, if 
\(\mathrm{Row}\bigl(\W^l_{p_2}(0)\bigr) \,\subseteq\, \mathcal{S}\) 
at initialization, then 
\(\mathrm{Row}\bigl(\W^l_{p_2}(t)\bigr) \,\subseteq\, \mathcal{S}\)
for all subsequent updates when the incoming gradients \(\nabla_l(\X^{l+1})\) are projected onto \(\mathcal{S}\)}, removing the need for iterative projection of $\W_{p_2}^l$ on to $\mathcal{S}$.

Continuing backpropagation, the gradient with respect to \(\X^l_{\text{hidden}}\) is
\begin{equation}
    \nabla_l\bigl(\X^l_{\text{hidden}}\bigr)  
    \;=\; \nabla_l\bigl(\X^{l+1}\bigr)\,\U_k\,\U_k^\top \,\bigl(\W^l_{p_2}\bigr)^\top.
\end{equation}
Since \(\mathrm{Row}\bigl(\W^l_{p_2}\bigr) = \mathcal{S}\), we can write 
\(\W^l_{p_2} = \W^l_{p_2}\,\U_k\,\U_k^\top\). Substituting this in, we get
\begin{align}
    \nabla_l\bigl(\X^l_{\text{hidden}}\bigr)
    &= \nabla_l\bigl(\X^{l+1}\bigr)\,\U_k\,\U_k^\top 
       \Bigl(\W^l_{p_2}\,\U_k\,\U_k^\top\Bigr)^\top \\
    &= \nabla_l\bigl(\X^{l+1}\bigr)\,\U_k\,\U_k^\top\,\U_k\,\U_k^\top 
       \bigl(\W^l_{p_2}\bigr)^\top.
\end{align}
Because \(\U_k\) is orthonormal, \(\U_k^\top\,\U_k = \I\), so
\begin{align}
    \nabla_l\bigl(\X^l_{\text{hidden}}\bigr)
    &= \nabla_l\bigl(\X^{l+1}\bigr)\,\U_k\,\U_k^\top 
       \bigl(\W^l_{p_2}\bigr)^\top \\
    &= \nabla_l\bigl(\X^{l+1}\bigr)\,\bigl(\W^l_{p_2}\,\U_k\,\U_k^\top\bigr)^\top \\
    &= \nabla_l\bigl(\X^{l+1}\bigr)\,\bigl(\W^l_{p_2}\bigr).
\end{align}
This shows that projecting \(\nabla_l\bigl(\X^{l+1}\bigr)\) onto \(\mathcal{S}\) does not introduce any approximation error in \(\nabla_l\bigl(\X^l_{\text{hidden}}\bigr)\).

\smallskip

Similarly, the remaining gradients are computed as follows. First,
\begin{equation}
    \nabla_l\bigl(\W^l_1\bigr)
    \;=\; \bigl(\X^l_{\text{attn}}\bigr)^\top \,\Bigl(\nabla_l\bigl(\X^l_{\text{hidden}}\bigr) \,\circ\, \nabla f_{\text{relu}}\Bigr),
\end{equation}
where \((\cdot \circ \cdot)\) denotes elementwise (Hadamard) multiplication. The gradient for \(\X^l_{\text{attn}}\) then becomes
\begin{align}
    \nabla_l\bigl(\X^l_{\text{attn}}\bigr) 
    &= \Bigl(\nabla_l\bigl(\X^l_{\text{hidden}}\bigr) \,\circ\, \nabla f_{\text{relu}}\Bigr)\,\bigl(\W^l_1\bigr)^\top 
       \;+\; \bigl(\nabla_l\bigl(\X^l_{\text{attn}}\bigr)\bigr)^{\text{residual}} \\
    &= \Bigl(\nabla_l\bigl(\X^l_{\text{hidden}}\bigr) \,\circ\, \nabla f_{\text{relu}}\Bigr)\,\bigl(\W^l_1\bigr)^\top 
       \;+\; \nabla_l\bigl(\X^{l+1}\bigr).
\end{align}
Next,
\begin{align}
    \nabla_l\bigl(\W^l_{p_1}\bigr)
    &= \bigl(\X^l_{\text{concat}}\bigr)^\top\,\nabla_l\bigl(\X^l_{\text{attn}}\bigr)\\
    &= \bigl(\X^l_{\text{concat}}\bigr)^\top \,\Bigl(\bigl(\nabla_l\bigl(\X^l_{\text{hidden}}\bigr)\,\circ\,\nabla f_{\text{relu}}\bigr)\,\bigl(\W^l_1\bigr)^\top 
       \;+\; \nabla_l\bigl(\X^{l+1}\bigr)\Bigr).
\end{align}
Recall that \(\mathrm{Row}\bigl(\W^l_1\bigr) \subseteq \mathcal{S}\). Therefore, if \(\nabla_l\bigl(\X^{l+1}\bigr)\) is projected onto \(\mathcal{S}\), it follows that 
\(\mathrm{Row}\bigl(\nabla_l\bigl(\W^l_{p_1}\bigr)\bigr)\,\subseteq\,\mathcal{S}\).

\smallskip

Finally, the update rule for \(\W^l_{p_1}\) is
\begin{equation}
    \W^l_{p_1}(t+1)
    \;=\; \W^l_{p_1}(t)\;-\;\gamma\,\nabla_l\bigl(\W^l_{p_1}(t)\bigr),
\end{equation}

The gradient updates for \(\X^l_{\text{concat}}\) involve
\begin{equation}
    \X^l_{\text{concat}} 
    \;=\; \X^l_{\text{attn}}\,\bigl(\W^l_{p_1}\bigr)^\top.
\end{equation}
Because all gradients up to \(\X^l_{\text{concat}}\) are preserved without loss, it follows via the chain rule that the gradient flow to the blocks below \(\X^l_{\text{concat}}\) is also lossless. Consequently, when compressing \(\nabla_L(\X^{l+1})\) as
\begin{align}\label{eq:grad_compress}
    \bigl(\nabla_L (\X^{l+1})\bigr)_{\text{compressed}} 
    \;=\; \nabla_L\bigl(\X^{l+1}\bigr)\,\U_k 
    \;\in\;\mathbb{R}^{b \times n \times k},
\end{align}
we can fully recover it in the previous layer:
\begin{align}
    \bigl(\nabla_L (\X^{l+1})\bigr)_{\text{recovered}} 
    &= \bigl(\nabla_L (\X^{l+1})\bigr)_{\text{compressed}}\,\U_k^\top \\ 
    &= \nabla_L\bigl(\X^{l+1}\bigr),
\end{align}
\noindent thereby incurring no approximation error. Notably, since \(k \ll d\), the compressed gradient \(\bigl(\nabla_L (\X^{l+1})\bigr)_{\text{compressed}}\) is significantly lower-dimensional than \(\nabla_L (\X^{l+1})\), yielding substantial communication savings over low-bandwidth links. This shows that by projecting gradients onto the same subspace used for forward-pass compression,  we can achieve \emph{lossless gradient compression} during backpropagation. 
This strategy further eliminates the need for frequent projections of \(\W^l_{p_2}\) 
maintaining its confinement to the intended subspace.



\section{On the error accumulation of lossy compressions}
\label{sec:abl_error_accum}

A key difference between Distributed Data Parallel (DDP) training and Model Parallel (MP) training lies in how gradients are exchanged and how compression is applied. In DDP, gradients of model parameters are exchanged after each training step, enabling compression to be applied across the entire gradient vector in one operation. In contrast, PP training requires the exchange of activations and activation gradients between model partitions, leading to a layer-wise compression approach. Unlike DDP, where compression affects the gradient as a whole, MP training introduces independent compression errors at each layer. These errors accumulate progressively during training, impacting gradient computations layer by layer and posing unique challenges in maintaining model accuracy and stability.

Given these distinctions, it is critical to analyze MP training within the constraints of activation gradient compression. The layer-by-layer compression in MP can lead to compounded errors that significantly influence convergence behavior and overall model performance. The following result focuses on investigating the relative error with respect to the weight gradients, occurred by a compression error induced by a particular layer, on an arbitrary layer below it. 

\begin{theorem}
\label{th:error_accumulation}

Consider a feedforward neural network with $L$ layers, where layer $l$ applies a (differentiable) function 
\[
   x_{l+1} \;=\; f_{l}(x_l),
   \quad
   l = 1, \dots, L.
\]
Let $\nabla_L(x_l)$ denote the gradient of the final loss $\mathcal{L}$ with respect to the layer’s input $x_l$.  
Suppose that:
\begin{enumerate}
    \item The spectral norm of the Jacobian $\nabla f_l(x_l)$ is bounded above by $\nu > 0$ for all $l$, i.e.\ $\|\nabla f_l(x_l)\|\le \nu$.
    \item In backpropagation, an additional error $e_l$ is introduced at each layer $l$, with $\|e_l\|\le e$ for some constant $e>0$.
\end{enumerate}
Define $\varepsilon_l$ to be the \emph{cumulative} error in the gradient at layer $l$. Then for $\nu>1$, $\varepsilon_l$ can grow exponentially with the total number of layers $L$; in particular,
\[
   \|\varepsilon_l\|
   \;\;\le\;
   e\,\frac{\nu^{\,L - l + 1}\;-\;1}{\nu-1},
\]
which is an exponential function of $L$ when $\nu>1$.

\end{theorem}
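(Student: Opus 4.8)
The plan is to unroll the backward recursion for the gradient and track how the per-layer errors $e_l$ propagate downward. In exact backpropagation we have $\nabla_L(x_l) = \nabla f_l(x_l)^\top \, \nabla_L(x_{l+1})$, so the Jacobian at layer $l$ acts on the gradient coming from above. When an error $e_l$ is injected at layer $l$, the computed (perturbed) gradient satisfies a recursion of the form $\tilde{g}_l = \nabla f_l(x_l)^\top \tilde{g}_{l+1} + e_l$, and hence the cumulative error $\varepsilon_l = \tilde{g}_l - \nabla_L(x_l)$ obeys $\varepsilon_l = \nabla f_l(x_l)^\top \varepsilon_{l+1} + e_l$, with $\varepsilon_{L}$ essentially $e_L$ (or zero plus the top-layer injection, depending on indexing). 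This is a linear first-order recursion running from $l = L$ down to the layer of interest.

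Next I would take norms and apply the triangle inequality together with submultiplicativity of the spectral norm and the bound $\|\nabla f_l(x_l)\| \le \nu$. This gives $\|\varepsilon_l\| \le \nu \|\varepsilon_{l+1}\| + e$. Iterating this scalar recursion from $l$ up to $L$ yields $\|\varepsilon_l\| \le e\,(1 + \nu + \nu^2 + \cdots + \nu^{\,L-l}) = e \sum_{j=0}^{L-l} \nu^{j}$, and summing the finite geometric series gives $\|\varepsilon_l\| \le e\,\dfrac{\nu^{\,L-l+1}-1}{\nu-1}$ for $\nu \neq 1$. When $\nu > 1$ the dominant term is $e\,\nu^{\,L-l+1}/(\nu-1)$, which is exponential in $L$, establishing the claim. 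The case $\nu = 1$ (not needed here) would instead give the linear bound $\|\varepsilon_l\| \le e(L-l+1)$, recovered as the limit.

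The main subtlety — rather than a genuine obstacle — is setting up the error recursion cleanly so that a single injected error $e_l$ at each layer is distinguished from its downstream amplification, and making sure the indexing convention for where errors enter (at the top layer versus at every layer including $l$) matches the stated bound's exponent $L - l + 1$. One should also note that the bound is an existence/worst-case statement: it shows the error \emph{can} grow exponentially, realized when the Jacobians align with the error directions and the $e_l$ add constructively; the spectral-norm bounds make the inequality tight up to constants in the adversarial case. Everything else is the routine geometric-series computation, which I would not belabor. This also makes precise the claim in Statement~\ref{st:1} that lossy inter-layer compression is unsuitable for deep model-parallel training, in contrast to the lossless scheme developed in the body of the paper.
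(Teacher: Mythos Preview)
Your proposal is correct and follows essentially the same argument as the paper: set up the error recursion $\varepsilon_l = \nabla f_l(x_l)\,\varepsilon_{l+1} + e_l$ (the paper uses right-multiplication rather than your transpose convention, but this is only notational), take norms with the spectral bound to get $\|\varepsilon_l\| \le \nu\|\varepsilon_{l+1}\| + e$, and sum the resulting geometric series. Your remarks on indexing, the $\nu=1$ limit, and worst-case tightness are sound and go slightly beyond what the paper spells out.
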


\begin{proof}
Recall the usual chain rule for the gradient of the final loss $\mathcal{L}$ with respect to the input $x_l$ of layer $l$:
\[
  \nabla_L(x_l)
  \;=\;
  \nabla_L\bigl(x_{l+1}\bigr)\;\nabla f_l\bigl(x_l\bigr).
\]

Assume that at each layer we introduce an error in the gradient. Let 
\[
  \varepsilon_l
  \;=\;
  \bigl(\text{true gradient at layer } l\bigr)
  \;-\;
  \bigl(\text{observed/propagated gradient at layer } l\bigr).
\]
When moving from layer $l$ to layer $l-1$, the error recursion becomes:
\[
  \varepsilon_{l-1}
  \;=\;
  \varepsilon_{l}\,\nabla f_{l-1}(x_{l-1})
  \;+\;
  e_{l-1},
\]
where $e_{l-1}$ is the \emph{newly introduced} error at layer $l-1$.  Unfolding this backwards gives a general expansion:
\[
  \varepsilon_l
  \;=\;
  e_l
  \;+\;
  \sum_{j=l+1}^{L} \Biggl(\,\prod_{i=l}^{\,j-1} \nabla f_i(x_i)\Biggr)\,e_j.
\]

Taking the norm and using the assumption $\|\nabla f_i(x_i)\|\le \nu$ and $\|e_j\|\le e$, we get:
\[
  \|\varepsilon_l\|
  \;\le\;
  \sum_{j=l}^{L}
    \Bigl(\prod_{i=l}^{\,j-1} \|\nabla f_i(x_i)\|\Bigr)\;\|\;e_j\|
  \;\le\;
  \sum_{j=l}^{L} \nu^{\,j-l}\;e
  \;=\;
  e \sum_{k=0}^{L-l} \nu^{\,k},
\]
where $k = j-l$.  This geometric sum is
\[
  \sum_{k=0}^{L-l} \nu^{k}
  \;=\;
  \frac{\nu^{\,L-l+1}-1}{\nu-1}
  \quad(\text{valid for } \nu \neq 1).
\]
Hence,
\[
  \|\varepsilon_l\|
  \;\le\;
  e\,\frac{\nu^{\,L - l + 1}-1}{\nu-1}.
\]
Since $\nu>1$, $\nu^{\,L - l + 1}$ grows exponentially in $L$.  
\end{proof}

This is an important result. In particular, this result indicates that in cases where the spectral norm of the weight matrices is large enough, the upper bound for the error for the lower layers can grow exponentially as the depth of the network increases.

\section{Vanishing of gradient updates outside a subspace}
\label{sec:app:vanishing_grad}

The result in this section highlights an important property of AdamW: the optimizer’s weight updates progressively align with the subspace in which the gradients are constrained. In particular, if the gradients of a weight matrix lie within a specific low-dimensional subspace \(\mathcal{S}\), the updates made to the weight matrix are increasingly confined to \(\mathcal{S}\) over time. This behavior ensures that any weight components orthogonal to \(\mathcal{S}\) are asymptotically suppressed, with the weight matrix ultimately converging to \(\mathcal{S}\). To this end, we first prove Lemma \ref{thm:distortion-bound} and then Theorem \ref{th:weight_update}.

\begin{lemma}[Orthogonal Distortion Bound]\label{thm:distortion-bound}
  Let $a = (a_1,\dots,a_n)\in \mathbb{R}^n$ be a nonzero vector, and let $v_1,\dots,v_n>0$ be positive scalars.  
  Define 
  \[
    m \;=\;\min_{1\le i\le n}\, v_i,
    \quad
    M \;=\;\max_{1\le i\le n}\, v_i,
    \quad
    \Gamma \;=\;\frac{M}{m}.
  \]
  Form the vector
  \[
    b \;=\;\bigl(v_1 a_1,\;v_2 a_2,\;\dots,\;v_n a_n\bigr)\in \mathbb{R}^n,
  \]
  and let $b_{\perp}$ be the component of $b$ orthogonal to $a$.  Then the following two bounds hold:
  \[
    \|b_{\perp}\|
    \;\le\;
    \frac{M - m}{2}\,\|a\|
    \;=\;
    \frac{m\,(\Gamma - 1)}{2}\,\|a\|
    \quad\text{and}\quad
    \|b_{\perp}\|
    \;\le\;
    \frac{\Gamma - 1}{2}\,\|b\|.
  \]
\end{lemma}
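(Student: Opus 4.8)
The plan is to reduce the bound to a one‑variable optimization. Write $a\in\mathbb{R}^n$ nonzero and $b = (v_1 a_1,\dots,v_n a_n)$. The orthogonal component is $b_\perp = b - \frac{\langle a,b\rangle}{\|a\|^2}\,a$, so $\|b_\perp\|^2 = \|b\|^2 - \frac{\langle a,b\rangle^2}{\|a\|^2}$. The key observation is a shift trick: for \emph{any} scalar $c$, replacing $b$ by $b - c\,a$ does not change $b_\perp$ (subtracting a multiple of $a$ only changes the component along $a$). Concretely $b - c\,a = \bigl((v_1 - c)a_1,\dots,(v_n-c)a_n\bigr)$, and hence $\|b_\perp\| \le \|b - c\,a\| \le \max_i |v_i - c|\;\|a\|$ for every $c$. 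Optimizing over $c$, the minimax choice is $c = \tfrac{M+m}{2}$, which gives $\max_i|v_i-c| \le \tfrac{M-m}{2}$, and therefore $\|b_\perp\| \le \tfrac{M-m}{2}\|a\| = \tfrac{m(\Gamma-1)}{2}\|a\|$. This is the first bound.

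For the second bound I would instead compare to $\|b\|$ rather than $\|a\|$. Observe that $\|b\|^2 = \sum_i v_i^2 a_i^2 \ge m^2 \sum_i a_i^2 = m^2 \|a\|^2$, so $\|a\| \le \tfrac{1}{m}\|b\|$. Substituting into the first bound yields $\|b_\perp\| \le \tfrac{M-m}{2}\cdot\tfrac{1}{m}\|b\| = \tfrac{\Gamma-1}{2}\|b\|$, as claimed. Both inequalities are then immediate consequences of the shift trick plus the elementary norm comparison between $a$ and $b$.

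The only subtlety — and the step I expect to need the most care — is verifying the minimax step $\min_c \max_{1\le i\le n} |v_i - c| = \tfrac{M-m}{2}$, attained at $c=\tfrac{M+m}{2}$. This is standard: the function $c\mapsto \max_i |v_i-c|$ is convex and piecewise linear, dominated by the two extreme constraints $|m-c|$ and $|M-c|$ (since all $v_i\in[m,M]$), and balancing these two gives $c=\tfrac{M+m}{2}$ with common value $\tfrac{M-m}{2}$. I would also note the degenerate case $a=0$ is excluded by hypothesis, and if $M=m$ (all $v_i$ equal) then $b$ is a scalar multiple of $a$, $b_\perp = 0$, and both bounds hold trivially. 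Everything else is routine algebra, so no genuine obstacle arises.
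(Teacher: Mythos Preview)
Your proof is correct, and it takes a genuinely different route from the paper's own argument. The paper writes $\|b_\perp\|^2 = \|a\|^2\bigl(\sum_i v_i^2 s_i - (\sum_i v_i s_i)^2\bigr)$ with weights $s_i = a_i^2/\|a\|^2$, recognizes the bracket as the variance of the $v_i$ under the probability distribution $\{s_i\}$, and then invokes Popoviciu's inequality (variance of a variable in $[m,M]$ is at most $(M-m)^2/4$). You instead exploit the invariance of $b_\perp$ under shifts $b\mapsto b-ca$ and bound $\|b-ca\|\le \max_i|v_i-c|\,\|a\|$ directly, choosing the Chebyshev center $c=(M+m)/2$. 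Your argument is more elementary---it avoids the probabilistic detour and the appeal to Popoviciu---while the paper's version makes the variance structure explicit, which some readers may find illuminating. The two are closely related under the hood: your shift trick is essentially the standard proof of Popoviciu's inequality, applied in situ rather than quoted as a lemma. The second bound ($\|a\|\le \|b\|/m$) is handled identically in both proofs.
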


\begin{proof}
  First, decompose $b$ into its projection onto $a$ plus its component orthogonal to $a$:
  \[
    b
    \;=\;
    \mathrm{proj}_a(b)\;+\;b_{\perp},
    \quad
    \text{where}
    \quad
    b_{\perp} \;\perp\; a.
  \]
  The norm of $b_{\perp}$ can be expressed by the well‐known identity
  \[
    \|b_{\perp}\|^2
    \;=\;
    \|b\|^2
    \;-\;
    \frac{(b \cdot a)^2}{\|a\|^2}.
  \]
  Since $b=(v_1 a_1,\dots,v_n a_n)$, we have
  \[
    \|b\|^2
    \;=\;
    \sum_{i=1}^n (v_i\,a_i)^2
    \;=\;
    \sum_{i=1}^n v_i^2\,a_i^2,
    \quad
    \text{and}
    \quad
    b \cdot a
    \;=\;
    \sum_{i=1}^n v_i\,a_i^2.
  \]
  Thus
  \[
    \|b_{\perp}\|^2
    \;=\;
    \sum_{i=1}^n v_i^2\,a_i^2
    \;-\;
    \frac{\Bigl(\sum_{i=1}^n v_i\,a_i^2\Bigr)^2}{\sum_{i=1}^n a_i^2}.
  \]
  Let $A^2 = \|a\|^2 = \sum_{i=1}^n a_i^2$.  Define weights 
  \[
    s_i \;=\;\frac{a_i^2}{A^2}
    \quad\text{so that}\quad
    s_1 + s_2 + \cdots + s_n \;=\;1.
  \]
  Then
  \[
    \sum_{i=1}^n v_i^2 a_i^2
    \;=\;
    A^2\;\sum_{i=1}^n v_i^2\,s_i,
    \quad
    \sum_{i=1}^n v_i\,a_i^2
    \;=\;
    A^2\;\sum_{i=1}^n v_i\,s_i.
  \]
  Hence
  \[
    \|b_{\perp}\|^2
    \;=\;
    A^2
    \biggl[\sum_{i=1}^n v_i^2\,s_i
      \;-\;\Bigl(\sum_{i=1}^n v_i\,s_i\Bigr)^2\biggr].
  \]
  Recognize that 
  \[
    \sum_{i=1}^n v_i^2\,s_i
    \;-\;
    \Bigl(\sum_{i=1}^n v_i\,s_i\Bigr)^2
  \]
  is precisely the variance $\mathrm{Var}_s(V)$ of the values $v_i$ with respect to the probability distribution $\{s_i\}$.  Since $m \le v_i \le M$ for all~$i$, the variance of any random variable confined to $[m,M]$ is at most $\frac{(M - m)^2}{4}$.  (This is realized, for instance, by taking a two‐point distribution at $m$ and $M$ with equal probabilities.)  Therefore,
  \[
    \sum_{i=1}^n v_i^2\,s_i
    \;-\;
    \Bigl(\sum_{i=1}^n v_i\,s_i\Bigr)^2
    \;\le\;
    \frac{(M - m)^2}{4}.
  \]
  Consequently,
  \[
    \|b_{\perp}\|^2
    \;\le\;
    A^2 \cdot \frac{(M - m)^2}{4}
    \;=\;
    \frac{(M - m)^2}{4}\;\|a\|^2,
  \]
  and hence
  \[
    \|b_{\perp}\|
    \;\le\;
    \frac{M - m}{2}\;\|a\|.
  \]
  Substituting $M = m\,\Gamma$ (so $M - m = m(\Gamma-1)$) gives 
  \[
    \|b_{\perp}\|
    \;\le\;
    \frac{m\,(\Gamma - 1)}{2}\,\|a\|.
  \]
  For the alternative form in terms of $\|b\|$, note that
  \[
    \|b\|
    \;=\;
    \sqrt{\sum_{i=1}^n (v_i a_i)^2}
    \;\ge\;
    \sqrt{\sum_{i=1}^n m^2\,a_i^2}
    \;=\;
    m\,\|a\|.
  \]
  Hence $\|a\|\le \frac{1}{m}\,\|b\|$, and
  \[
    \|b_{\perp}\|
    \;\le\;
    \frac{(M - m)}{2}\,\|a\|
    \;\le\;
    \frac{(M - m)}{2\,m}\,\|b\|
    \;=\;
    \frac{\Gamma - 1}{2}\,\|b\|.
  \]
  This completes the proof.
\end{proof}

\begin{theorem}Let \( \mathbf{W}(t) \in \mathbb{R}^{m \times n} \) be the weight matrix of a neural network optimized using AdamW at the \( t \)-th iteration. Suppose the rows of the gradient matrix \( \nabla_L (\mathbf{W}(t)) \) lie within a subspace \( \mathcal{S} \subseteq \mathbb{R}^n \). Let $\Delta_t$ be defined as

\begin{align}
    \W(t+1) = \W(t) - \eta\big(\Delta_t + \alpha \W(t)\big)
\end{align}

where $\eta$ is the learning rate and,

\begin{align}
\Delta_t = \M_t \oslash(\sqrt{\V} + \epsilon) 
\label{eq:update}
\end{align}

where $(\cdot \oslash \cdot)$ denotes the element wise Hadamard division and $(\alpha, \epsilon) > 0$ where : $\alpha$ is the weight decay and $\epsilon$ is a constant added to avoid division by 0. $\M_t$ is the first momentum matrix defined as 

\begin{equation}
    \M_t = \beta_1\M_{t-1} + (1-\beta_1)\nabla_L(\W(t))
\end{equation}

and $\V$ is the second moment,

\begin{equation}
     \V_t = \beta_2\V_{t-1} + (1-\beta_2)\nabla_L(\W(t))^2
\end{equation}

Also, let \( \Delta_t^\perp \) be the matrix whose rows consist of the components orthogonal to \(\mathcal{S}\) from the rows of \( \Delta_t \). Assume that $g_t^2$ follows a sub-Gaussian distribution with a variance proxy $(\sigma(t))^2$ where $g_t$ is any element of \( \nabla_L (\mathbf{W}(t)) \). Then, with probability of at least $1-\delta$, $\lim_{t \to \infty} \frac{\| \Delta_t^\perp \|}{\| \Delta_t \|} = 0$ if  $\lim_{t \to \infty} \sigma(t) = 0$.



\label{th:weight_update}
\end{theorem}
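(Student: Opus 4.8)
The plan is to track how each row of the AdamW update $\Delta_t$ decomposes relative to $\mathcal{S}$, and show the orthogonal part becomes negligible relative to the in-subspace part as $t\to\infty$. Fix a row index $i$. The key structural fact is that the $i$-th row of $\M_t$ lies in $\mathcal{S}$ for every $t$, since $\M_t$ is a convex combination of past gradients $\nabla_L(\W(s))$, each of whose rows lies in $\mathcal{S}$ by hypothesis, and $\mathcal{S}$ is closed under linear combinations. The only thing that pushes $\Delta_t$ out of $\mathcal{S}$ is the Hadamard division by $\sqrt{\V_t}+\epsilon$: if this rescaling vector were row-wise constant, $\Delta_t$ would stay in $\mathcal{S}$ exactly (this is precisely why the modified AdamW in Section~\ref{sec:adam} works). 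So the orthogonal component of row $i$ of $\Delta_t$ is exactly the ``distortion'' produced by coordinate-wise rescaling of the in-subspace vector $a := (\M_t)_{i,:}$ by the positive weights $v_j := 1/(\sqrt{(\V_t)_{i,j}}+\epsilon)$. Lemma~\ref{thm:distortion-bound} applies verbatim: writing $\Gamma_t^{(i)} = M_t^{(i)}/m_t^{(i)}$ for the ratio of the largest to smallest such weight in row $i$, we get
\[
\frac{\|(\Delta_t^\perp)_{i,:}\|}{\|(\Delta_t)_{i,:}\|} \;\le\; \frac{\Gamma_t^{(i)}-1}{2}.
\]
Summing appropriately over rows (or taking the max, depending on which matrix norm $\|\cdot\|$ denotes), it suffices to show $\Gamma_t^{(i)}\to 1$ in probability for each $i$; equivalently, that the spread of $\{(\V_t)_{i,j}\}_j$ collapses.

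The second step is to control $\V_t$. By unrolling the recursion, $(\V_t)_{i,j} = (1-\beta_2)\sum_{s\le t}\beta_2^{\,t-s} g_s^2$ where $g_s$ abbreviates the $(i,j)$ gradient entry at step $s$; it is a weighted average of the squared gradient history. The hypothesis is that each $g_s^2$ is sub-Gaussian with variance proxy $\sigma(s)^2\to 0$. The idea is: $(\V_t)_{i,j}$ concentrates around its mean $\bar v_t^{(i,j)} := (1-\beta_2)\sum_{s\le t}\beta_2^{\,t-s}\,\E[g_s^2]$, with fluctuations governed by $\sigma(s)$. Concretely, $(\V_t)_{i,j} - \bar v_t^{(i,j)}$ is a weighted sum of independent (or at least individually sub-Gaussian) centered terms, hence itself sub-Gaussian with variance proxy $O\big((1-\beta_2)^2\sum_s \beta_2^{2(t-s)}\sigma(s)^4\big)$, and since $\sigma(s)\to0$ the geometric weighting forces this proxy to vanish as $t\to\infty$. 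A union bound over the finitely many coordinates $j$ (and rows $i$) then gives, with probability at least $1-\delta$, that $|(\V_t)_{i,j}-\bar v_t^{(i,j)}|$ is uniformly small for all $j$ once $t$ is large. Meanwhile $\bar v_t^{(i,j)}\to 0$ as well, since $\E[g_s^2] \le \sigma(s)^2 + (\E g_s)^2$ and — here one needs the gradients themselves, not just their variance proxy, to shrink, which should follow from the same sub-Gaussian-with-vanishing-proxy assumption applied to $g_s$ (or be folded into the hypothesis). With $\epsilon>0$ fixed, each weight $v_j = 1/(\sqrt{(\V_t)_{i,j}}+\epsilon)$ then converges to $1/\epsilon$, uniformly in $j$, so $m_t^{(i)}$ and $M_t^{(i)}$ both tend to $1/\epsilon$ and $\Gamma_t^{(i)}\to 1$.

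Combining the two steps: for any $\eta>0$, with probability $\ge 1-\delta$ there is a $T$ such that for $t\ge T$ we have $\Gamma_t^{(i)}-1 < \eta$ for all $i$, hence $\|\Delta_t^\perp\|/\|\Delta_t\| < \eta/2$ by the Lemma, which is the claim $\lim_{t\to\infty}\|\Delta_t^\perp\|/\|\Delta_t\| = 0$. One technical point to handle: the Lemma requires $a=(\M_t)_{i,:}\neq 0$ (so the ratio is defined); the degenerate case where a momentum row is exactly zero can be excluded or treated separately since then that row's update is (almost) pure weight decay, which stays in $\mathcal{S}$ anyway.

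The main obstacle I expect is the concentration argument for $\V_t$ done \emph{uniformly enough in $t$}: the weighted sum defining $\V_t$ mixes early-iteration gradients (large $\sigma$) with late ones (small $\sigma$) through the geometric factor $\beta_2^{t-s}$, so one must argue carefully that the early, high-variance contributions are damped fast enough by $\beta_2^{t-s}$ while the recent, low-variance contributions are small because $\sigma(s)$ itself is small — splitting the sum at some intermediate index $s_0(t)\to\infty$ with $t-s_0(t)\to\infty$ is the natural device. A secondary subtlety is whether the hypothesis as stated ($g_t^2$ sub-Gaussian, $\sigma(t)\to 0$) is by itself enough to force $\E[g_t^2]\to 0$; if not, one either invokes an auxiliary assumption that the mean gradient vanishes, or notes that a sub-Gaussian \emph{nonnegative} variable with variance proxy $\to 0$ must have mean $\to 0$ as well (since its fluctuations vanish and it cannot be a constant bounded away from zero without contradicting the proxy bound on $\Pr[g_t^2 \le \E g_t^2 - c]$ for small $c$) — this needs to be spelled out but is elementary. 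Neither obstacle looks fatal; the whole argument is essentially ``momentum stays in $\mathcal{S}$; the adaptive denominator flattens; a flat denominator doesn't rotate anything out of $\mathcal{S}$,'' quantified by Lemma~\ref{thm:distortion-bound}.
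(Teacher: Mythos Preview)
Your proposal is correct and follows essentially the same route as the paper: both arguments observe that the rows of $\M_t$ remain in $\mathcal{S}$, apply Lemma~\ref{thm:distortion-bound} row-wise to bound $\|(\Delta_t^\perp)_{i,:}\|/\|(\Delta_t)_{i,:}\|$ by $(\Gamma_t-1)/2$, and then drive $\Gamma_t\to1$ by showing $\max_{i,j}(\V_t)_{i,j}\to0$ via the sub-Gaussian tail on $g_t^2$, a union bound over the $mn$ coordinates, and the geometric damping in the $\V_t$ recursion. Your treatment is in fact somewhat more careful than the paper's---you explicitly flag the splitting of the $\V_t$ sum, the degenerate $a=0$ case, and the passage from vanishing variance proxy to vanishing $\E[g_t^2]$---whereas the paper simply reads the hypothesis as a direct tail bound $P(g_t^2\ge x)\le 2\exp(-x^2/2\sigma(t)^2)$ and bounds $\max v_t$ in one stroke.
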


\begin{proof}
\label{proof:subspacebound}










 Since the first-moment estimate $\M_t$ in AdamW remains a (biased) linear combination of past gradients $\nabla_L (\W(t))$, its rows stays within \(\mathcal{S}\) (recall that vector spaces are closed under addition). However, the rows of second-moment estimate \(\V_t\) are not constant-value vectors, and thus, the element-wise division of $\M_t$ by $(\sqrt{\V} + \epsilon)$ distorts and pushes the rows of $\M_t$ outsied $\mathcal{S}$. Consequently, the rows of $\Delta_t$ are not confined to $\mathcal{S}$. 
Our goal is to show that \(\frac{\|\Delta_t^\perp\|}{\| \Delta_t \|}\) asymptotically goes to zero with the variance decay.

\paragraph{Step 1: Bounding the second-moment ratio.}

Since the distribution of $g_t^2$ is sub-Gaussian, then for each $g_t$, we have 

\begin{equation}
    P(g_t^2 \geq x) \leq 2\exp{\Big(-\frac{x^2}{2\sigma^2}\Big)}.
\end{equation}

Note that we drop the coordinate indexes $(i,j)$ from $g_t$ for brevity. Using a union bound over $m \times n$ coordinates and at any given time $t$, we have

\begin{equation}
    P(\max_{i,j} g_t^2 \geq x) \leq 2mn\exp{\Big(-\frac{x^2}{2(\sigma(t))^2}\Big)}
\end{equation}

Then, with probability at least $1-\delta$, we get

\begin{equation}
    \max_{i,j}(g_t^2)  \leq \sigma(t) \sqrt{2\ln \Big(\frac{2mn}{\delta}\Big)}
    \label{eq:maxgt}
\end{equation}

Since any element $v_t$ of $\V_t$ is a moving average of $g_t^2$, we can expand it as,

\begin{equation}
    v_t = \beta_2^{(t-1)}(1-\beta_2)g_1^2 + \beta_2^{(t-2)}(1-\beta_2)g_2^2 + \dots \beta_2^2(1-\beta_2)g_{t-2}^2 + \beta_2(1-\beta_2)g_{t-1}^2 + (1-\beta_2)g_t^2
\end{equation}

(assuming $v_0 = 0$). Note that for large $t$, we can have a $\tau(k)$ such that $v_t - \tau(k) < \mu$ for arbitrary small $\mu$, where 

\begin{equation}
    \tau(k) = \beta_2^{(t-k)}(1-\beta_2)g_k^2 + \beta_2^{(t-k-1)}(1-\beta_2)g_{k+1}^2 + \dots \beta_2^2(1-\beta_2)g_{t-2}^2 + \beta_2(1-\beta_2)g_{t-1}^2 + (1-\beta_2)g_t^2
\end{equation}

Plugging the above result to Eq. \ref{eq:maxgt},  with probability at least $1-\delta$, we again have

\begin{equation}
    \max_{i,j} [v_t]_{i,j} \leq \max_{t \in [t-k, T]} \sigma(t) \sqrt{2\ln \Big(\frac{2mn}{\delta}\Big)}
\end{equation}

Then,

\begin{align}
\label{eq:upper_bound}
    \frac{\sqrt{\max_{i,j} [v_t]_{i,j}} + \epsilon}{\sqrt{\min_{i,j} [v_t]_{i,j}} + \epsilon} \leq 1 +  \frac{\sqrt{\max_{i,j} [v_t]_{i,j}}}{\epsilon}
  \;\le\; 1 + \frac{\sqrt{\max_{t \in [t-k, T]} \sigma(t) \sqrt{2\ln \Big(\frac{2mn}{\delta}\Big)}}}{\epsilon},
\end{align}

since $\sqrt{\min_{i,j} [v_t]_{i,j}} = 0$.  Let $\kappa(t) = \frac{\sqrt{\max_{t \in [t-k, T]} \sigma(t) \sqrt{2\ln \Big(\frac{2mn}{\delta}\Big)}}}{\epsilon}$. Then, the coordinate-wise learning-rate scaling 
\(
   \frac{1}{\sqrt{v_t} + \varepsilon}
\)
cannot differ among coordinates by more than \(1 + \kappa(t)\) with a high probability. 





\paragraph{Step 2: Relating off-subspace updates to the ratio.}  Consider any row vector of $\Delta_t$ to be $(\Delta_t)_{i,:}$. Then, by Applying the update Eq.~\ref{eq:upper_bound} to Lemma 1, and by definition of $\Delta_t$, we have


\begin{align}
\label{eq:kappa_inequal}
     \|(\Delta_t)_{i,:}^{\perp}\| \leq \frac{\kappa(t)}{2}\| (\Delta_t)_{i,:} \|
\end{align}


\paragraph{Step 3: Taking the limit \(t \to \infty\).}
Because we have 

\begin{equation}
\lim_{t \to \infty} \sigma(t) =0,
\end{equation}

trivially, we also have

\begin{equation}
\lim_{T \to \infty} \max_{t \in [T-k, T]} \sigma(t) =0.
\end{equation}

Then, it follows from the definition of $\kappa(t)$ that 

\begin{equation}
\lim_{t \to \infty} \kappa(t) =0.
\end{equation}

Therefore, from Eq.~\ref{eq:kappa_inequal}, assuming $\|(\Delta_t)_{i,:}\| \neq 0$ we finally have

\begin{align}
  0 \leq  \lim_{t \to \infty} \frac{\|(\Delta_t)_{i,:}^{\perp}\|}{\|(\Delta_t)_{i,:}\|} \leq 0
\end{align}

\begin{align}
    =\lim_{t \to \infty} \frac{\|(\Delta_t)_{i,:}^{\perp}\|}{\|(\Delta_t)_{i,:}\|} = 0
\end{align}

 In simpler terms, the distortion of the rows of $\Delta_t$ outside \(\mathcal{S}\) is ultimately goes to zero. 

\end{proof}

\noindent{\textbf{Discussion:} Above theorem gives us an important insight. In particular, it ensures that if the gradients of an unrestricted network predominantly lie in a specific low-dimensional subspace $\mathcal{S}$, then the AdamW optimizer restricts updates outside that subspace. On the other hand, as training progresses, gradient norms decrease and become more uniform across coordinates, which has  been rigorously proved in previous works within reasonable bounds \cite{li2024convergence, zhang2024convergence, defossez2020simple}.   Consequently, the variance of the gradients also tends to diminish, justifying our assumption of $\lim_{t \to  \infty} \sigma(t) \to 0$. This reduction further curtails the extent of any \emph{off-subspace} updates, thereby indicating that AdamW naturally remains focused on the directions most important to learning.}

Further, empirical evidence supports the notion that gradients of unconstrained networks often lie predominantly within a low-dimensional subspace (we also empirically validate this in Fig.~\ref{fig:gradients}). This observation aligns with the hypothesis that many deep learning problems exhibit significant redundancy in parameter updates, with key directions of optimization being confined to a smaller subspace. By leveraging this inherent structure, it becomes possible to guide optimization dynamics in a manner that preserves computational efficiency while maintaining convergence. We conduct an extended discussion next.

\subsection{On the assumptions of Theorem \ref{th:weight_update}}
\label{sec:app_assumptions}

For Theorem \ref{th:weight_update}, we employ two assumptions 1) the gradients of weight matrices are confined to a low-dimensional subspace and 2) the variance of the gradients diminish as the model converges. Next, we provide justifications for these assumptions.

\subsubsection{ The gradients of weight matrices are confined to a low-dimensional subspace}

To empirically validate this behavior, we train an 8-layer network on C4 and track the stable rank progression of its projection matrices throughout training. As shown in Fig.~\ref{fig:gradients}, the gradients maintain an exceptionally low stable rank relative to the maximum possible rank from initialization to convergence. This confirms that the gradients primarily focus on a few dominant directions. 

\begin{figure*}[ht]
    \centering
    \begin{subfigure}[b]{0.25\textwidth}
        \includegraphics[width=\textwidth]{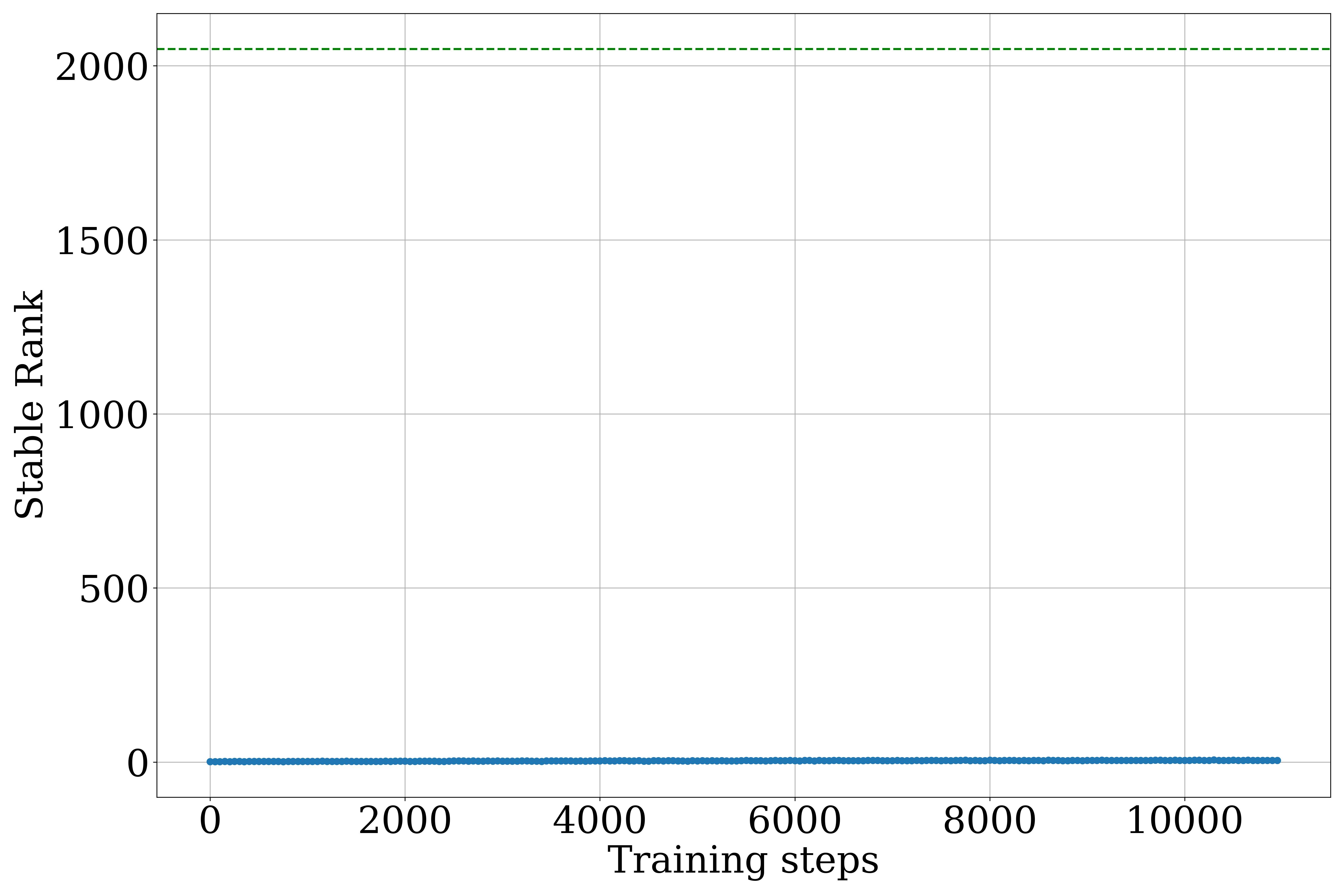}
        \caption{$\nabla_{l}\W^4_{p_2}$}
        \label{fig:sub1b}
    \end{subfigure}
    \hfill
    \begin{subfigure}[b]{0.25\textwidth}
        \includegraphics[width=\textwidth]{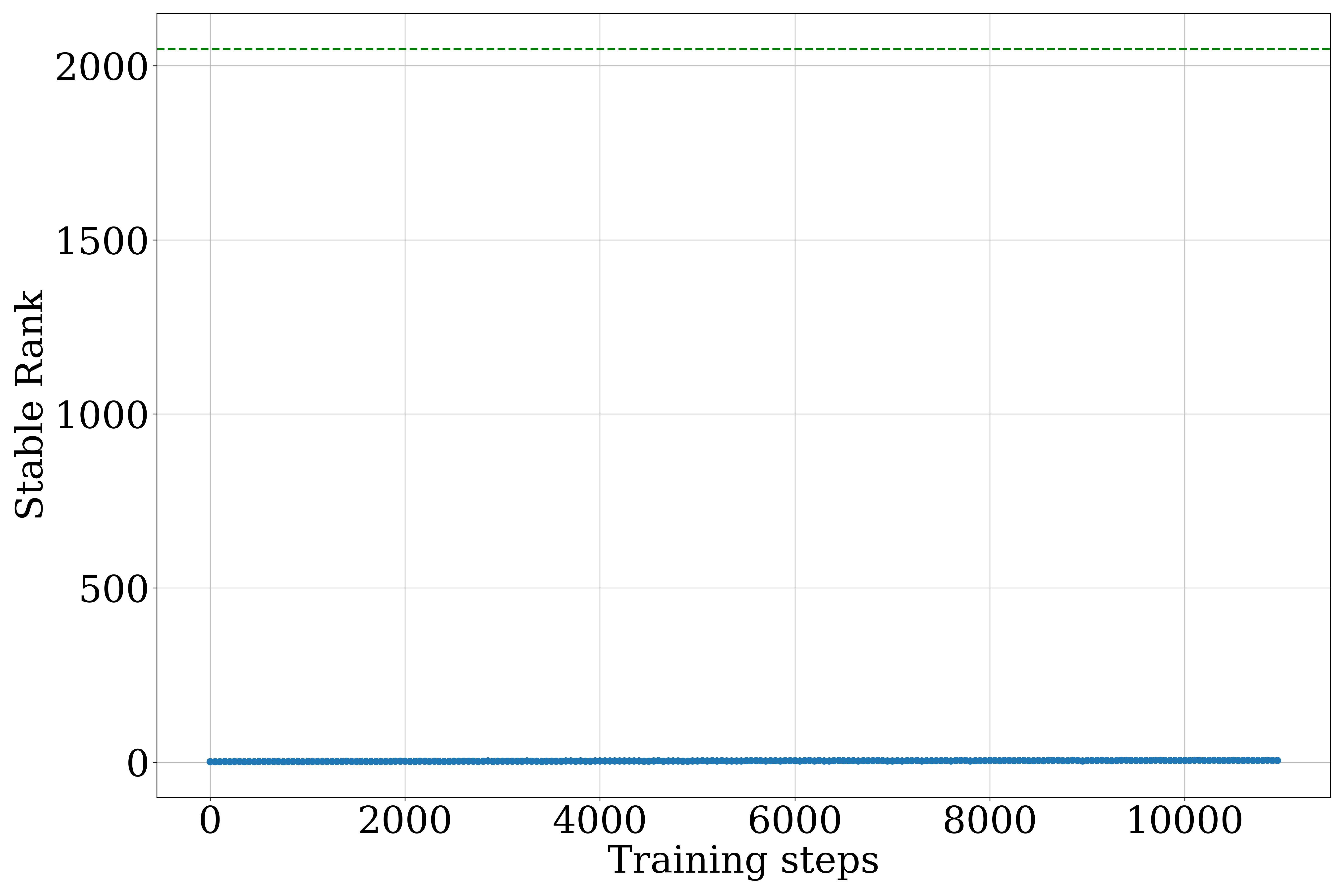}
        \caption{$\nabla_{l}\W^7_{p_2}$}
        \label{fig:sub2b}
    \end{subfigure}
    \hfill
    \begin{subfigure}[b]{0.25\textwidth}
        \includegraphics[width=\textwidth]{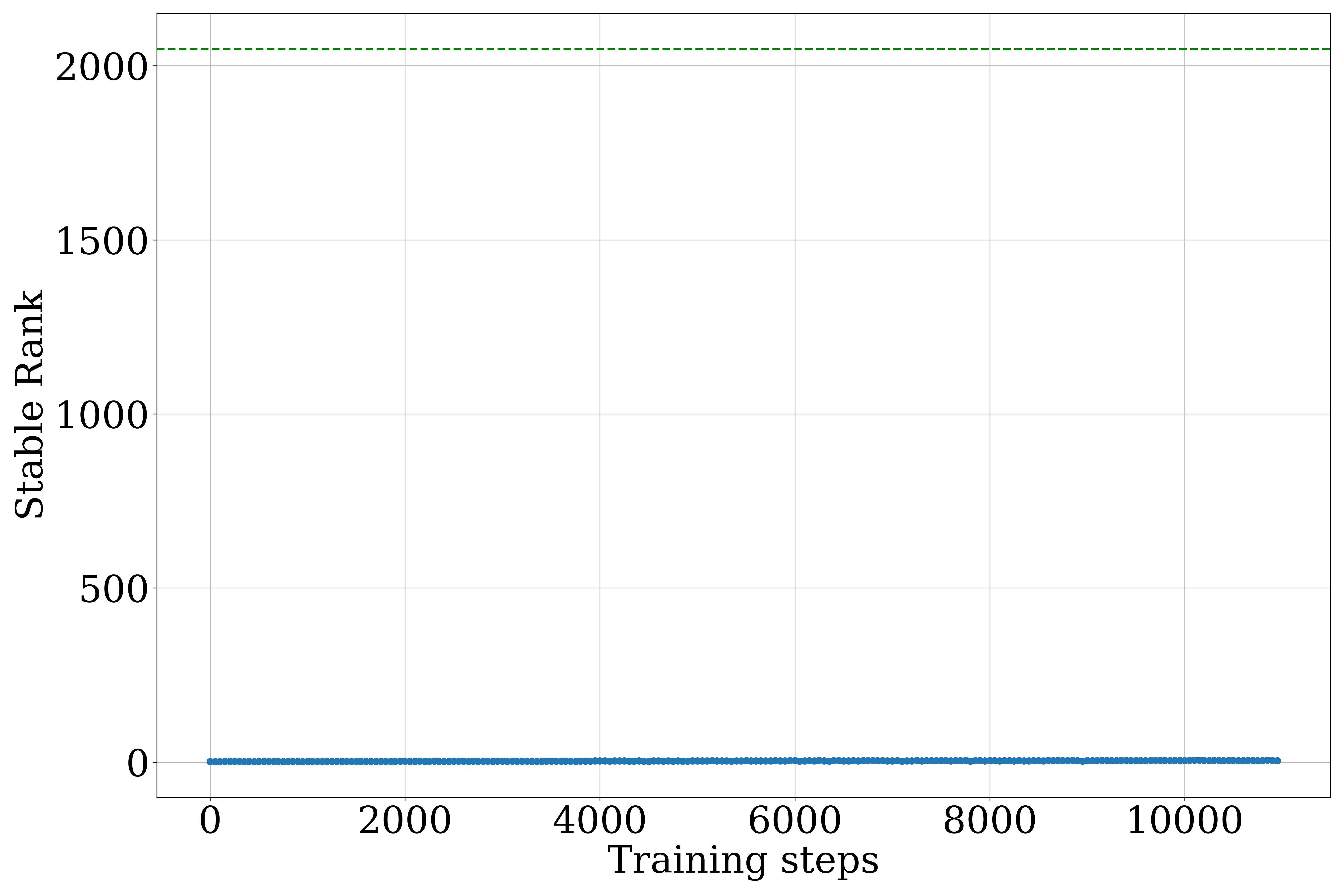}
        \caption{$\nabla_{l}\W^7_{p_1}$}
        \label{fig:sub3}
    \end{subfigure}
    \begin{subfigure}[b]{0.25\textwidth}
        \includegraphics[width=\textwidth]{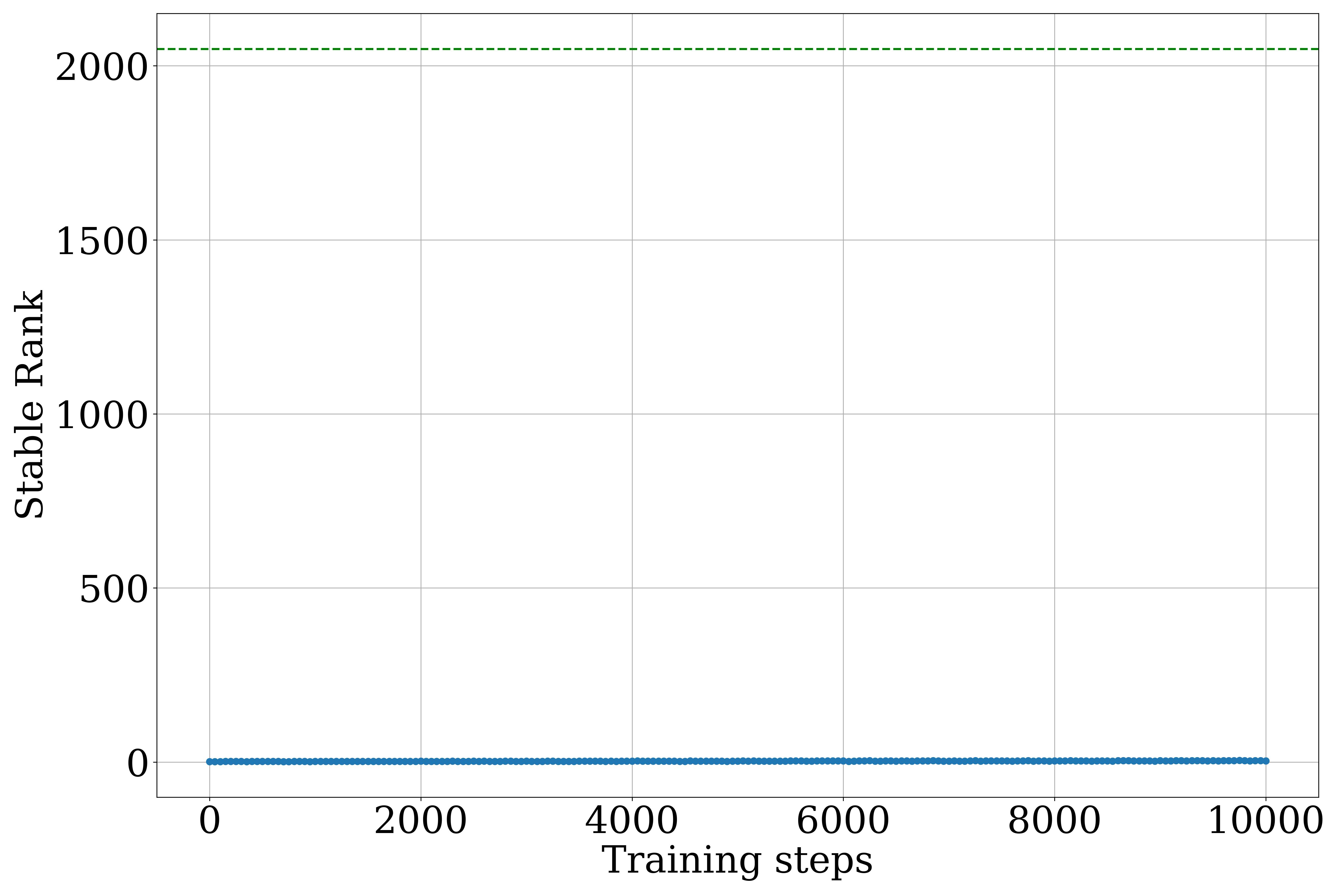}
        \caption{$\nabla_{l}\W^4_{p_1}$}
        \label{fig:sub3}
    \end{subfigure}
    \caption{\textbf{Low dimensionality of the gradients of projection matrices.} We measure the stable ranks of gradients of the projection matrices over the course of training. In each plot, the green dashed line indicates the maximum rank. As shown, all the weight matrices consistently maintain a significantly lower rank ($\sim~< 5$) throughout the training. This validates the assumption of our theorem \ref{th:weight_update} that the weight gradients of the projection matrices are confined to a low-dimensional subspace.}
    \label{fig:gradients}
\end{figure*}


The assumption that weight gradients span a low-dimensional subspace is well-supported by prior research. \cite{gur2018gradient} demonstrated that gradient descent in deep learning does not explore the full parameter space but instead operates within a constrained subspace, largely dictated by the spectral structure of the Hessian. The Hessian matrix features a dominant subspace associated with large eigenvalues, where most gradient updates occur, and a bulk subspace characterized by near-zero eigenvalues that correspond to flat directions in the loss landscape. Empirical evidence indicates that early in training, gradients align with this dominant Hessian subspace and remain confined within it throughout the optimization process.

This phenomenon extends to the learned representations themselves. \cite{zhao2021zero} showed that gradient descent naturally gravitates toward low-rank solutions, a behavior arising from implicit regularization within the optimization process. Instead of initializing with full-rank representations, training typically begins in a low-rank regime and expands rank as necessary. This trend has been consistently observed across multiple architectures, including MLPs, CNNs, and transformers. The concept of spectral bias further supports this observation, suggesting that neural networks initially learn broad, low-rank structures before refining finer details.

A practical application of this low-rank property is evident in \cite{cosson2023low}'s introduction of Low-Rank Gradient Descent (LRGD), which explicitly leverages low-rank subspaces to expedite training. By identifying dominant gradient directions and confining updates to this reduced subspace, LRGD significantly lowers computational costs without sacrificing optimization effectiveness. Theoretical results demonstrate that in strongly convex settings, LRGD reduces oracle complexity from \(O(p \log(1/\epsilon))\) to \(O(r \log(1/\epsilon) + rp)\), where \(r \ll p\). Similar improvements are observed in non-convex environments. Moreover, \cite{wangni2018gradient} illustrated that in deep networks trained with cross-entropy loss, the Hessian spectrum exhibits a clear separation. The top subspace, whose dimensionality approximates the number of classes in the dataset, captures the majority of gradient activity. In contrast, the bulk subspace, filled with small eigenvalues, corresponds to nearly flat directions in the loss landscape. This structure implies that significant changes in the loss function occur primarily within the top subspace. Empirical findings also reveal that gradients quickly align with the top Hessian subspace early in training and persist within it. This effect has been documented across diverse architectures such as fully connected networks, convolutional networks, ResNets, and transformers. While individual Hessian eigenvectors evolve during training, the dominant subspace remains relatively stable, guiding the optimization path. The low-rank nature of gradients is further validated by \cite{vogels2019powersgd}, who observed a rapid decay in the singular values of gradient matrices. This decay implies that the most meaningful updates are confined to a small, structured subspace. Additionally, the gradient covariance matrix in deep networks shows exponential spectral decay, suggesting that full-rank updates are not essential.

Above findings provide robust empirical and theoretical support for our low-rank gradient assumption, directly motivating our approach. By exploiting low-rank constraints, we achieve substantial reductions in both computation and communication overhead while preserving performance. Given that deep learning models inherently exhibit implicit regularization, explicitly enforcing a low-rank structure in our compression could even induce a beneficial regularization effect—potentially enhancing performance beyond that of non-compressed models, as indicated in Table~\ref{tab:perplexity}.

\subsubsection{The variance of the gradients diminish as the model converges. }

The reduction in gradient variance as a model converges is a well-observed phenomenon in optimization literature. As training progresses, stochastic gradient estimates become more stable, leading to smoother updates and accelerated convergence. This effect is particularly pronounced in adaptive learning rate optimizers such as Adam \cite{kingma2014adam} and AdamW \cite{loshchilov2017decoupled}. In particular, \cite{balles2018dissecting} showed that these methods inherently reduce the variance of parameter updates over time, contributing to stable convergence. \cite{liu2020adam} further refined this idea by introducing a variant of Adam designed to enforce adaptive variance reduction, ensuring that the variance of the stochastic gradient estimator decreases as the model approaches optimality.

Beyond adaptive optimization, variance reduction techniques have been widely studied in the context of stochastic gradient descent (SGD). Traditional SGD suffers from high variance in gradient estimates, particularly in early training stages, which can hinder convergence. To mitigate this, methods such as Stochastic Variance Reduced Gradient (SVRG) and SAGA \cite{wang2013variance} have been proposed. These approaches reduce variance by incorporating control variates or maintaining a memory of past gradients, yielding more accurate updates and improved convergence rates. Additional techniques, including Kalman-based filtering \cite{vuckovic2018kalman} and stochastic filtering methods \cite{yang2020stochastic}, further enhance stability by adaptively reducing noise in gradient estimates.

Normalization layers have also been shown to smooth parameter spaces and gradients, effectively reducing variance. This smoothness, characterized by a lower Lipschitz constant, results in more predictive gradients, which can be interpreted as a form of implicit variance reduction. The effect is particularly beneficial in deep networks, where sharp loss landscapes can otherwise lead to erratic updates.

In summary, diminishing gradient variance is a natural outcome of optimization dynamics. As training progresses and model parameters approach optimality, stochastic gradients become increasingly stable, even in the absence of explicit variance reduction techniques. This effect can be attributed to the curvature of the loss landscape: near an optimal solution, the loss function tends to be flatter, leading to more consistent gradient estimates across mini-batches. Consequently, as the model converges, gradient variance inherently diminishes, reinforcing the validity of this assumption in our work.

\section{Continuous-Time Gradient Flow with Weight Decay Converges to a Low-Dimensional Subspace}
\label{sec:app_weightconvergence}

The result in this section reveals a fundamental property of optimization under AdamW: the optimizer inherently drives weight matrices to converge asymptotically to a shared subspace, regardless of their initial values. This behavior stems from the decoupled weight decay mechanism, which systematically suppresses components of the weight matrix that receive negligible or noisy gradient updates during training. Over time, the weight matrix becomes confined to a low-dimensional subspace spanned by the significant directions of the gradient updates, leading to an effectively low-rank representation.

\begin{theorem}
\label{th:weight_convergence}

Consider a continuous-time dynamics for $\mathbf{W}(t) \in \mathbb{R}^{m \times n}$ governed by
\[
  \frac{\mathrm{d}\mathbf{W}}{\mathrm{d}t}
  \;=\;
  -\,\eta\,\Bigl(\,\Delta_t \;+\;\alpha\,\mathbf{W}(t)\Bigr),
\]
where $\eta,\;\alpha>0$ are constants, and $\Delta_t$ represents the gradient-like update without weight decay (i.e.\ the ``AdamW'' part, ignoring the $\alpha\,\mathbf{W}(t)$ term).
Assume that for large $t$, $\Delta_t$ has its rows confined predominantly to a subspace $\mathcal{S}\subseteq\mathbb{R}^n$ (so that the orthogonal component of $\Delta_t$ vanishes asymptotically).
Then, as $t \to \infty$, the component of $\mathbf{W}(t)$ lying in $\mathcal{S}^\perp$ decays exponentially to zero.  Consequently, the rows of the final solution $\mathbf{W}(\infty)$ lie in (or arbitrarily close to) $\mathcal{S}$.
\end{theorem}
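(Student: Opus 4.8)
The plan is to decompose the dynamics of $\mathbf{W}(t)$ into its component lying in $\mathcal{S}$ and its component lying in $\mathcal{S}^\perp$, and then show that the latter satisfies a linear ODE whose forcing term vanishes asymptotically, so that it decays exponentially. Concretely, let $\mathbf{P}_{\mathcal{S}^\perp} = \mathbf{I} - \mathbf{U}_k\mathbf{U}_k^\top$ be the (constant) orthogonal projector onto $\mathcal{S}^\perp$, acting on the rows of a matrix via right-multiplication, and define $\mathbf{W}^\perp(t) = \mathbf{W}(t)\mathbf{P}_{\mathcal{S}^\perp}$ and $\Delta_t^\perp = \Delta_t\mathbf{P}_{\mathcal{S}^\perp}$. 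Since $\mathbf{P}_{\mathcal{S}^\perp}$ is constant in $t$, it commutes with $\frac{\mathrm{d}}{\mathrm{d}t}$, so right-multiplying the governing ODE by $\mathbf{P}_{\mathcal{S}^\perp}$ yields
\[
  \frac{\mathrm{d}\mathbf{W}^\perp}{\mathrm{d}t}
  \;=\;
  -\,\eta\,\alpha\,\mathbf{W}^\perp(t)\;-\;\eta\,\Delta_t^\perp.
\]
This is the key reduction: the weight-decay term couples $\mathbf{W}^\perp$ only to itself, and the only external driving is $\Delta_t^\perp$, which by hypothesis (carried over from Theorem~\ref{th:weight_update}, i.e.\ $\|\Delta_t^\perp\|/\|\Delta_t\| \to 0$, together with boundedness of $\|\Delta_t\|$) tends to zero as $t\to\infty$.

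Next I would solve this linear matrix ODE explicitly using the integrating factor $e^{\eta\alpha t}$:
\[
  \mathbf{W}^\perp(t)
  \;=\;
  e^{-\eta\alpha t}\,\mathbf{W}^\perp(0)
  \;-\;\eta\int_0^t e^{-\eta\alpha(t-s)}\,\Delta_s^\perp\,\mathrm{d}s.
\]
Taking norms, the first term decays like $e^{-\eta\alpha t}\to 0$. For the integral term, I would split the range at a large time $T_0$ chosen so that $\|\Delta_s^\perp\| \le \varepsilon$ for all $s \ge T_0$ (possible by the vanishing hypothesis). On $[T_0,t]$ the contribution is bounded by $\eta\varepsilon\int_{T_0}^t e^{-\eta\alpha(t-s)}\mathrm{d}s \le \varepsilon/\alpha$; on $[0,T_0]$ the contribution is bounded by $\eta\,(\sup_{s\le T_0}\|\Delta_s^\perp\|)\int_0^{T_0}e^{-\eta\alpha(t-s)}\mathrm{d}s \le C\,e^{-\eta\alpha(t-T_0)}$, which $\to 0$ as $t\to\infty$. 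Hence $\limsup_{t\to\infty}\|\mathbf{W}^\perp(t)\| \le \varepsilon/\alpha$, and since $\varepsilon>0$ is arbitrary, $\|\mathbf{W}^\perp(t)\|\to 0$. This establishes that the rows of $\mathbf{W}(t)$ converge to $\mathcal{S}$.

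If one wants the stronger \emph{exponential} decay rate quoted in the statement (rather than mere convergence), I would assume a rate on the forcing, e.g.\ $\|\Delta_t^\perp\| \le C_0 e^{-\rho t}$ for some $\rho>0$ (a quantitative version of the vanishing, consistent with $\sigma(t)\to 0$ at a geometric rate); then the convolution integral above is bounded by a term of order $e^{-\min(\eta\alpha,\rho)t}$ (with an extra polynomial factor in the borderline case $\eta\alpha=\rho$), giving the claimed exponential decay of the $\mathcal{S}^\perp$ component.

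\textbf{Main obstacle.} The genuinely delicate point is not the ODE manipulation — that is routine — but justifying that the forcing term $\Delta_t^\perp$ really does vanish (or vanishes fast enough) in the continuous-time idealization. Theorem~\ref{th:weight_update} provides $\|\Delta_t^\perp\|/\|\Delta_t\|\to 0$ in the discrete AdamW setting under a sub-Gaussian variance-decay assumption, so I would need to (i) argue that this ratio statement plus a bound on $\|\Delta_t\|$ (which follows since $\Delta_t = \M_t\oslash(\sqrt{\V_t}+\epsilon)$ is coordinate-wise bounded by $1/\epsilon$ times $\|\M_t\|$, itself controlled by the gradient norms) gives $\|\Delta_t^\perp\|\to 0$ outright, and (ii) be honest that passing from the discrete iteration to the stated continuous-time gradient flow is a modeling assumption, not a theorem — this is why the result is framed as a continuous-time insight. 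A secondary subtlety is the borderline resonance case $\eta\alpha = \rho$ in the exponential-rate refinement, which introduces a $t\,e^{-\eta\alpha t}$ factor; this is harmless but should be mentioned.
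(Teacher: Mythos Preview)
Your proposal is correct and follows the same approach as the paper: project the ODE rowwise onto $\mathcal{S}^\perp$ to isolate a linear scalar-coefficient equation for $\mathbf{W}^\perp(t)$ driven by the asymptotically vanishing forcing $\Delta_t^\perp$, then conclude decay of the orthogonal component. Your treatment via the Duhamel/variation-of-constants formula with the $\varepsilon$-splitting of the convolution integral is in fact more careful than the paper's own argument, which simply declares the forcing negligible for large $t$ and reads off the homogeneous solution $\mathbf{W}_\perp(t) = e^{-\eta\alpha t}\,\mathbf{W}_\perp(0)$ directly.
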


\begin{proof}
Let $\Pi_{\mathcal{S}}$ denote the orthogonal projector onto $\mathcal{S}$ (applied rowwise to any $m\times n$ matrix).  Decompose:
\[
  \mathbf{W}(t) 
  \;=\;
  \mathbf{W}_{\parallel}(t)
  \;+\;
  \mathbf{W}_{\perp}(t),
  \quad
  \text{where}
  \quad
  \mathbf{W}_{\parallel}(t) \;=\;\Pi_{\mathcal{S}}\bigl[\mathbf{W}(t)\bigr],
  \quad
  \mathbf{W}_{\perp}(t) \;=\;\bigl(\mathrm{I}-\Pi_{\mathcal{S}}\bigr)\mathbf{W}(t).
\]

By hypothesis (see also Theorem~\ref{th:weight_update}), the rows of $\Delta_t$ become confined to $\mathcal{S}$ as $t\to\infty$, i.e.
\[
  \lim_{t\to\infty} \bigl(\mathrm{I} - \Pi_{\mathcal{S}}\bigr)\Delta_t
  \;=\; 
  0.
\]
Thus, for large $t$, the orthogonal component of $\Delta_t$ is negligible.  Project the ODE onto $\mathcal{S}^\perp$:
\[
  \frac{\mathrm{d}\mathbf{W}_{\perp}(t)}{\mathrm{d}t}
  \;=\;
  \bigl(\mathrm{I}-\Pi_{\mathcal{S}}\bigr)\frac{\mathrm{d}\mathbf{W}}{\mathrm{d}t}
  \;=\;
  -\,\eta\,\bigl(\mathrm{I}-\Pi_{\mathcal{S}}\bigr)\!\Bigl(
    \Delta_t + \alpha\,\mathbf{W}(t)
  \Bigr).
\]
For sufficiently large $t$, $(\mathrm{I}-\Pi_{\mathcal{S}})\Delta_t$ is negligible, so asymptotically,
\[
  \frac{\mathrm{d}\mathbf{W}_{\perp}(t)}{\mathrm{d}t}
  \;\approx\;
  -\,\eta\,\alpha\,\bigl(\mathrm{I}-\Pi_{\mathcal{S}}\bigr)\!\bigl(\mathbf{W}(t)\bigr)
  \;=\;
  -\,\eta\,\alpha\,\mathbf{W}_{\perp}(t).
\]
This is a linear ODE:
\[
  \frac{\mathrm{d}\mathbf{W}_{\perp}(t)}{\mathrm{d}t}
  \;=\;
  -\,\eta\,\alpha\,\mathbf{W}_{\perp}(t),
\]
whose solution is
\[
  \mathbf{W}_{\perp}(t)
  \;=\;
  \exp\bigl(-\,\eta\,\alpha\,t\bigr)\,\mathbf{W}_{\perp}(0).
\]
Hence $\mathbf{W}_{\perp}(t)$ decays exponentially to $\mathbf{0}$.  Therefore,
\[
  \mathbf{W}(t)
  \;=\;
  \mathbf{W}_{\parallel}(t) + \mathbf{W}_{\perp}(t)
  \;\longrightarrow\;
  \mathbf{W}_{\parallel}(t),
  \quad
  \text{with}
  \quad
  \mathbf{W}_{\parallel}(t)\;\in\;\mathcal{S}.
\]
Thus, as $t\to\infty,$ the rows of $\mathbf{W}(t)$ lie in $\mathcal{S}$ (or arbitrarily close), proving the claim.
\end{proof}

\noindent{\textbf{Discussion:}} Unlike standard stochastic gradient descent (SGD), which applies weight decay as a modification to the gradient updates, AdamW separates the weight decay term from the gradient computation. This decoupled weight decay has two key effects: 1.) \textbf{Suppression of Insignificant Components:} Components of the weight matrix orthogonal to the subspace spanned by significant gradient updates decay exponentially due to weight decay. This ensures that the optimization focuses on meaningful directions of the gradient. 2) \textbf{Low-Dimensional Convergence:} By amplifying the impact of directions where the gradient signal is strongest, the optimizer effectively projects the weight matrix onto a subspace defined by these directions, leaving other components asymptotically negligible.


To empirically validate this property, we analyzed the spectral structure of weight matrices trained under AdamW. As shown in Fig.~\ref{fig:rank_collapse}, the spectrum of these matrices exhibits rapid decay, with the majority of the spectral mass concentrated in a few dominant directions.

\section{Convergence of Partially-Projected Gradient Descent}
\label{sec:app_partialconvergence}

This section establishes that constraining a subset of weights in a neural network to lie within a low-dimensional subspace does not degrade the theoretical convergence guarantees of constrained optimization. Specifically, the network still achieves convergence to a first-order stationary point at the rate of \(O(1/T)\), where \(T\) is the number of optimization steps. This is consistent with standard results in constrained optimization and provides a preliminary theoretical foundation for employing subspace constraints in practical applications.

\begin{proposition}
\label{thm:convergence}
Let $f(\W^{(s)}, \W^{(u)})$ be a possibly non-convex function that is $L$-smooth in all its parameters $\W=(\W^{(s)},\W^{(u)})$, where $\W^{(s)} \in \mathbb{R}^{k}$ is a \emph{constrained} block and $\W^{(u)} \in \mathbb{R}^{d}$ is \emph{unconstrained} and $k < d$. 
Suppose we have a closed subspace $\mathcal{S} \subseteq \mathbb{R}^{k}$, and we define the feasible set
\[
  \W \;=\; \mathcal{S} \times \mathbb{R}^{d_u}.
\]
Consider the following \textbf{partially-projected gradient step}:
\[
  \widetilde{\W}_{t+1}^{(s)} 
  \;=\; \W_{t}^{(s)} - \eta \,\nabla_{(s)}f(\W_t), 
  \quad
  \widetilde{\W}_{t+1}^{(u)} 
  \;=\; \W_{t}^{(u)} - \eta \,\nabla_{(u)}f(\W_t),
\]
\[
  \W_{t+1}^{(s)} 
  \;=\; \Pi_{\mathcal{S}}\Bigl(\widetilde{\W}_{t+1}^{(s)}\Bigr),
  \quad
  \W_{t+1}^{(u)} 
  \;=\; \widetilde{\W}_{t+1}^{(u)},
\]
where $\Pi_{\mathcal{S}}(\cdot)$ is the Euclidean projection onto $\mathcal{S}$, $\nabla_{(S)}f$ and $\nabla_{(u)}f$ are the gradients of $f$ w.r.t.\ the constrained and unconstrained blocks, respectively, and $\eta>0$ is a fixed step size. 

Then, for any $T\ge1$ and $\eta \le \frac{1}{L}$, the iterates $\{\W_t\}$ satisfy the following \emph{stationary} guarantee:
\[
\min_{0 \,\le\, t < T}
\bigl\|\nabla f(\W_t)\bigr\|^2
\;\;\le\;\; 
\frac{2\bigl(f(\W_0) \;-\; f^*\bigr)}{\eta\,T}
\]
where $f^* = \min_{\W\in \W} f(\W)$ and $\W_0$ is the initial parameter vector.

\end{proposition}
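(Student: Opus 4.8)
The plan is to exploit the fact that $\Pi_{\mathcal{S}}$ is the orthogonal projector onto a \emph{linear} subspace — hence linear, idempotent, and self-adjoint — which collapses the constrained step into an ordinary gradient step in a modified direction, after which the usual non-convex descent-lemma argument applies verbatim. First I would record that, since the weights are initialized with $\W_0^{(s)}\in\mathcal{S}$ (the paper's setup) and every subsequent iterate satisfies $\W_t^{(s)}=\Pi_{\mathcal{S}}(\W_t^{(s)})$, linearity gives
\[
  \W_{t+1}^{(s)} = \Pi_{\mathcal{S}}\!\bigl(\W_t^{(s)}-\eta\,\nabla_{(s)}f(\W_t)\bigr) = \W_t^{(s)}-\eta\,\Pi_{\mathcal{S}}\nabla_{(s)}f(\W_t),
\]
so that, writing $g_t := \bigl(\Pi_{\mathcal{S}}\nabla_{(s)}f(\W_t),\,\nabla_{(u)}f(\W_t)\bigr)$, the entire update is simply $\W_{t+1}=\W_t-\eta\,g_t$. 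This $g_t$ is the natural stationarity measure for the problem: it is the gradient of $f$ restricted to the feasible set $\mathcal{S}\times\mathbb{R}^{d}$ (equivalently, the projected gradient / gradient mapping at step size $\eta$), and $g_t=0$ is exactly the first-order optimality condition for $\min f$ over the feasible set. I would make explicit here that the quantity written $\|\nabla f(\W_t)\|$ in the statement is to be read as this $\|g_t\|$; the component of $\nabla_{(s)}f$ orthogonal to $\mathcal{S}$ plays no role in feasible descent and need not vanish.

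Next I would apply the descent lemma. $L$-smoothness of $f$ gives
\[
  f(\W_{t+1}) \le f(\W_t) + \langle\nabla f(\W_t),\,\W_{t+1}-\W_t\rangle + \tfrac{L}{2}\|\W_{t+1}-\W_t\|^2,
\]
which with $\W_{t+1}-\W_t=-\eta\,g_t$ becomes $f(\W_{t+1}) \le f(\W_t) - \eta\,\langle\nabla f(\W_t),g_t\rangle + \tfrac{L\eta^2}{2}\|g_t\|^2$. The one genuinely load-bearing computation is the identity $\langle\nabla f(\W_t),g_t\rangle = \|g_t\|^2$: on the unconstrained block it is immediate, and on the constrained block it follows from $\Pi_{\mathcal{S}}$ being an orthogonal projector, since $\langle\nabla_{(s)}f,\Pi_{\mathcal{S}}\nabla_{(s)}f\rangle = \langle\Pi_{\mathcal{S}}\nabla_{(s)}f,\Pi_{\mathcal{S}}\nabla_{(s)}f\rangle = \|\Pi_{\mathcal{S}}\nabla_{(s)}f\|^2$. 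Substituting gives $f(\W_{t+1}) \le f(\W_t) - \eta\bigl(1-\tfrac{L\eta}{2}\bigr)\|g_t\|^2$, and the hypothesis $\eta\le 1/L$ forces $1-\tfrac{L\eta}{2}\ge\tfrac12$, hence $f(\W_{t+1})\le f(\W_t)-\tfrac{\eta}{2}\|g_t\|^2$.

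Finally I would telescope from $t=0$ to $T-1$: every iterate is feasible (the constrained block stays in $\mathcal{S}$), so $f(\W_T)\ge f^*$, giving $\tfrac{\eta}{2}\sum_{t=0}^{T-1}\|g_t\|^2 \le f(\W_0)-f(\W_T)\le f(\W_0)-f^*$, and bounding the sum below by $T\min_{0\le t<T}\|g_t\|^2$ yields the claimed $O(1/T)$ bound. I do not expect a real obstacle in the mechanics — this is the textbook non-convex rate. The only point that needs care, and which I would state up front rather than bury, is the identification of the stationarity measure: for a \emph{subspace} (as opposed to a general convex) constraint, $\|\nabla f\|^2$ must be taken as the squared norm of the feasible/projected gradient $g_t$, and it is precisely the linearity of $\Pi_{\mathcal{S}}$ that lets $g_t$ act like an ordinary gradient inside the descent lemma; read as the full Euclidean gradient, the bound is false in general, so the write-up should fix this convention explicitly and (optionally) also remark that $\W_0$ is assumed feasible so that $f(\W_0)$ is the correct reference value.
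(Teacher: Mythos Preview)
Your proof is correct and takes a more direct route than the paper's. The paper argues via the general projected-gradient-descent toolkit: it invokes non-expansiveness of the projection to get $\|\W_{t+1}-\W_t\|\le\eta\|\nabla f(\W_t)\|$, then the variational characterization of the projection, $(\W_{t+1}-(\W_t-\eta\nabla f(\W_t)))^\top(\W_{t+1}-\W_t)\le 0$, to control the inner-product term, and finally telescopes. That machinery is the one that would carry over to an arbitrary closed convex constraint set (with the gradient mapping $\tfrac{1}{\eta}(\W_t-\W_{t+1})$ as the stationarity measure). You instead exploit that $\mathcal{S}$ is a \emph{linear} subspace, so $\Pi_{\mathcal{S}}$ is a linear self-adjoint idempotent; this collapses the update to ordinary gradient descent along $g_t$ and gives the inner-product identity $\langle\nabla f(\W_t),g_t\rangle=\|g_t\|^2$ in one line. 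Your route is shorter and more transparent for the specific subspace setting; the paper's route is the one that generalizes beyond subspaces.

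You are also right to surface the stationarity-measure convention. For a subspace constraint the paper's $\tfrac{1}{\eta}(\W_t-\W_{t+1})$ \emph{is} exactly your $g_t$, so both approaches are bounding the same quantity; but as you note, reading the conclusion as a bound on the full Euclidean $\|\nabla f(\W_t)\|$ is false in general (the component of $\nabla_{(s)}f$ orthogonal to $\mathcal{S}$ need never vanish). The paper's write-up does not make this identification explicit, and in fact its substitution of $\|\W_{t+1}-\W_t\|^2\le\eta^2\|\nabla f(\W_t)\|^2$ into a term with a \emph{negative} coefficient goes the wrong way if $\nabla f$ means the full gradient --- so your caveat is not pedantry but a genuine correction. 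Stating up front that ``$\nabla f$'' in the conclusion denotes the feasible/projected gradient $g_t$, as you do, is the right fix.
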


\begin{proof}

Since $f(\W^{(s)}, \W^{(u)})$ is $L$-smooth in all parameters, for any $\W,\W'$ we have
\begin{equation}
      f(\W') 
  \;\le\; 
  f(\W) 
  \;+\; \nabla f(\W)^\top(\W' - \W)
  \;+\; \tfrac{L}{2}\|\W' - \W\|^2.
  \label{eq:convergence_1}
\end{equation}

Setting $\W = \W_t$ and $\W' = \W_{t+1}$, we get

\begin{equation}
      f(\W_{t+1}) 
  \;\le\; 
  f(\W_t) 
  + \nabla f(\W_t)^\top (\W_{t+1} - \W_t)
  + \tfrac{L}{2} \,\|\W_{t+1} - \W_t\|^2.
  \label{eq:convergence_2}
\end{equation}

Recall the update:
\begin{equation}
    \W_{t+1}^{(s)} 
  = \Pi_{\mathcal{S}}\Bigl(\W_{t}^{(S)} - \eta\,\nabla_{(S)} f(\W_t)\Bigr),
  \quad
  \W_{t+1}^{(u)} 
  = \W_{t}^{(u)} - \eta\,\nabla_{(u)} f(\W_t).
  \label{eq:convergence_3}
\end{equation}
  
\begin{equation}
    \W_{t+1} 
  \;=\; 
  \Bigl(\,\Pi_{\mathcal{S}}(\W_t^{(s)} - \eta\,\nabla_{(s)} f(\W_t)),
         \; \W_t^{(u)} - \eta\,\nabla_{(u)} f(\W_t)\Bigr).
         \label{eq:convergence_4}
\end{equation}

Projecting  the $\widetilde{\W}_{t+1}^{(s)}$ block onto $\mathcal{S}$, is a 1-Lipschitz (nonexpansive) operation in the full Euclidean space.  Concretely, consider $ \|\bigl(\Pi_{\mathcal{S}}(\widetilde{\W}_{t+1}^{(s)}) - \W_{t}^{(s)})\|$. Also recall that 
 \(\W_t^{s} \in\mathcal S\). It follows from the classical projection lemma that,

\begin{equation}
   \|(\W_{t+1}^{(s)}) - \W_{t}^{(s)})\| =  \|\bigl(\Pi_{\mathcal{S}}(\widetilde{\W}_{t+1}^{(s)}) - \W_{t}^{(s)})\| \leq \|\big(\widetilde{\W}_{t+1}^{(s)} - \W_{t}^{(s)})\|
    \label{eq:proj_lemma}
\end{equation}

This happens because the projection minimizes the Frobenius norm to the subset, so the projected point is closer than the unprotected one.

We also leave the disjoint set $\W^{(u)}$ untouched. So we have,

\begin{equation}
\widetilde{\W}_{t+1}^{(u)} = \W_{t+1}^{(u)}
    \label{eq:u_equal}
\end{equation}

Since $\W^{(s)}, \W^{(u)}$ are disjoint blocks, We can combine the blocks (Pythagorean identity):
The Frobenius norm is the Euclidean norm on the concatenated vector, so
\[
\bigl\|\W_{t+1} - \W_t\bigr\|^2
=
\|\big(\W_{t+1}^{(s)} - \W_{t}^{(s)})\|^2 
+
\|\big(\W_{t+1}^{(u)} - \W_t^{(u)}\big)\|^2
\]

From Eq.~\ref{eq:proj_lemma} and \ref{eq:u_equal} we have,

\begin{equation} 
\bigl\|\W_{t+1} - \W_t\bigr\|^2
\leq
\|\big(\widetilde{\W}_{t+1}^{(s)} - \W_{t}^{(s)})\|^2 
+
\|\big(\widetilde{\W}_{t+1}^{(u)} - \W_t^{(u)}\big)\|^2
\end{equation}

As $\W^{(s)}, \W^{(u)}$ are disjoint, we get,

\begin{equation}
     \bigl\|\W_{t+1} - \W_t\bigr\|^2 \leq \bigl\|\widetilde{\W}_{t+1} - \W_t\bigr\|^2
\end{equation}

\begin{equation}
     \bigl\|\W_{t+1} - \W_t\bigr\| \leq \bigl\|\widetilde{\W}_{t+1} - \W_t\bigr\|
\end{equation}

 $(\widetilde{\W}_{t+1} - \W_t)$ is the update step which is equal to $\eta ( \nabla f(\W_t) )$. Thus,

\begin{equation}
    \|\W_{t+1} - \W_t\| 
  \;\le\;
  \eta\,\bigl\|\nabla f(\W_t)\bigr\|.
  \label{eq:convergence_8}
\end{equation}

Plugging this bound back into the smoothness inequality (Eq.~\ref{eq:convergence_2}), we can get

\begin{equation}
      f(\W_{t+1}) 
  \;\le\; 
  f(\W_t) 
  + \nabla f(\W_t)^\top (\W_{t+1} - \W_t)
  + \tfrac{L}{2} \,\eta^2\,\bigl\|\nabla f(\W_t)\bigr\|^2.
  \label{eq:convergence_12}
\end{equation}

Further, from the optimality condition, it follows that,

\begin{equation}
   \big( \W_{t+1} - (\W_t - \eta \nabla f(\W_t)) \big)^{\top} (\W_{t+1} - \W_t) \leq 0
\end{equation}

\begin{align}
   \big(  \eta \nabla f(\W_t)) ^{\top} (\W_{t+1} - \W_t) \leq  -\|\W_{t+1} - \W_t\|^2
\end{align}

\begin{align}
   \ \nabla f(\W_t) ^{\top} (\W_{t+1} - \W_t) \leq  - \frac{1}{\eta}\|\W_{t+1} - \W_t\|^2
\end{align}

Substituting above in Eq.~\ref{eq:convergence_12},

\begin{equation}
     f(\W_{t+1}) 
  \;\le\; 
  f(\W_t) 
    - \frac{1}{\eta}\|\W_{t+1} - \W_t\|^2
  + \tfrac{L}{2} \,\eta^2\,\bigl\|\nabla f(\W_t)\bigr\|^2.
\end{equation}

\begin{equation}
     f(\W_{t+1}) 
  \;\le\; 
  f(\W_t) 
    - \frac{1}{\eta}\|\W_{t+1} - \W_t\|^2
  + \tfrac{L}{2} \,\|\W_{t+1} - \W_t\|^2.
\end{equation}

\begin{equation}
     f(\W_{t+1}) 
  \;\le\; 
  f(\W_t) 
  + \|\W_{t+1} - \W_t\|^2 \Big( \frac{L}{2} - \frac{1}{\eta} \Big).
\end{equation}

\begin{equation}
     f(\W_{t+1}) -  f(\W_t) 
  \;\le\; 
   \|\W_{t+1} - \W_t\|^2 \Big( \frac{L}{2} - \frac{1}{\eta} \Big).
\end{equation}

\begin{equation}
     f(\W_{t+1}) -  f(\W_t) 
  \;\le\; 
   \eta^2 \| \nabla f(\W_t) \|^2 \Big( \frac{L}{2} - \frac{1}{\eta} \Big).
\end{equation}

Choosing $\eta < \frac{2}{L}$

\begin{equation}
     f(\W_{t+1}) -  f(\W_t) 
  \;\le\; 
   -\gamma \| \nabla f(\W_t) \| ^2.
\end{equation}

Taking a telescoping summation, we have,

\begin{equation}
    \sum_{t=0}^{T-1} \Big( f(\W_{t+1}) - f(\W_t) \Big)
  \;\le\; 
  -\gamma \sum_{t=0}^{T-1}  \| \nabla f(\W_t) \|^2
\end{equation}

\begin{equation}
     \frac{\Big( f(\W_0) - f(\W_{t+1}) \Big)}{\gamma}
  \geq
  \sum_{t=0}^{T-1}  \| \nabla f(\W_t) \|^2. 
\end{equation}

Let us set $\W_{t+1} = \W^*$, where $\W^*$ is a minimizer of $f$. Then, we have $\Big( f(\W_0) - f(\W_{t+1}) \Big) \leq \Big( f(\W_0) - f(\W^*) \Big)$, which gives,

\begin{equation}
     \frac{\Big( f(\W_0) - f(\W^*) \Big)}{\gamma}
  \geq
  \sum_{t=0}^{T-1}  \| \nabla f(\W_t) \|^2. 
\end{equation}

Dividing both sides by $T$, we get

\begin{equation}
     \frac{\Big( f(\W_0) - f(\W^*) \Big)}{\gamma T}
  \geq
 \frac{1}{T} \sum_{t=0}^{T-1}  \| \nabla f(\W_t) \|^2. 
\end{equation}

If the gradient norm is upper bounded as below, then, trivially we have,

\begin{equation}
    \min_{0 \,\le\, t < T}
\bigl\|\nabla f(\W_t)\bigr\|^2
\;\;\le\;\; 
\frac{2\bigl(f(\W_0) \;-\; f(\W^*)\bigr)}{\gamma\,T}
\end{equation}

This shows that the squared minimum gradient norm goes to zero at a $O(1/T)$ rate, implying \emph{convergence to a stationary point} in the non-convex sense. 
\end{proof}



\noindent\textbf{Discussion: }The result is a direct extension of the convergence guarantees for proximal gradient descent methods on non-convex functions. Proximal gradient methods are widely used in constrained optimization problems, where the update rule involves projecting the parameters onto a feasible set at each iteration. In the case of subspace-constrained networks, the feasible set corresponds to the predefined low-dimensional subspace, and the projection operator ensures that the parameters \emph{partially} remain within this subspace. The convergence rate of \(O(1/T)\) in terms of stationarity reflects that, even in the presence of non-convexity and subspace constraints, the optimization algorithm efficiently reduces the gradient norm over time, driving the network parameters toward a stationary point. A possible implication of subspace restriction could be that it acts as an implicit regularizer, encouraging the network to learn more compact and robust representations. This can lead to improved generalization performance on unseen data. We hypothesize that this might be the reason for improved performance of the compressed models over their centralized counterparts (see Fig.~\ref{fig:llama_8b}).


Next result extends the above theorem to the stochastic mini-batch gradient descent optimization.
\begin{theorem}
\label{thm:sgd_ppgd}
Let
\(f(\W^{(s)},\W^{(u)})\) be \emph{$L$–smooth} in
\(\W=(\W^{(s)},\W^{(u)})\) with
\(\W^{(s)}\in\R^{k}\) (\emph{constrained block}),
\(\W^{(u)}\in\R^{d}\) (\emph{unconstrained block}),
and let
\(\mathcal S\subseteq\R^{k}\) be a closed subspace.
Define the feasible set \(\W:=\mathcal S\times\R^{d}\).

At each iteration \(t\) draw a mini-batch and compute an
\emph{unbiased} stochastic gradient
\(
      \g_t=(\g_t^{(s)},\g_t^{(u)})
\)
satisfying
\begin{align}
   \E[\g_t\mid\W_t] &= \nabla f(\W_t),
   &
   \E\bigl[\|\g_t-\nabla f(\W_t)\|^2\mid\W_t\bigr]
   &\le \sigma^2               \quad(\text{bounded variance}).
\end{align}

Update the parameters with the
\emph{stochastic partially-projected gradient} (SPPG) step
\[
\begin{aligned}
   \widetilde{\W}_{t+1}^{(s)} &= \W_t^{(s)}-\eta_t\g_t^{(s)}, &
   \widetilde{\W}_{t+1}^{(u)} &= \W_t^{(u)}-\eta_t\g_t^{(u)},\\
   \W_{t+1}^{(s)} &= \Pi_{\mathcal S}\!\bigl(\widetilde{\W}_{t+1}^{(s)}\bigr), &
   \W_{t+1}^{(u)} &= \widetilde{\W}_{t+1}^{(u)},
\end{aligned}
\]
where each stepsize satisfies \(\eta_t\in(0,1/L]\).

Let
\(
      f^{*}:=\min_{\W\in\W}f(\W)
\)
and denote
\(
      \bar\eta:=\frac1T\sum_{t=0}^{T-1}\eta_t
\).
Then for every horizon \(T\ge1\)
\begin{equation}
\label{eq:sgd_ppgd_rate}
      \frac1T\sum_{t=0}^{T-1}\E\bigl[\|\nabla f(\W_t)\|^2\bigr]
      \;\le\;
      \frac{2\bigl(f(\W_0)-f^{*}\bigr)}{\bar\eta\,T}
      \;+\;L\,\bar\eta\,\sigma^{2}.
\end{equation}
In particular, with the \emph{constant} stepsize
\(
      \eta_t\equiv\eta\le1/L
\)
it holds that
\[
      \min_{0\le t<T}\E\bigl[\|\nabla f(\W_t)\|^2\bigr]
      \;\le\;
      \frac{2\bigl(f(\W_0)-f^{*}\bigr)}{\eta T}
      +L\,\eta\,\sigma^{2},
\]
which attains the optimal
\(
      \mathcal O\!\bigl(1/\sqrt{T}\bigr)
\)
rate when
\(
      \eta=\min\!\bigl\{1/L,\,
      \sqrt{2(f(\W_0)-f^{*})/(L\sigma^{2}T)}\bigr\}
\).
\end{theorem}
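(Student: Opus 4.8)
The plan is to reproduce the one-step descent argument of Proposition~\ref{thm:convergence}, but carried out in conditional expectation, so that the bounded-variance hypothesis contributes the single extra additive term $L\,\bar{\eta}\,\sigma^{2}$. Let $\mathcal{F}_{t}$ be the $\sigma$-field generated by the first $t$ mini-batches and write $\E_{t}[\cdot]=\E[\cdot\mid\mathcal{F}_{t}]$, so that $\E_{t}[\g_{t}]=\nabla f(\W_{t})$ and $\E_{t}[\|\g_{t}-\nabla f(\W_{t})\|^{2}]\le\sigma^{2}$. First I would note that the constrained block is benign, exactly as in \eqref{eq:proj_lemma}--\eqref{eq:convergence_8}: since $\mathcal{S}$ is a \emph{linear} subspace and, by induction with base case $\W_{0}^{(s)}\in\mathcal{S}$, one has $\W_{t}^{(s)}\in\mathcal{S}$ for every $t$, the projection commutes with the gradient step on that block, so the SPPG iteration collapses to the single line $\W_{t+1}=\W_{t}-\eta_{t}\,P\g_{t}$, where $P$ applies $\Pi_{\mathcal{S}}$ to the $s$-coordinates and the identity to the $u$-coordinates; $P$ is symmetric, idempotent and a contraction, and $\|\W_{t+1}-\W_{t}\|\le\eta_{t}\|\g_{t}\|$ by nonexpansiveness \eqref{eq:proj_lemma}. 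Following the identification already used in Proposition~\ref{thm:convergence}, the progress term will be computed as if the update were the full step $-\eta_{t}\g_{t}$.

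With that in hand, I would plug $\W_{t+1}-\W_{t}$ into the $L$-smoothness inequality \eqref{eq:convergence_2} and take $\E_{t}$ before combining terms. Unbiasedness turns the linear term into $-\eta_{t}\|\nabla f(\W_{t})\|^{2}$, and the quadratic term is handled by $\E_{t}\|\W_{t+1}-\W_{t}\|^{2}\le\eta_{t}^{2}\,\E_{t}\|\g_{t}\|^{2}=\eta_{t}^{2}\bigl(\|\nabla f(\W_{t})\|^{2}+\E_{t}\|\g_{t}-\nabla f(\W_{t})\|^{2}\bigr)\le\eta_{t}^{2}\bigl(\|\nabla f(\W_{t})\|^{2}+\sigma^{2}\bigr)$, using the contraction property of $P$ and the variance bound. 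Collecting terms and using $\eta_{t}\le1/L$ (so that $1-\tfrac{L\eta_{t}}{2}\ge\tfrac12$) should give the per-step recursion $\E_{t}[f(\W_{t+1})]\le f(\W_{t})-\tfrac{\eta_{t}}{2}\|\nabla f(\W_{t})\|^{2}+\tfrac{L\eta_{t}^{2}\sigma^{2}}{2}$.

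Finally I would take total expectations, telescope over $t=0,\dots,T-1$, drop the endpoint via $\E[f(\W_{T})]\ge f^{*}$, and rearrange. For the constant stepsize $\eta_{t}\equiv\eta\le1/L$ this reads $\tfrac{\eta}{2}\sum_{t<T}\E\|\nabla f(\W_{t})\|^{2}\le f(\W_{0})-f^{*}+\tfrac{L\eta^{2}\sigma^{2}}{2}\,T$; dividing by $\tfrac{\eta T}{2}$ gives $\min_{0\le t<T}\E\|\nabla f(\W_{t})\|^{2}\le\tfrac1T\sum_{t<T}\E\|\nabla f(\W_{t})\|^{2}\le\tfrac{2(f(\W_{0})-f^{*})}{\eta T}+L\eta\sigma^{2}$, and the general form \eqref{eq:sgd_ppgd_rate} follows from the same telescoping with the $\eta_{t}$-weighted average replacing the plain average. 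Minimising $\tfrac{2(f(\W_{0})-f^{*})}{\eta T}+L\eta\sigma^{2}$ over $\eta\in(0,1/L]$ is a one-variable calculus exercise, yielding $\eta=\min\{1/L,\sqrt{2(f(\W_{0})-f^{*})/(L\sigma^{2}T)}\}$ and the optimal $\mathcal{O}(1/\sqrt{T})$ rate.

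I expect the main obstacle to be the bookkeeping around two gradients: the \emph{true} gradient $\nabla f(\W_{t})$ enters through the descent lemma while the \emph{stochastic} gradient $\g_{t}$ (and its projection $P\g_{t}$) drives the update, so the cross term $\langle\nabla f(\W_{t})-\g_{t},\,\W_{t+1}-\W_{t}\rangle$ must be eliminated by taking $\E_{t}$ first---handling it instead by a Cauchy--Schwarz/Young split against $\|\W_{t+1}-\W_{t}\|$ produces an $\mathcal{O}(\eta_{t})$ noise term rather than $\mathcal{O}(\eta_{t}^{2}\sigma^{2})$ and destroys the rate. A secondary subtlety, inherited unchanged from Proposition~\ref{thm:convergence}, is that projecting onto $\mathcal{S}$ can only shorten $\W_{t+1}-\W_{t}$, so the quantity the recursion literally controls is the gradient-mapping norm $\eta_{t}^{-1}\|\W_{t+1}-\W_{t}\|$---equivalently the feasible component $\|(\Pi_{\mathcal{S}}\nabla_{(s)}f(\W_{t}),\,\nabla_{(u)}f(\W_{t}))\|$; identifying this with $\|\nabla f(\W_{t})\|$ as the first-order stationarity measure, which is exact precisely when the constrained block's gradient already lies in $\mathcal{S}$ (as it does for the projection matrices of our network, Appendix~\ref{sec:abl_gradients}), completes the argument as stated in \eqref{eq:sgd_ppgd_rate}.
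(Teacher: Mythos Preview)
Your proposal is correct and follows essentially the same route as the paper: nonexpansiveness of the projection to bound $\|\W_{t+1}-\W_t\|\le\eta_t\|\g_t\|$, the $L$-smoothness descent lemma, conditional expectation to turn the linear term into $-\eta_t\|\nabla f(\W_t)\|^2$ via unbiasedness, the variance bound on the quadratic term, the simplification $1-\tfrac{L\eta_t}{2}\ge\tfrac12$, and a telescoping sum. You are in fact more careful than the paper about the one genuine subtlety---that the linear term after projection is really $-\eta_t\|P\nabla f(\W_t)\|^2$ rather than $-\eta_t\|\nabla f(\W_t)\|^2$, and that identifying the two requires $\nabla_{(s)}f(\W_t)\in\mathcal S$---which the paper's proof silently assumes.
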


\begin{proof}
The argument follows the deterministic proof but replaces exact
gradients with the stochastic estimator and takes conditional
expectations at the appropriate points.

Because the Euclidean projection onto a closed subspace is
\(1\)-Lipschitz, for every \(t\)
\(
      \|\W_{t+1}^{(s)}-\W_t^{(s)}\|
      \le\|\widetilde{\W}_{t+1}^{(s)}-\W_t^{(s)}\|
      =\eta_t\|\g_t^{(s)}\|.
\)
The unconstrained coordinates remain unprojected, hence
\(
      \|\W_{t+1}^{(u)}-\W_t^{(u)}\|
      =\eta_t\|\g_t^{(u)}\|.
\)
Using the Pythagorean theorem for the two disjoint blocks,
\begin{equation}\label{eq:step_norm_bound}
      \|\W_{t+1}-\W_t\|
      \;\le\;\eta_t\|\g_t\|.
\end{equation}

\(L\)-smoothness of \(f\) implies
\[
      f(\W_{t+1})
      \le
      f(\W_t)
      +\nabla f(\W_t)^{\!\top}(\W_{t+1}-\W_t)
      +\frac L2\|\W_{t+1}-\W_t\|^{2}.
\]

Introduce the zero-mean “noise” term
\(\epsilon_t:=\g_t-\nabla f(\W_t)\).  Conditioned on \(\W_t\),
\(\E[\epsilon_t]=0\) and
\(\E[\|\epsilon_t\|^2]\le\sigma^2\).
Using \eqref{eq:step_norm_bound} and \(\g_t=\nabla f(\W_t)+\epsilon_t\),
\begin{align*}
   \E\!\bigl[f(\W_{t+1})\mid\W_t\bigr]
   &\le
   f(\W_t)
   -\eta_t\|\nabla f(\W_t)\|^2
   +\frac{L\eta_t^{2}}2\,\E[\|\g_t\|^{2}\mid\W_t]\\
   &\le
   f(\W_t)
   -\eta_t\Bigl(1-\tfrac{L\eta_t}2\Bigr)\|\nabla f(\W_t)\|^{2}
   +\frac{L\eta_t^{2}}2\,\sigma^{2},
\end{align*}
where the last line used
\(
      \E[\|\g_t\|^{2}]
      =\|\nabla f(\W_t)\|^{2}
       +\E[\|\epsilon_t\|^{2}]
      \le \|\nabla f(\W_t)\|^{2}+\sigma^{2}.
\)
Because \(\eta_t\le1/L\),
\(1-L\eta_t/2\ge1/2\).  Hence
\begin{equation}\label{eq:one_step_descent}
   \E[f(\W_{t+1})]
   \;\le\;
   \E[f(\W_t)]
   -\tfrac{\eta_t}{2}\,
      \E\bigl[\|\nabla f(\W_t)\|^{2}\bigr]
   +\tfrac{L\eta_t^{2}}{2}\sigma^{2}.
\end{equation}

Sum \eqref{eq:one_step_descent} over \(t=0,\dots,T-1\)
and rearrange:
\[
      \frac1T\sum_{t=0}^{T-1}
      \E\bigl[\|\nabla f(\W_t)\|^{2}\bigr]
      \;\le\;
      \frac{2\bigl(\E[f(\W_0)]-f^{*}\bigr)}{\sum_{t=0}^{T-1}\eta_t}
      +L\sigma^{2}\,\frac{\sum_{t=0}^{T-1}\eta_t^{2}}
                             {\sum_{t=0}^{T-1}\eta_t}.
\]
Noting that
\(
      \sum_{t}\eta_t = T\bar\eta
\)
and
\(
      \sum_{t}\eta_t^{2} \le T\bar\eta^{2}
\)
(by Jensen’s inequality) yields
\[
      \frac1T\sum_{t=0}^{T-1}\E[\|\nabla f(\W_t)\|^{2}]
      \;\le\;
      \frac{2\bigl(f(\W_0)-f^{*}\bigr)}{\bar\eta\,T}
      +L\,\bar\eta\,\sigma^{2},
\]
which is precisely \eqref{eq:sgd_ppgd_rate}.

\end{proof}

\section{Ablations}
\label{sec:app_ablations}

To validate the utility of the proposed method, we stress-test it with different network and training configurations. All the models are trained on C4. 

\begin{figure*}[h]
    \centering
    \begin{subfigure}[b]{0.3\textwidth}
        \includegraphics[width=\textwidth]{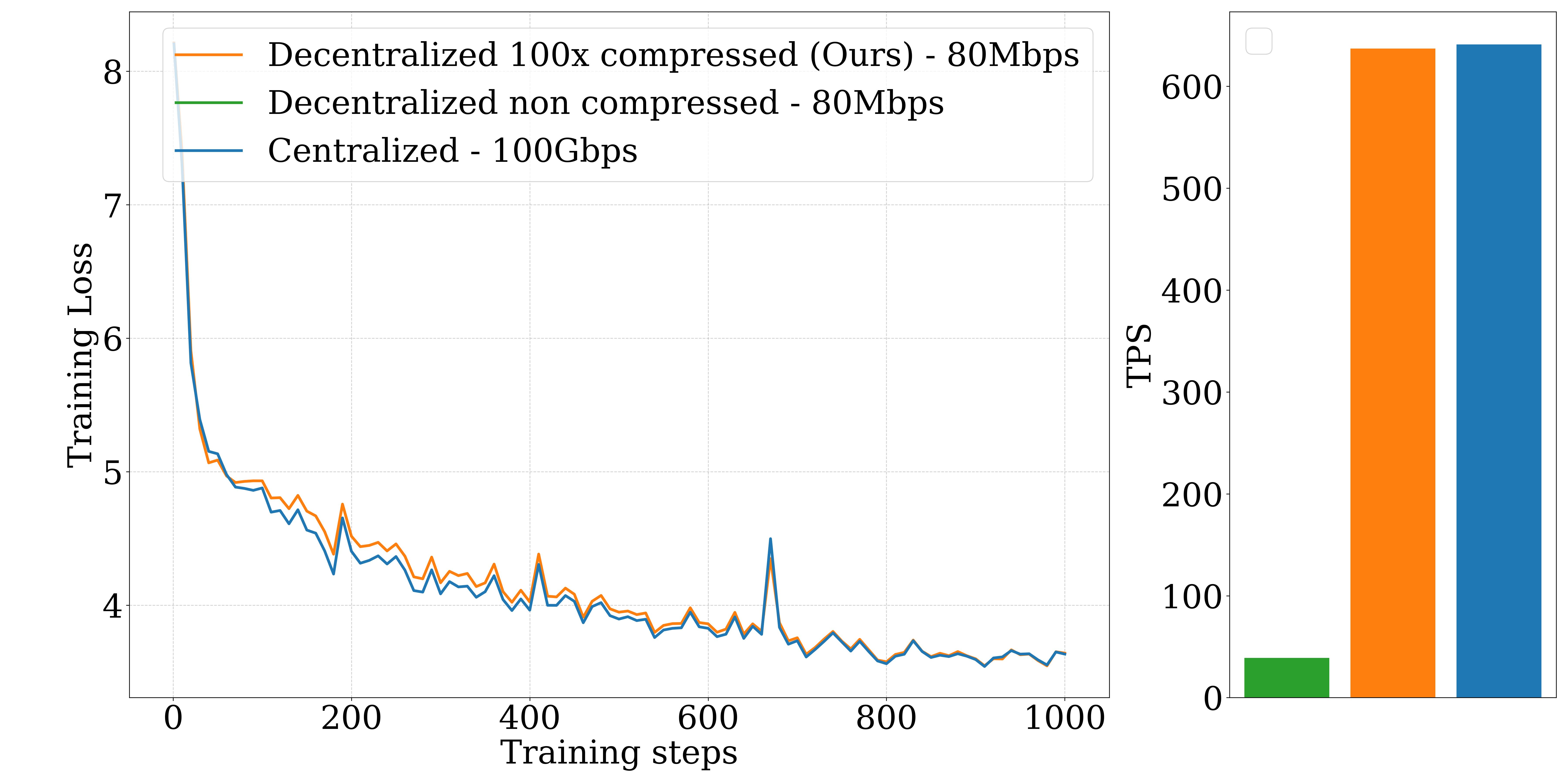}
        \caption{Batch size 8}
        \label{fig:sub1}
    \end{subfigure}
    \hfill
    \begin{subfigure}[b]{0.3\textwidth}
        \includegraphics[width=\textwidth]{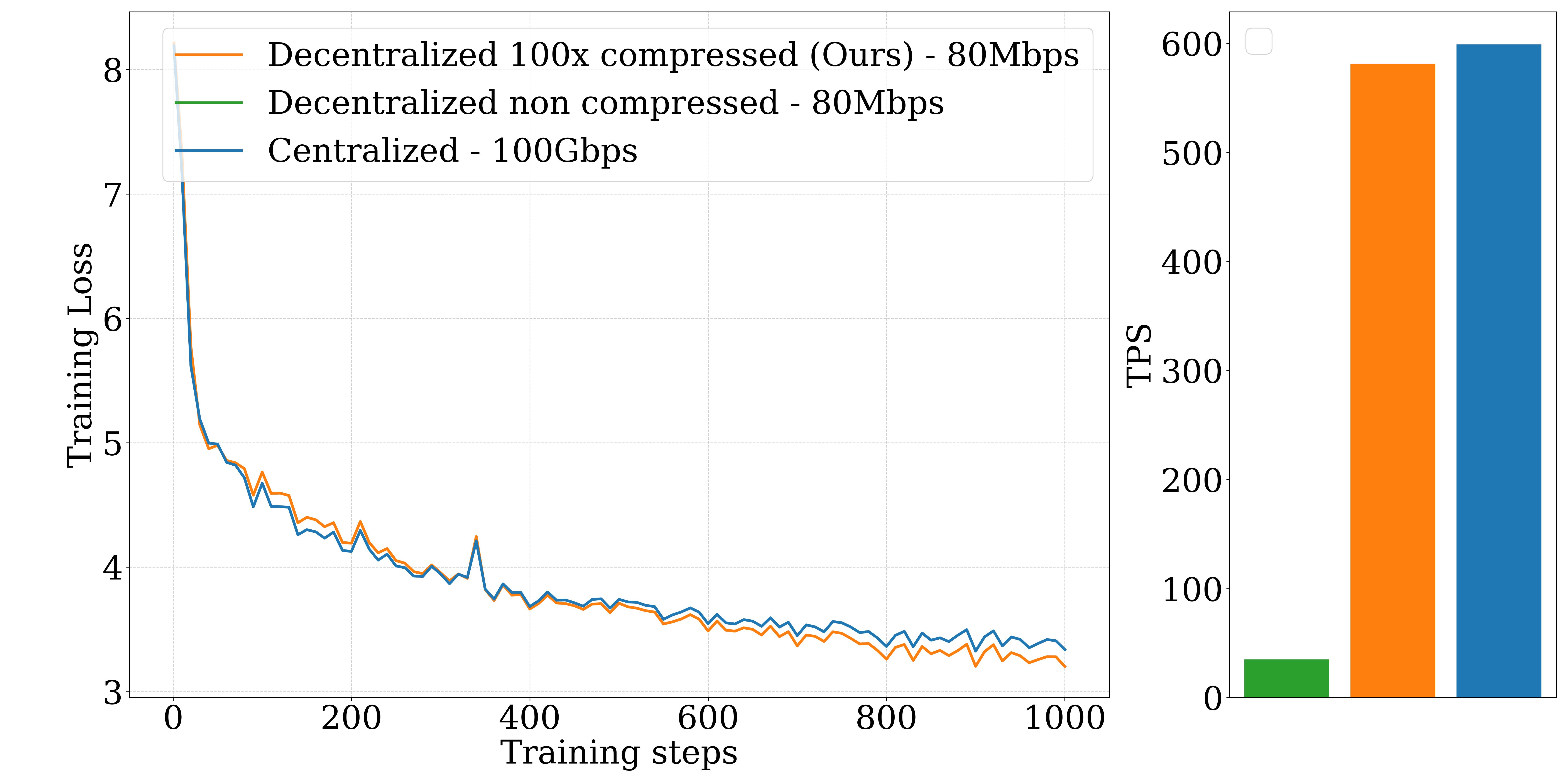}
        \caption{Batch size 16}
        \label{fig:sub2}
    \end{subfigure}
    \hfill
    \begin{subfigure}[b]{0.3\textwidth}
        \includegraphics[width=\textwidth]{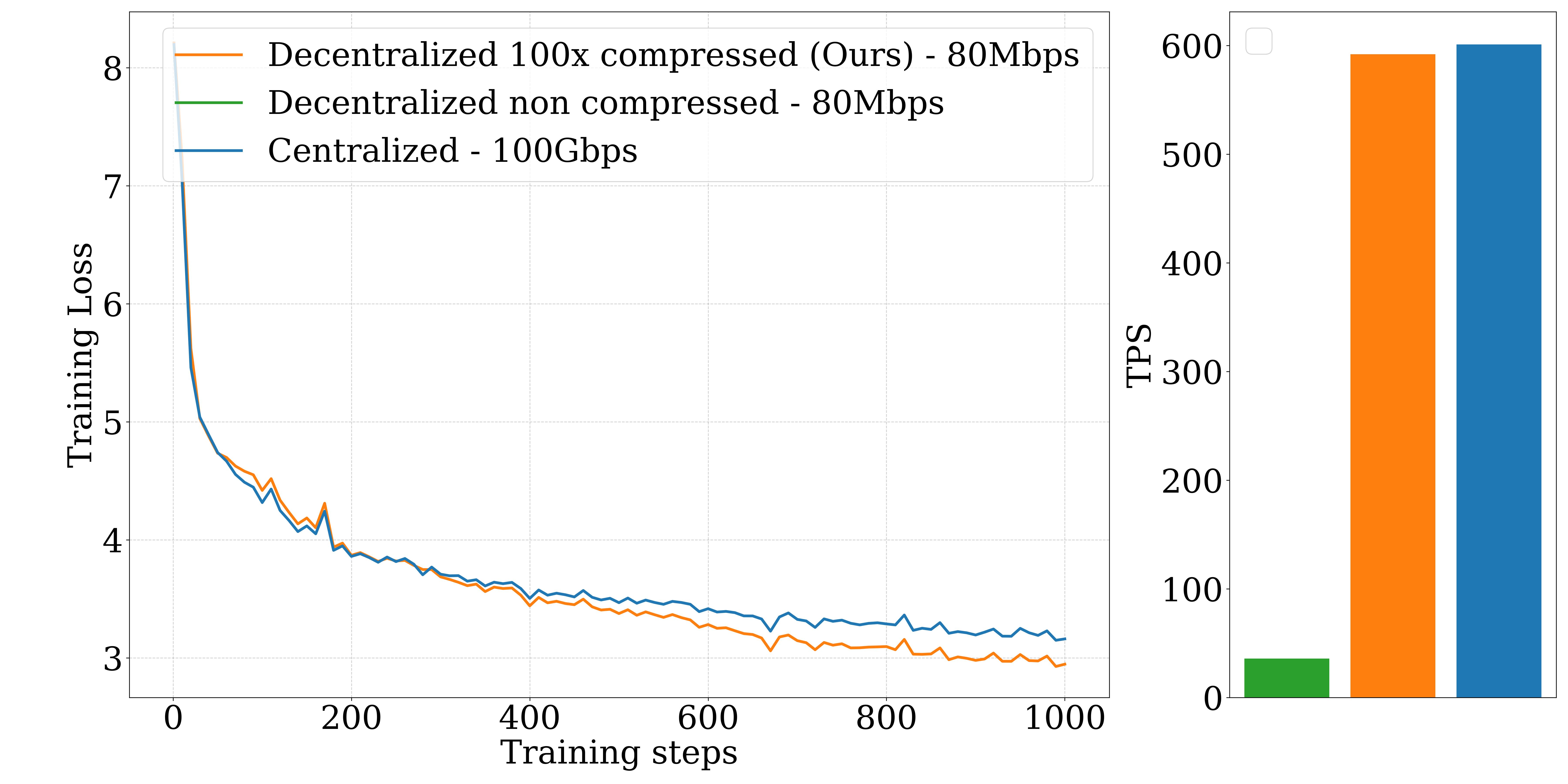}
        \caption{Batch size 32}
        \label{fig:sub3}
    \end{subfigure}
    \caption{\textbf{Convergence with different batch sizes.} Each block shows the training curves (with respect to training steps) and throughput for an 8-layer, 2B-parameter model on C4. Decentralized configurations use 80\,Mbps connections, while the centralized configuration uses datacenter-grade 100\,Gbps links. Even under a 80\,Mbps bandwidth budget, our compressed model achieves convergence comparable to---and sometimes exceeding---the centralized configuration across varying batch sizes. Note that as the batch size increases, the compressed model achieves increasingly better results compared to the centralized model. The iteration-wise dynamics of the uncompressed decentralized model match those of the centralized model; hence, we omit its curve for clarity. Despite severe bandwidth constraints, our compressed model attains throughput on par with the centralized setting.  In contrast, the uncompressed decentralized setup suffers significantly lower throughput due to communication bottlenecks. All models share the same network architecture. }
    \label{fig:compression_abl_train}
\end{figure*}

\begin{figure*}[h]
    \centering
    \begin{subfigure}[b]{0.3\textwidth}
        \includegraphics[width=\textwidth]{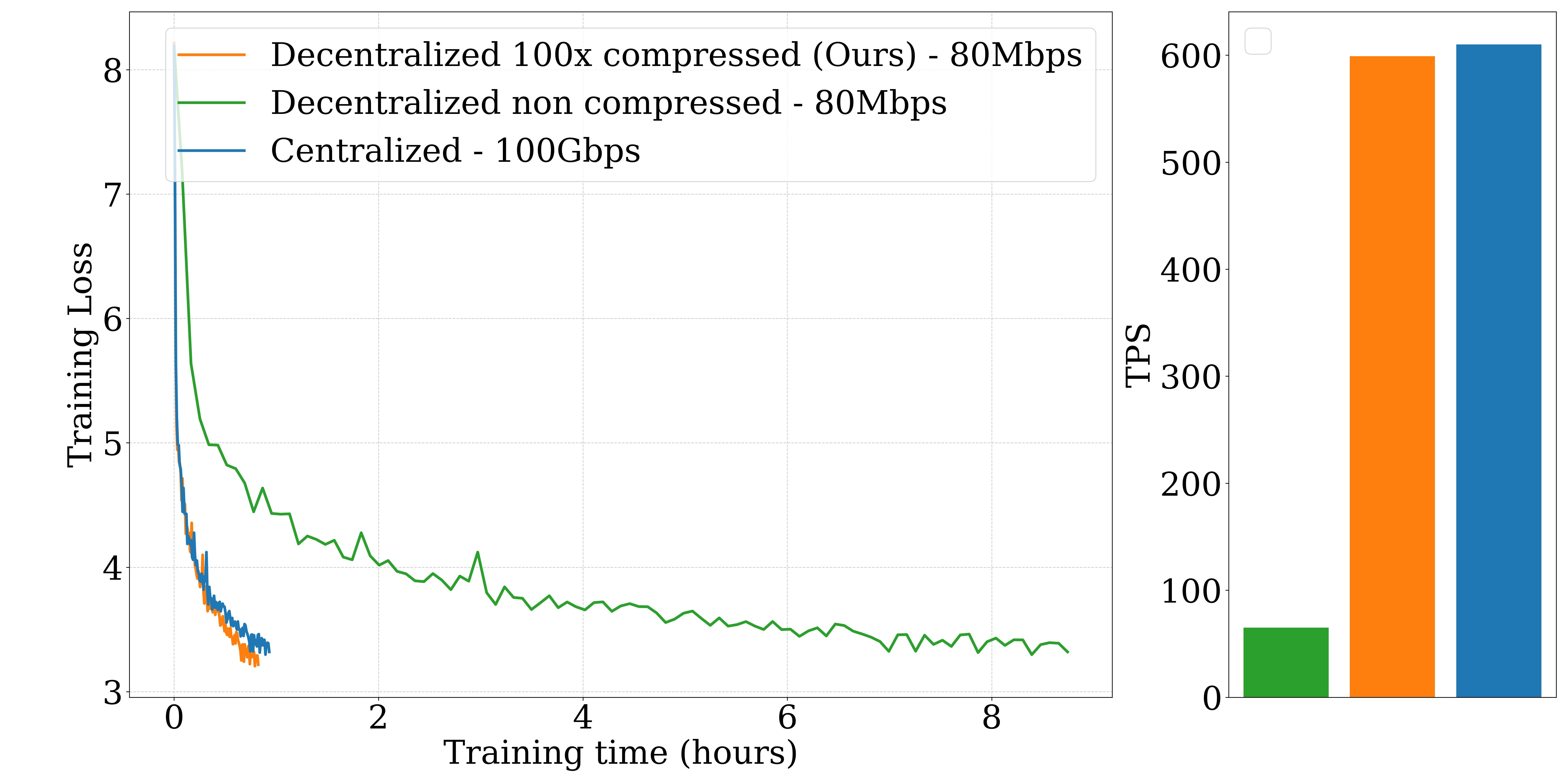}
        \caption{Batch size 8}
        \label{fig:sub1}
    \end{subfigure}
    \hfill
    \begin{subfigure}[b]{0.3\textwidth}
        \includegraphics[width=\textwidth]{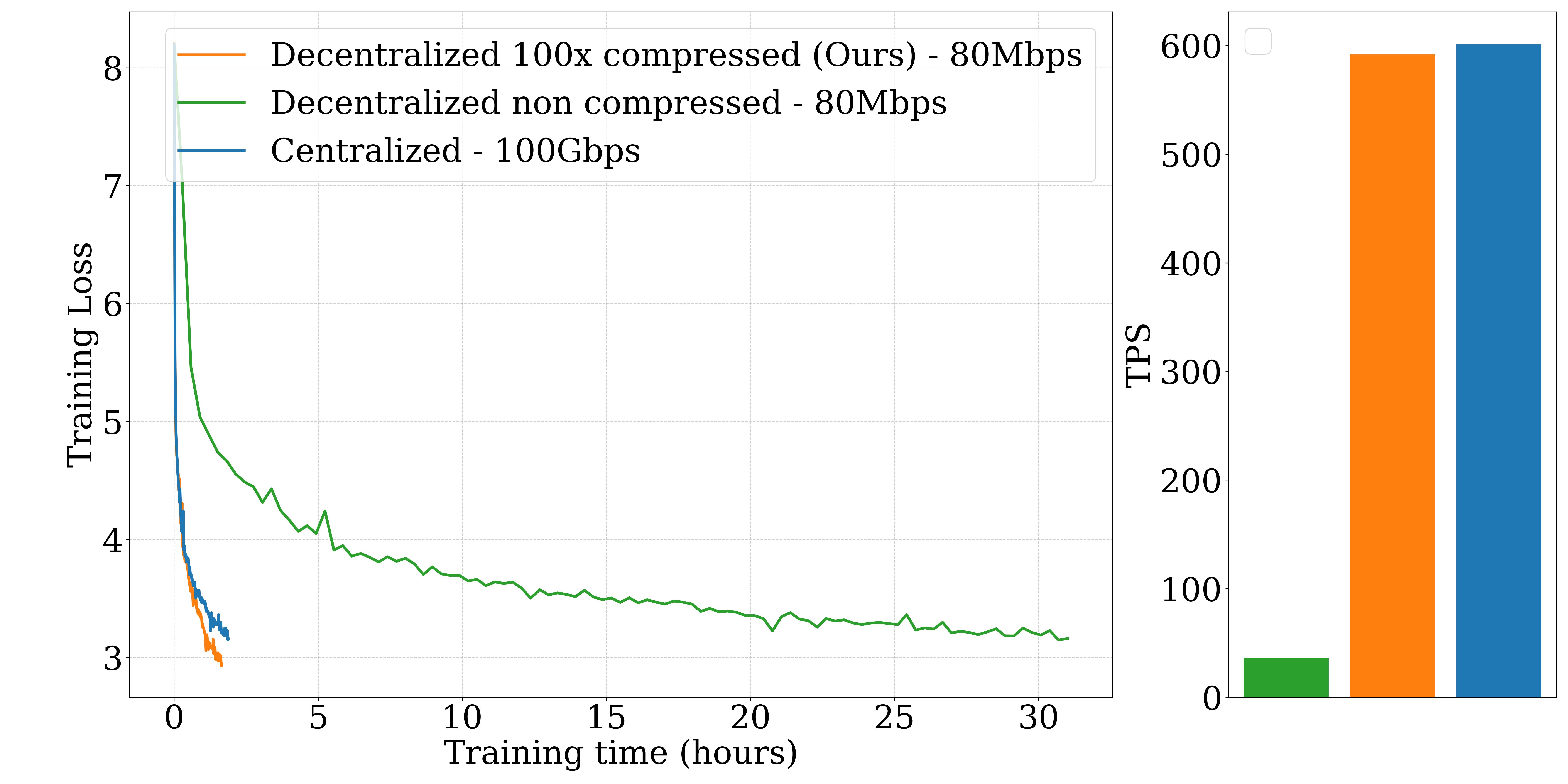}
        \caption{Batch size 16}
        \label{fig:sub2}
    \end{subfigure}
    \hfill
    \begin{subfigure}[b]{0.3\textwidth}
        \includegraphics[width=\textwidth]{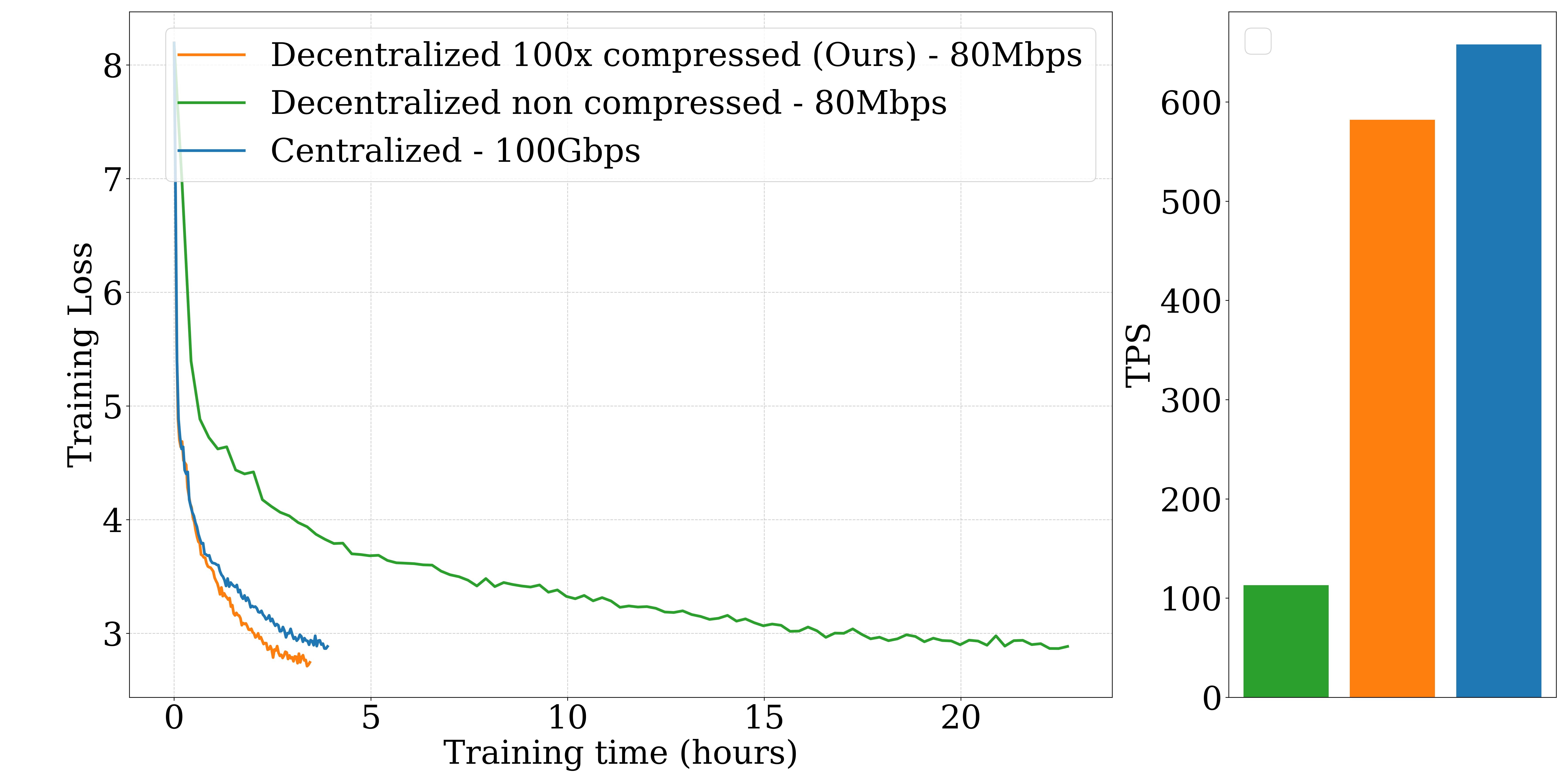}
        \caption{Batch size 32}
        \label{fig:sub3}
    \end{subfigure}
    \caption{\textbf{Convergence with different batch sizes.} Each block shows the training curves (with respect to wall-clock time) and throughput for an 8-layer, 2B-parameter model on C4. Decentralized configurations use 80\,Mbps connections, while the centralized configuration uses datacenter-grade 100\,Gbps links. Even under a 80\,Mbps bandwidth budget, our compressed model achieves convergence comparable to---and sometimes exceeding---the centralized configuration across varying batch sizes while the decentralized non compressed model demonstrates extremely slow convergence.  }
    \label{fig:compression_abl_train}
\end{figure*}

\begin{figure*}[h]
    \centering
    \begin{subfigure}[b]{0.3\textwidth}
        \includegraphics[width=\textwidth]{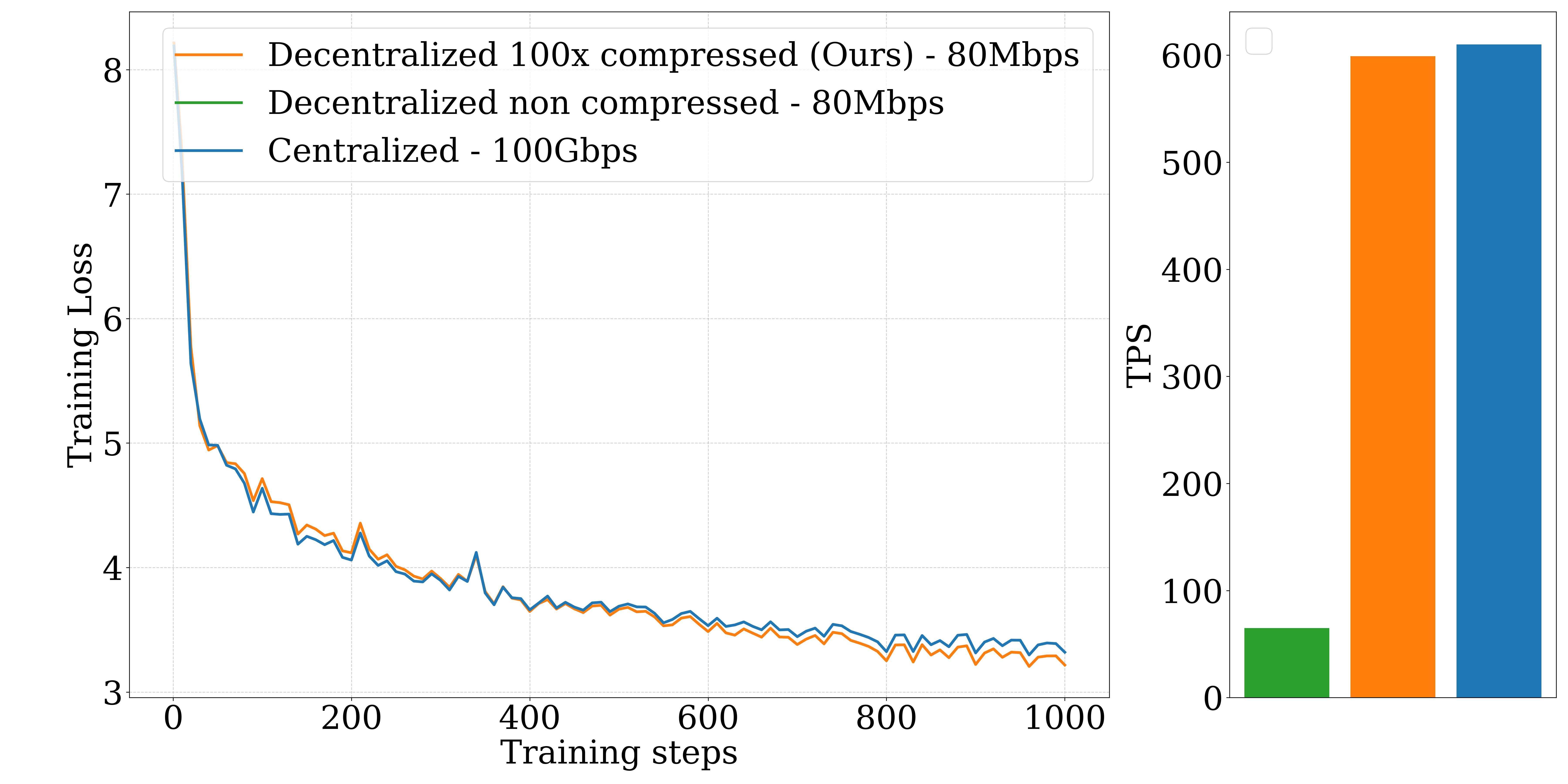}
        \caption{Context length 512}
        \label{fig:sub1}
    \end{subfigure}
    \hfill
    \begin{subfigure}[b]{0.3\textwidth}
        \includegraphics[width=\textwidth]{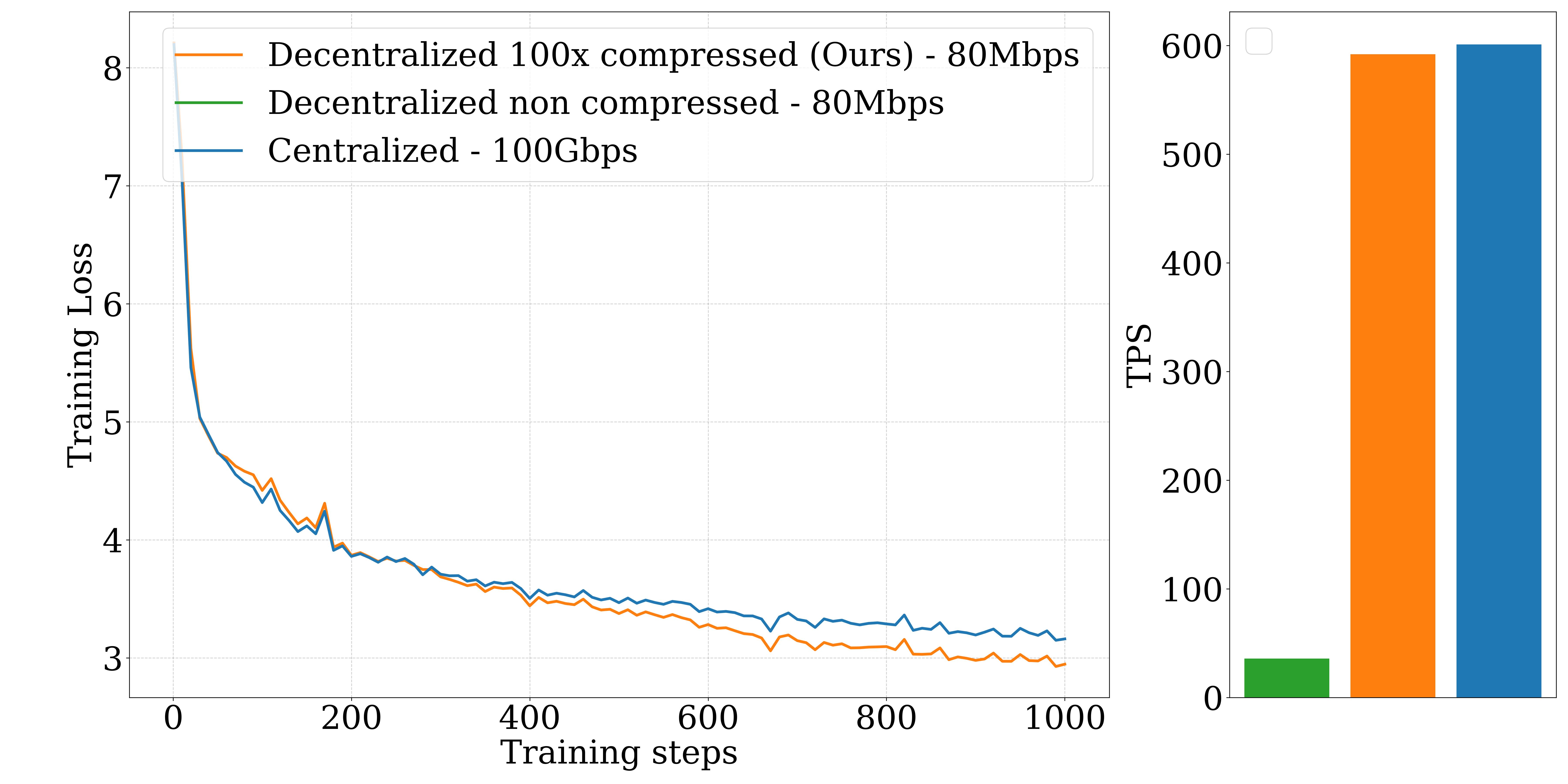}
        \caption{Context length 1024}
        \label{fig:sub2}
    \end{subfigure}
    \hfill
    \begin{subfigure}[b]{0.3\textwidth}
        \includegraphics[width=\textwidth]{figures/abl_seq/batch32seq1024_layers8.jpg}
        \caption{Context length 2048}
        \label{fig:sub3}
    \end{subfigure}
    \caption{\textbf{Convergence with different context lengths.} Each block shows the training curves (with respect to training steps) and throughput for an 8-layer, 2B-parameter model on C4. Decentralized configurations use 80\,Mbps connections, while the centralized configuration uses datacenter-grade 100\,Gbps links. Even under a 80\,Mbps bandwidth budget, our compressed model achieves convergence comparable to---and sometimes exceeding---the centralized configuration across varying context lengths.  Note that as the context length increases, the compressed model achieves increasingly better results compared to the centralized model. The iteration-wise dynamics of the uncompressed decentralized model match those of the centralized model; hence, we omit its curve for clarity. Despite severe bandwidth constraints, our compressed model attains throughput on par with the centralized setting. In contrast, the uncompressed decentralized setup suffers significantly lower throughput due to communication bottlenecks. }
    \label{fig:compression_abl_train}
\end{figure*}

\begin{figure*}[ht]
    \centering
    \begin{subfigure}[b]{0.3\textwidth}
        \includegraphics[width=\textwidth]{figures/abl_seq/batch32_seq512_layers8_time.jpg}
        \caption{Context length 512}
        \label{fig:sub1}
    \end{subfigure}
    \hfill
    \begin{subfigure}[b]{0.3\textwidth}
        \includegraphics[width=\textwidth]{figures/abl_seq/batch32seq1024_layers8_time.jpg}
        \caption{Context length 1024}
        \label{fig:sub2}
    \end{subfigure}
    \hfill
    \begin{subfigure}[b]{0.3\textwidth}
        \includegraphics[width=\textwidth]{figures/abl_seq/batch32seq1024_layers8_time.jpg}
        \caption{Context length 2048}
        \label{fig:sub3}
    \end{subfigure}
    \caption{\textbf{Convergence with different context lengths.} Each block shows the training curves (with respect to wall-clock time) and throughput for an 8-layer, 2B-parameter model on C4. Decentralized configurations use 80\,Mbps connections, while the centralized configuration uses datacenter-grade 100\,Gbps links. Even under a 80\,Mbps bandwidth budget, our compressed model achieves convergence comparable to---and sometimes exceeding---the centralized configuration across varying context lengths while the decentralized non compressed model demonstrates extremely slow convergence.  }
    \label{fig:compression_abl_train}
\end{figure*}

\begin{figure*}[ht]
    \centering
    \begin{subfigure}[b]{0.45\textwidth}
        \includegraphics[width=\textwidth]{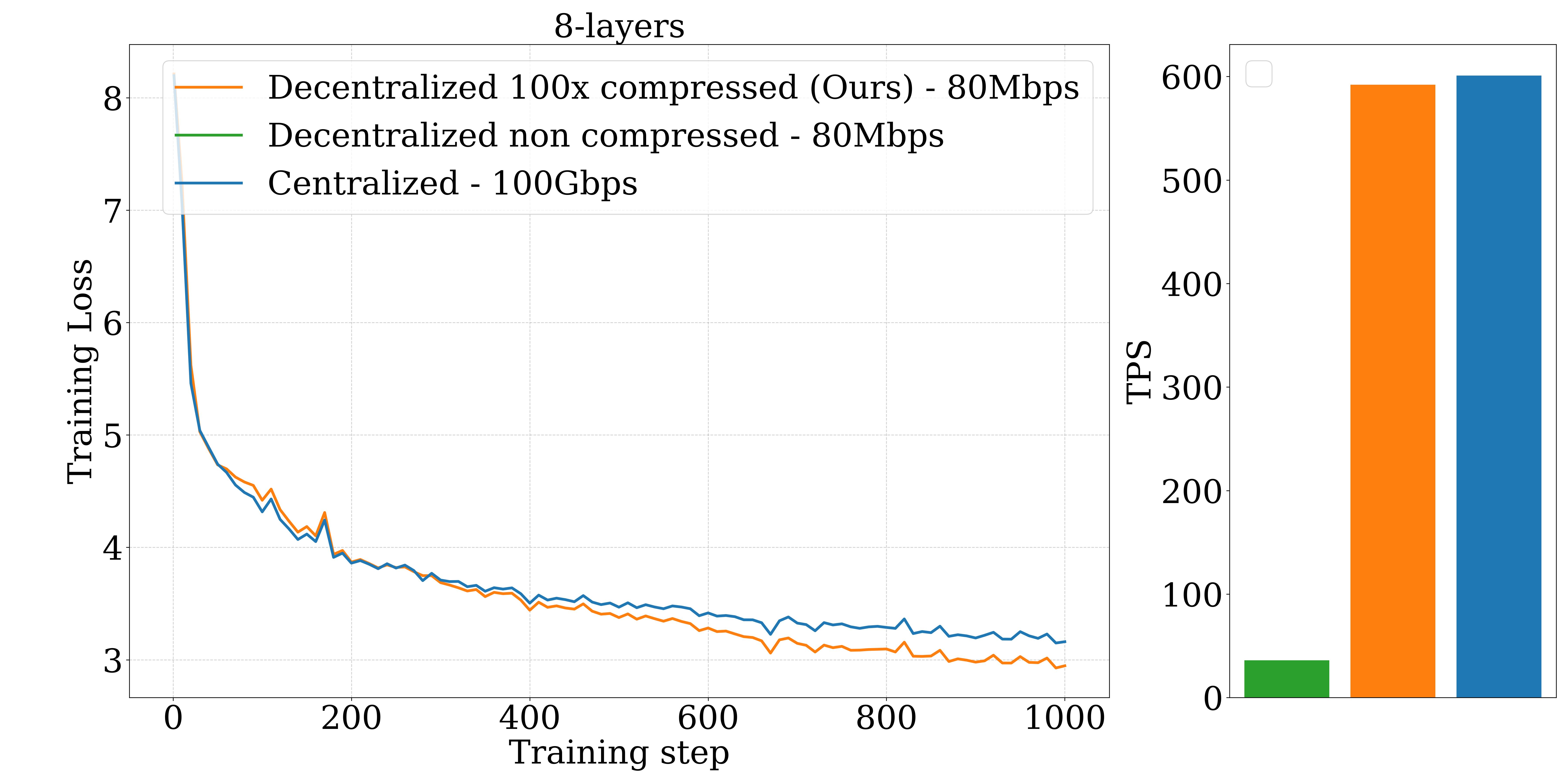}
        \caption{8 layers}
        \label{fig:sub2}
    \end{subfigure}
     \hfill
    \begin{subfigure}[b]{0.45\textwidth}
        \includegraphics[width=\textwidth]{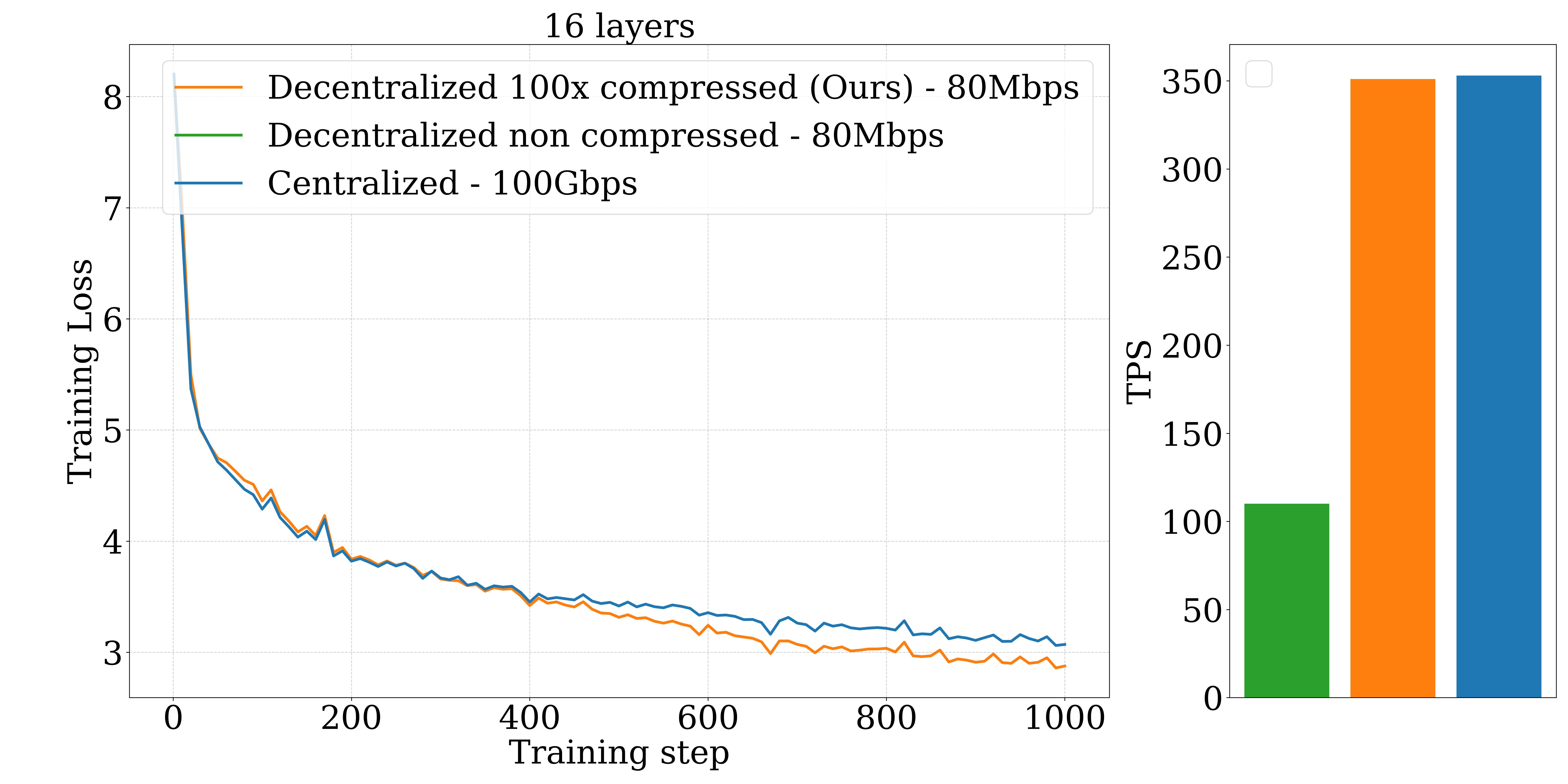}
        \caption{16 layers (2 layers per GPU)}
        \label{fig:sub1}
    \end{subfigure}   
    \caption{\textbf{Convergence with increasing number of layers.} Note that as the number of layer increase, our model consistently matches (even slightly exceeds) the centralized model. This is in stark contrast to lossy compression schemes, where the model convergence severely degrades as the model depth increases \cite{bian2024does}.}
    \label{fig:compression_abl_train}
\end{figure*}

\begin{figure*}[ht]
    \centering
    \begin{subfigure}[b]{0.45\textwidth}
        \includegraphics[width=\textwidth]{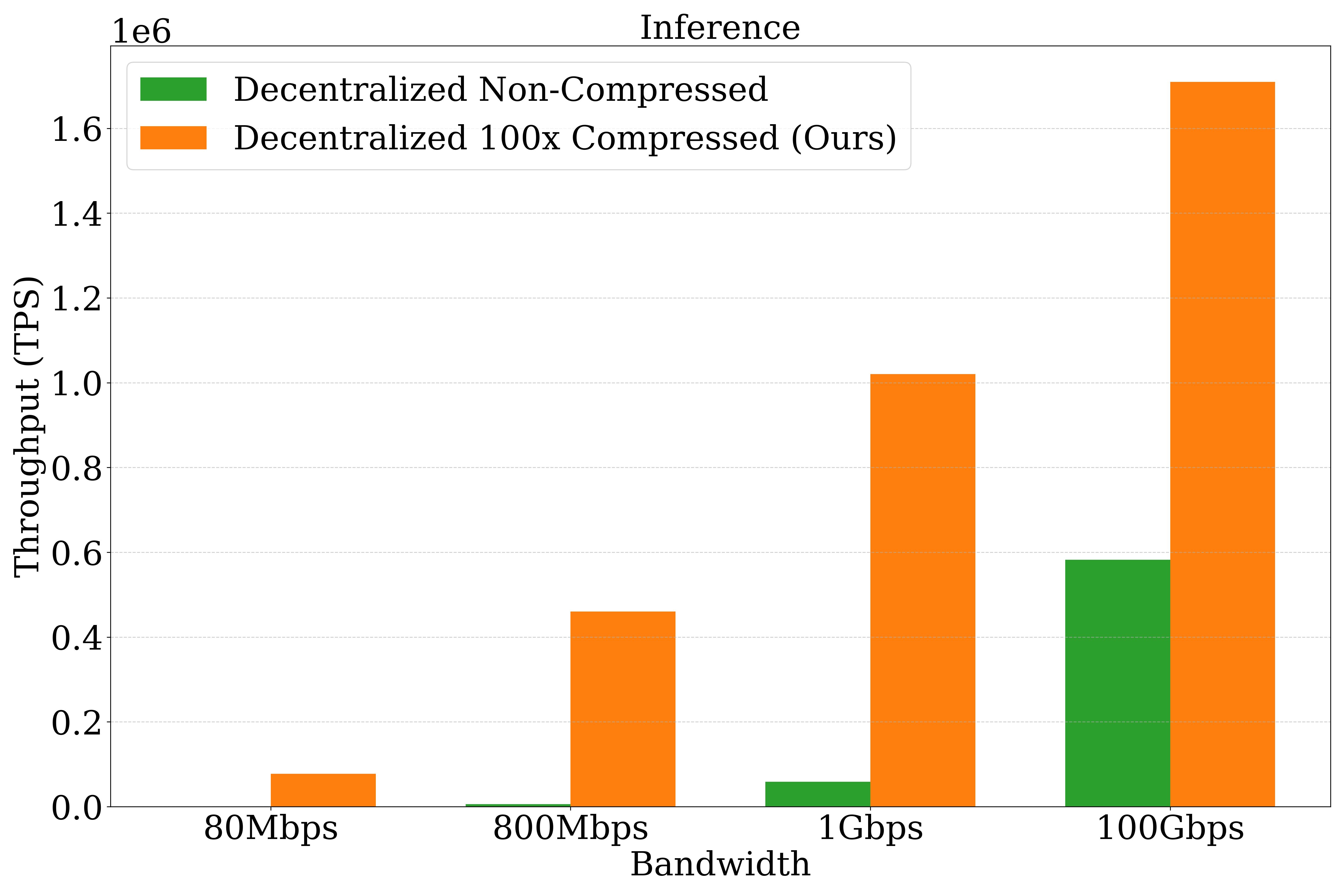}
        \caption{Inference}
        \label{fig:sub1}
    \end{subfigure}
    \hfill
    \begin{subfigure}[b]{0.45\textwidth}
        \includegraphics[width=\textwidth]{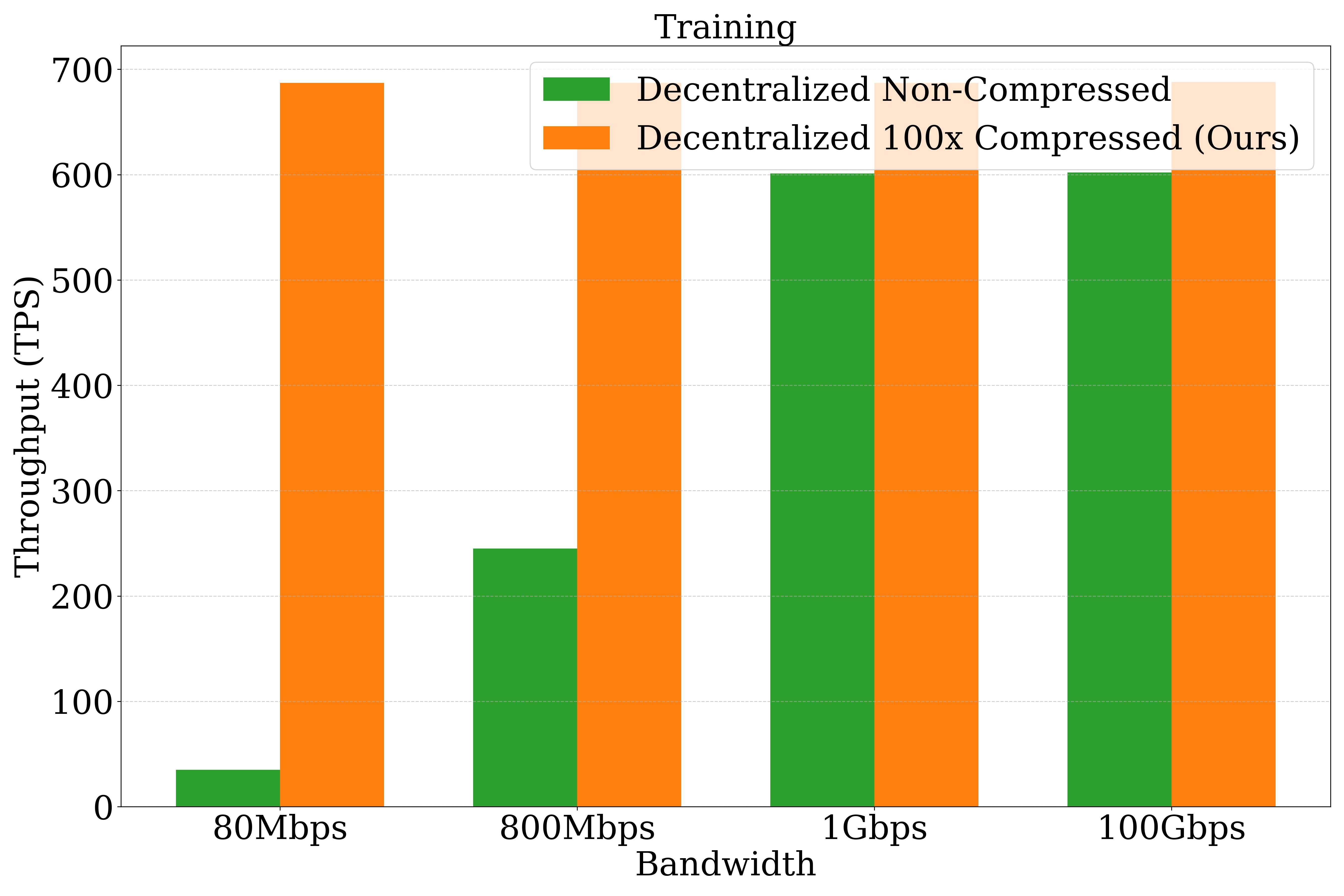}
        \caption{Training}
        \label{fig:sub2}
    \end{subfigure}
    \caption{\textbf{Throughput across bandwidth constraints.} We limit network bandwidth between GPUs and measure throughput during inference and training. The compressed model consistently achieves significantly higher throughput than the non-compressed model. Notably, even at 100Gbps, compression improves inference throughput by 3×, demonstrating benefits for even  centralized setups.
  }
    \label{fig:compression_abl_train}
\end{figure*}

\begin{figure}
    \centering
    \includegraphics[width=0.5\linewidth]{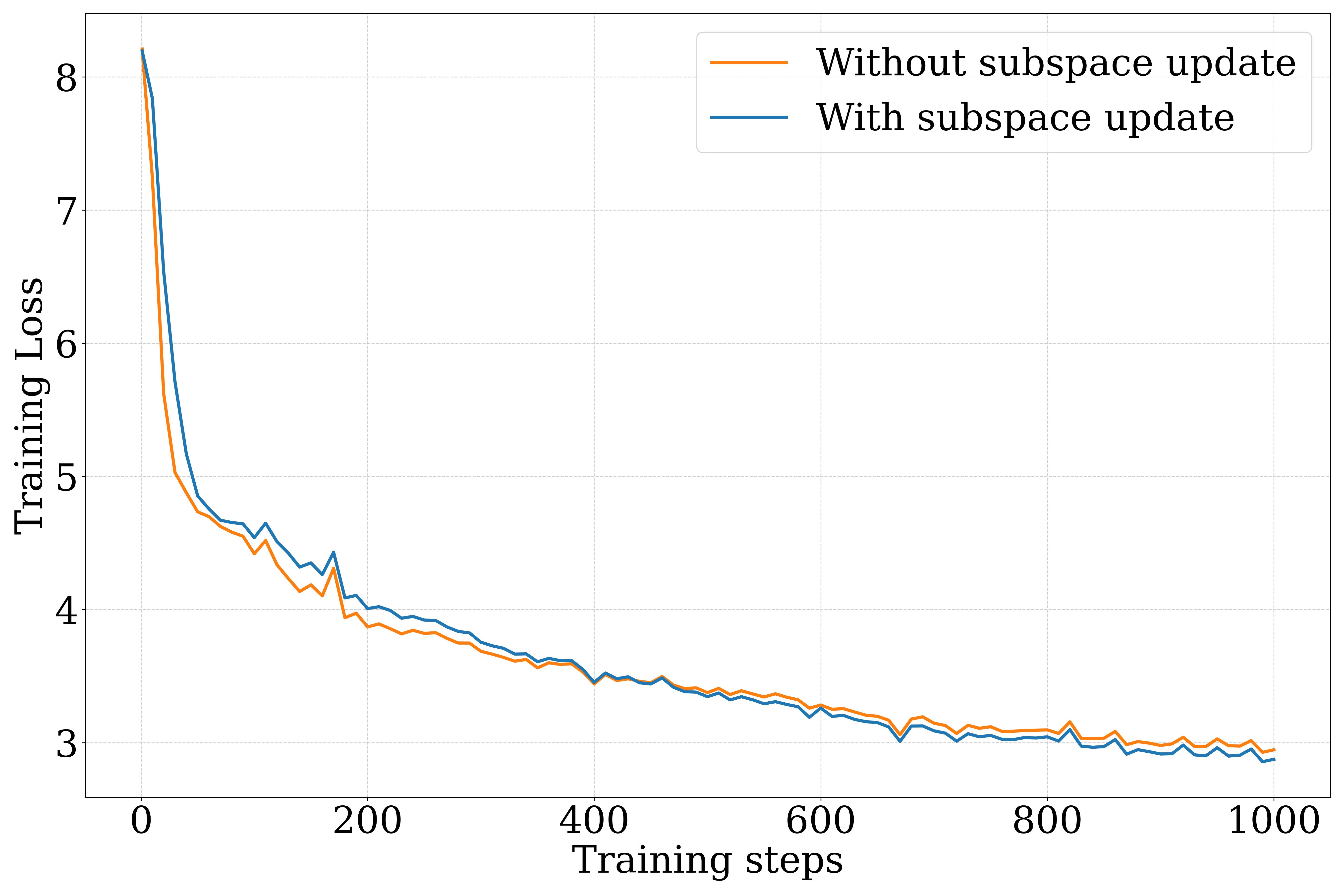}
    \caption{\textbf{Effect of the (Grassman) subspace updates}. An 8-layer model is trained on C4 for this experiment. Since this performance gap seems to keep increasing towards the end of training, we emphasize the importance of infrequent subspace updates.}
    \label{fig:enter-label}
\end{figure}

\begin{figure}
    \centering
    \includegraphics[width=0.5\linewidth]{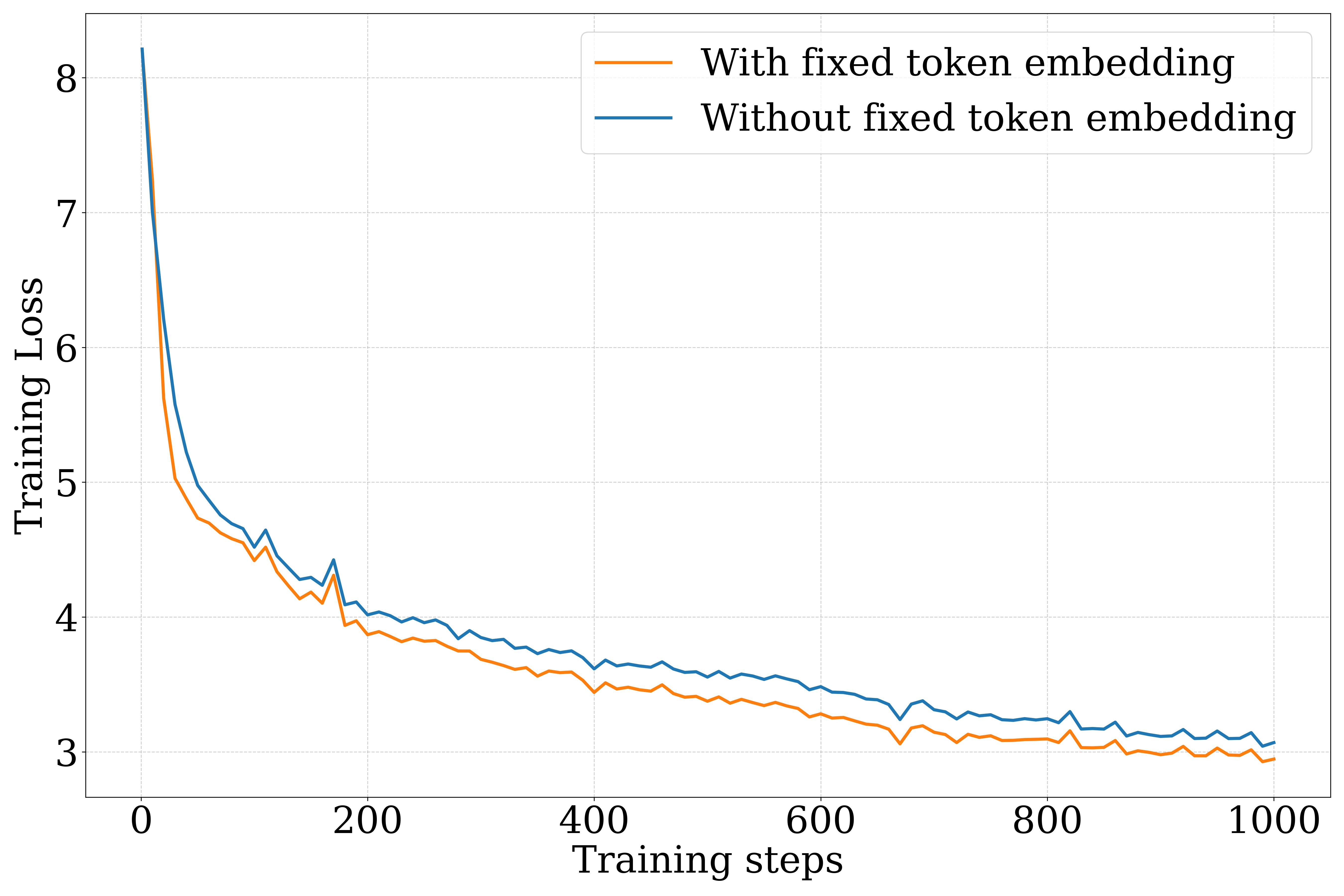}
    \caption{\textbf{Effect of fixed token embedding.} We decompose the token embeddings to a fixed high rank embedding and a dynamic low rank embedding. As shown in the figure, we observe inferior convergence when there is no such decomposition.  An 8-layer model is trained on C4 for this experiment.}
    \label{fig:enter-label}
\end{figure}

\section{Analysis of pretrained checkpoints}
\label{app:checkpoints}

We investigate the stable ranks of the output projection layers of the official checkpoints of frontier open-weight LLMs (LlaMA, Qwen, Olmo, Phi). The observations are reported in Fig.~\ref{fig:low-rank-analysis}. As shown, the weights demonstrate extremely low ranks across all the layers and models.

\begin{figure}[th]
  \centering

  \begin{minipage}[b]{0.3\linewidth}
    \centering
    \includegraphics[width=\linewidth]{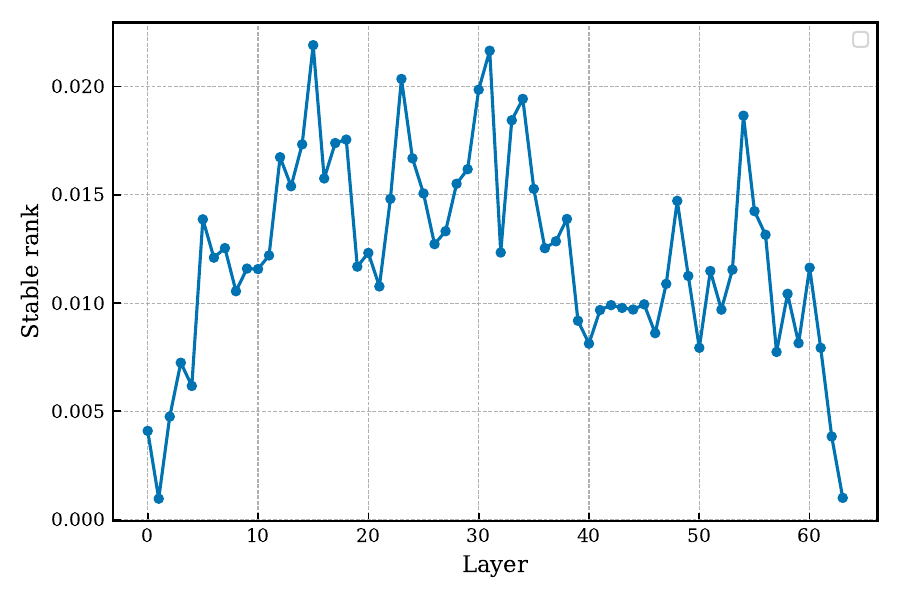}
    \caption*{\textsc{DeepSeek-R1-670B}}
  \end{minipage}
  \hfill
  \begin{minipage}[b]{0.3\linewidth}
    \centering
    \includegraphics[width=\linewidth]{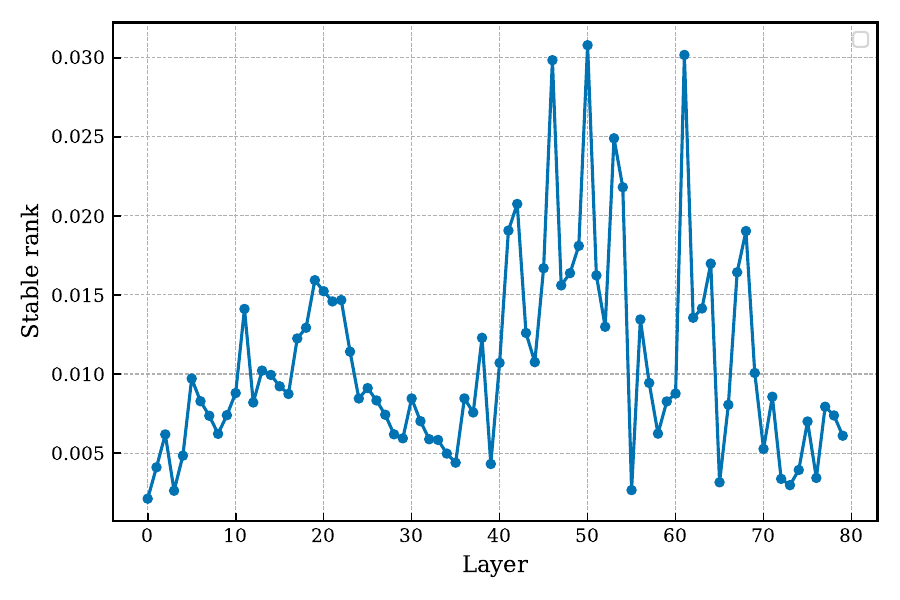}
    \caption*{\textsc{LlaMa3-70B}}
  \end{minipage}
  \hfill
  \begin{minipage}[b]{0.3\linewidth}
    \centering
    \includegraphics[width=\linewidth]{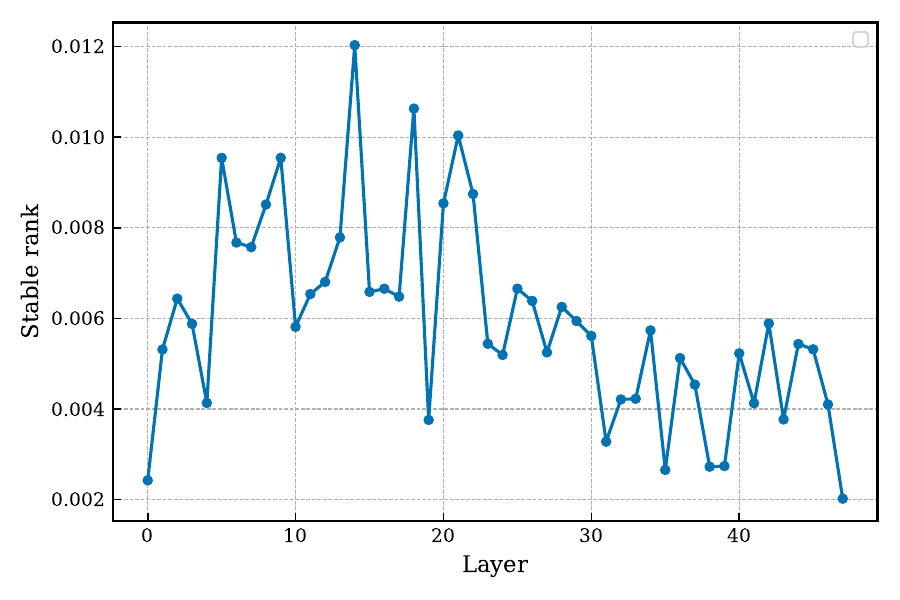}
    \caption*{\textsc{LlaMa4-Scout-17B}}
  \end{minipage}

  \vspace{0.5em}  

  \begin{minipage}[b]{0.3\linewidth}
    \centering
    \includegraphics[width=\linewidth]{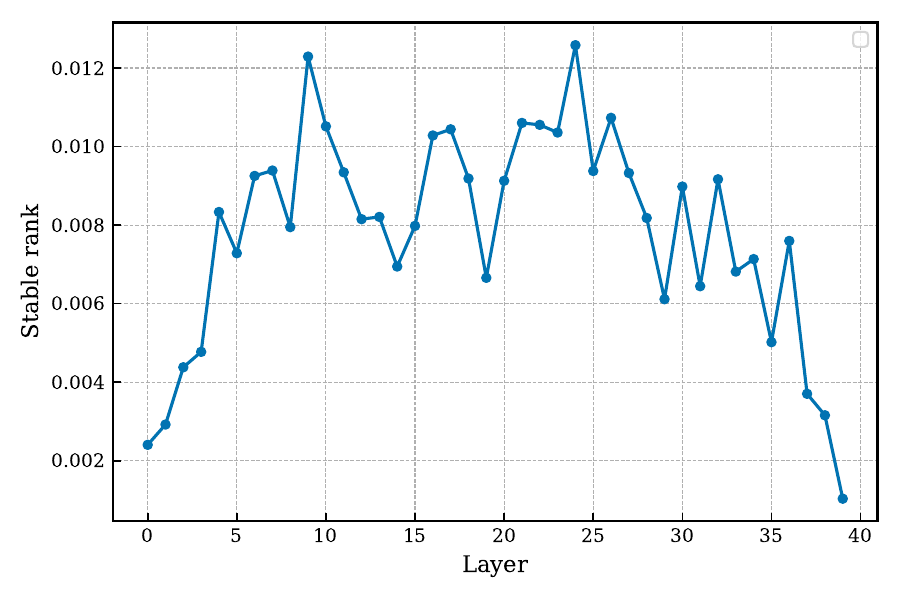}
    \caption*{\textsc{Phi-4-14B}}
  \end{minipage}
  \hfill
  \begin{minipage}[b]{0.3\linewidth}
    \centering
    \includegraphics[width=\linewidth]{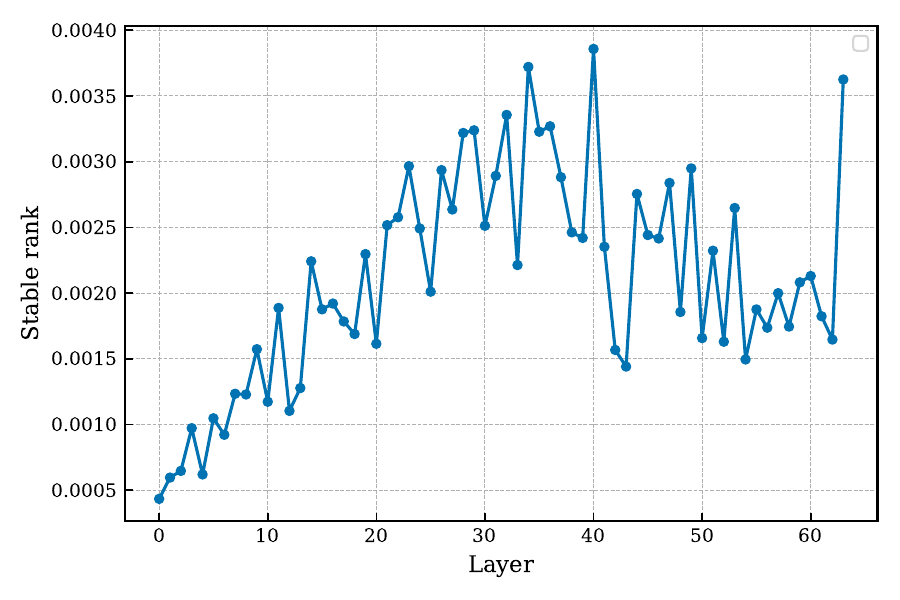}
    \caption*{\textsc{Olmo-2-32B}}
  \end{minipage}
  \hfill
  \begin{minipage}[b]{0.3\linewidth}
    \centering
    \includegraphics[width=\linewidth]{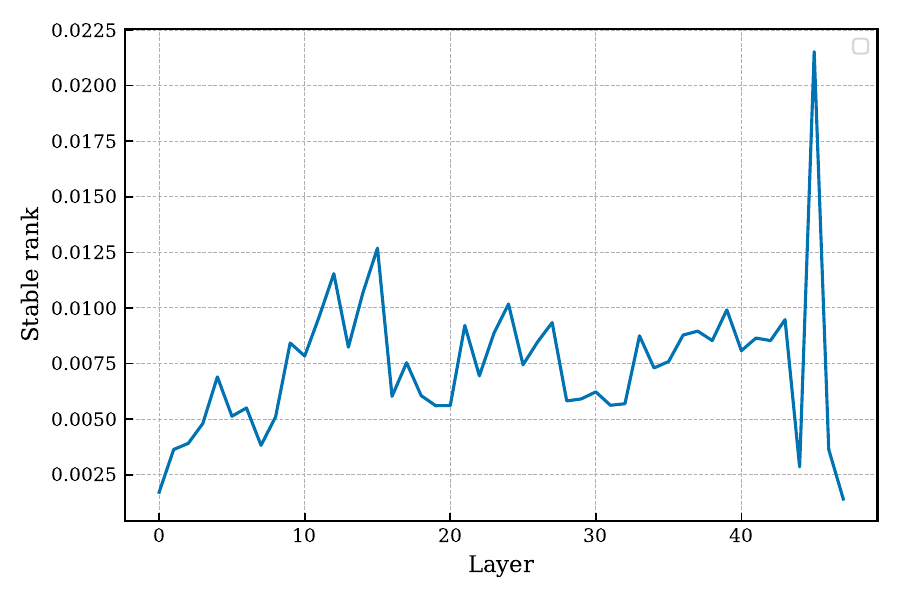}
    \caption*{\textsc{Qwen3-30B}}
  \end{minipage}

  \caption{\textbf{Stable ranks of output projection matrices (normalized by the maximum possible rank) across different frontier models.} Statistics are computed on official fully pre-trained checkpoints. All the models  demonstrate extremely low ranks, solidifying our theoretical arguments. }
  \label{fig:low-rank-analysis}
\end{figure}

\end{document}